\newtheorem{theorem}{Theorem}[section]
\newtheorem{lemma}[theorem]{Lemma}
\newtheorem{definition}[theorem]{Definition}
\newtheorem{corollary}[theorem]{Corollary}
\DeclareMathOperator*{\argmin}{arg\,min}
\newcommand\ev{\mathbf{e}}
\newcommand\pv{\mathbf{p}}
\newcommand\qv{\mathbf{q}}
\newcommand\uv{\mathbf{u}}
\newcommand\vv{\mathbf{v}}
\newcommand\xv{\mathbf{x}}
\newcommand\yv{\mathbf{y}}
\newcommand\Hv{\mathbf{H}}
\newcommand\Iv{\mathbf{I}}
\newcommand\Pv{\mathbf{P}}
\newcommand\Rv{\mathbf{R}}
\newcommand\Tv{\mathbf{T}}
\newcommand\Uv{\mathbf{U}}
\newcommand\Vv{\mathbf{V}}
\newcommand\Xv{\mathbf{X}}
\newcommand\Yv{\mathbf{Y}}
\newcommand\Zv{\mathbf{Z}}
\begin{document}

%

%
\runningauthor{Gabriel Moreira, Manuel Marques, João Paulo Costeira and Alexander Hauptmann}

\twocolumn[

\aistatstitle{Learning Visual-Semantic Subspace Representations}

\aistatsauthor{Gabriel Moreira \And Manuel Marques}
\aistatsaddress{Instituto Superior Técnico \\ Carnegie Mellon University \And  Institute for Systems and Robotics \\ Instituto Superior Técnico} 

\aistatsauthor{João Paulo Costeira \And Alexander Hauptmann}
\aistatsaddress{Institute for Systems and Robotics \\ Instituto Superior Técnico \And  Carnegie Mellon University} ]

\begin{abstract}
Learning image representations that capture rich semantic relationships remains a significant challenge. Existing approaches are either contrastive, lacking robust theoretical guarantees, or struggle to effectively represent the partial orders inherent to structured visual-semantic data. In this paper, we introduce a nuclear norm-based loss function, grounded in the same information theoretic principles that have proved effective in self-supervised learning. We present a theoretical characterization of this loss, demonstrating that, in addition to promoting class orthogonality, it encodes the spectral geometry of the data within a subspace lattice. This geometric representation allows us to associate logical propositions with subspaces, ensuring that our learned representations adhere to a predefined symbolic structure.
\end{abstract}

\section{INTRODUCTION}
In this paper, we present a methodology for learning image representations that adhere to a predefined symbolic structure. Consider a representation space $\mathcal{H}$ and an image encoded therein. Answering any Boolean proposition conditioned on it amounts to specifying a binary value and corresponds thus, to a subset of an observation space $\mathcal{Y}$. Intuitively, we expect the representation space and the observation space to be correlated -- if proposition $p$ entails $q$, the respective observations of $p$ and $q$ in $\mathcal{Y}$ preserve the set-theoretic inclusion, and so should their corresponding representations in $\mathcal{H}$.  Notably, works on semantic linguistics have even conceptualized representations as regions, with the set-theoretic inclusion corresponding to semantic entailment \cite{geffet2005distributional,gardenfors2004conceptual}.

Despite their ubiquity, Euclidean embeddings with the usual inner product do not represent partial orders and symbolic operations in the most natural manner. While the commonly used cosine similarity encodes a notion of \textit{closeness}, useful for simple retrieval tasks, its symmetry imposes a latent space geometry that is \textit{incoherent} with that of the semantics \cite{patel2022modeling}. Indeed, state-of-the-art vision-language models trained via contrastive objectives exhibit poor performance at tasks requiring compositionality \cite{yuksekgonul2022and} and logical reasoning, such as understanding negations \cite{singh2024learn,quantmeyer2024and}. This disconnection limits our ability to harness structural constraints, compromising reasoning capabilities and hindering interpretability \cite{steck2024cosine}.

Previous efforts to incorporate symbolic structure into representation spaces, particularly focusing on transitive relations and the asymmetries present in ontologies and knowledge graphs, have proposed \textit{ad-hoc} contrastive losses or geometry-inspired approaches, such as measure-based embeddings \cite{ren2020beta,vilnis2018probabilistic,li2018smoothing}, quasimetrics \cite{memoli2018quasimetric} and hyperbolic representations \cite{moreira2024hyperbolic,Chami2020FromClustering,Chamberlain2017NeuralSpace,Nickel2017PoincareRepresentations,Atigh2021HyperbolicPrototypes,Ganea2018HyperbolicEmbeddings}. Although these approaches are effective, many of them lack robust theoretical guarantees.

In contrast, we demonstrate that a rich and consistent geometrical structure arises naturally from minimizing a nuclear norm-based loss, grounded in the same information theoretic principles that underlie recent advances in self-supervised learning \cite{bardesvicreg,lecomte2023information,yu2020learning,yerxa2024learning}. Our work contributes to the growing interest in utilizing rank-based non-contrastive losses for representation learning. While prior results focused on enforcing class orthogonality, we identify the minimizer of the proposed loss as a spectral embedding of the data, in the form of a Boolean subspace lattice \cite{mittelstaedt1972interpretation, birkhoff1975logic}. 

By encoding semantic partial orders, our representations enable a probabilistic formulation of propositional queries as subspaces \cite{vanRijsbergen_2004}. These, in turn, facilitate the representation of logical operations such as conjunction, disjunction and negation in terms of subspace calculus. Such a capability is crucial for complex symbolic reasoning and for faithfully representing relations that extend beyond mere similarity, such as hypernymy, inclusion, and entailment, which form the cornerstone of semantic understanding.

In summary, our contributions are three-fold:
\begin{enumerate}
    \item A new loss function that acts as a surrogate for the mutual information between the semantic, or symbolic, distribution and the embedding distribution (Section \ref{sec:motivation}).
    \item We prove that the minimization of the proposed loss guarantees that the visual representations adhere to the spectral geometry of the underlying semantics (Section \ref{sec:theoretical_results}).
    \item We show that the learned representations form a subspace Boolean lattice, where propositions are encoded as projection operators (Section \ref{sec:boolean_lattice}).
\end{enumerate}

Our general learning framework yields representations suitable for single-label and multi-label classification, as well as for retrieval from complex propositional queries. While we present our approach in the context of visual-semantic data, any modalities may be considered. Code is available at \url{https://github.com/gabmoreira/subspaces}.

\section{RELATED WORK}
\paragraph{Structured Representations} A large number of works have addressed the problem of endowing representation spaces with an interpretable structure, capable of encoding the partial orders inherent to knowledge graphs and image-caption hierarchies \cite{vendrov2015order}. A notable subset of these include measure-based representations, such as Gaussian embeddings \cite{vilnis2014word}, Gaussian mixtures \cite{choudhary2021probabilistic}, box embeddings \cite{vilnis2018probabilistic,li2018smoothing,patel2022modeling} and Beta embeddings \cite{ren2020beta}. An alternative direction has been to look at the problem from a geometrical standpoint, namely via negatively curved manifolds. In fact, hyperbolic embeddings have seen widespread adoption in the literature, owing to their ability to naturally represent tree-like structures \cite{moreira2024hyperbolic,Nickel2018LearningGeometry,Chamberlain2017NeuralSpace,Nickel2017PoincareRepresentations,Chami2020FromClustering,Atigh2021HyperbolicPrototypes,Ganea2018HyperbolicEmbeddings}. 

\paragraph{Information Theoretic Representations} Our work bridges the gap between the aforementioned research on structured representations and works on representation learning based on information theoretic principles, such as MCR$^2$ \cite{yu2020learning} and MMCR \cite{yerxa2024learning,lecomte2023information}. The latter, in particular, while finding applications in self-supervised learning, uses a nuclear norm-based loss similar to the regularization proposed in OLÉ \cite{lezama2018ole}, as a means to improve image classification. Both OLÉ and MMCR promote intra-class low-rank and inter-class high-rank via the nuclear norm and either enforce unit-norm embeddings or impose a lower bound on the intra-class nuclear norm to avoid representation collapse. In contrast, our method focuses on the low-rank assumption of the joint visual-semantic distribution and we provide a theoretical characterization of the solution for the general multi-label setting, instead of considering only the disjoint case.


\section{LEARNING SUBSPACES VIA THE NUCLEAR NORM}
\subsection{A Joint Low-Rank Formulation}
\label{sec:motivation}
In this section, we informally derive the proposed loss, by establishing a connection between multi-label classification via low-rank matrix completion \cite{cabral2011matrix, goldberg2010transduction} and recent works that employ the nuclear norm as a loss \cite{yerxa2024learning,lecomte2023information}, or a regularization thereof \cite{lezama2018ole}. Consider an image dataset with $c$ labels and $n$ images. Both the images and the labels shall be represented in some $d$-dimensional vector space. We define the label matrix as $\Yv = [\yv_1\dots\yv_n]\in\{0,1\}^{c\times n}$, and assume that $\Yv$ contains all the structure that we want to impose on the representations $\Xv=[\xv_1\dots\xv_n]\in\mathbb{R}^{d\times n}$. If the latter are amenable to a linear probe, the joint distribution
\begin{equation}
\Zv:=\begin{bmatrix}\Yv\\\Xv\end{bmatrix}\in\mathbb{R}^{(c+d)\times n}
\end{equation}
is low-rank. Inferring missing labels, or class prototypes, can thus be formulated as finding a complete low-rank matrix that is close to $\Zv$ (for some metric) in all the observed entries \cite{cabral2011matrix,goldberg2010transduction}.

If we assume instead that we have access to $\Yv$ and that we wish to learn $\Xv$, the problem becomes one of learning representations. In order to avoid it being ill-posed \textit{i.e.}, representation collapse, we need to ensure that $\Xv$ is non-trivial. This can be achieved by forcing the representations to be full-rank. Using the nuclear norm $\|\cdot\|_\ast$ as a rank-surrogate, the proposed loss function reads as
\begin{equation}
    l(\Xv):=\left\|\Zv\right\|_\ast - \alpha \|\Xv\|_\ast + \beta \|\Xv\|_2^2,
    \label{eq:the_loss}
\end{equation}
for some $\beta \in (0,1)$ and $\alpha \in (0,1)$, and where $\|\cdot\|_2$ is the spectral norm. The low-rank we seek for $\Zv$ encodes the fact that $\Xv$ should require no more information than that specified in $\Yv$ \textit{i.e.}, $\Zv$ should be be redundant. Conversely, the high rank we require for $\Xv$ forces it to be informative, unless $\Yv$ is given. This objective is \textit{tantamount} to requiring the joint distribution of $\Xv,\Yv$ to have low entropy and $\Xv$ to have high entropy, which corresponds to minimizing $H(\Xv,\Yv) - H(\Xv) - H(\Yv)$ \textit{i.e.}, the negative mutual information between $\Yv$ and the representations $\Xv$. As we state formally in Section \ref{sec:theoretical_results}, minimizing this proxy of the negative mutual information yields a spectral embedding of $\Yv$.

\paragraph{Minimizing the Loss} One of our key results rests on the minimization of the loss $l(\mathbf{X})$, formally stated by Theorem \ref{theorem:min_nuclear_norm_ell2_squared}, in Section \ref{sec:theoretical_results}. We show that the representations that minimize (\ref{eq:the_loss}) are proportional to the eigenvectors of the Gram matrix of the labels $\Yv^\top\Yv$, up to an orthogonal transformation.

The $\ell_2$-penalty ensures that the top singular values of the minimizer $\Xv^\ast$ are equal. The hyperparameter $\alpha < 1$ guarantees that the rank of the minimizers is exactly that of $\Yv$. Table \ref{tab:losses_comparison} provides a summary of different nuclear norm-based losses and their minimizers, highlighting the contribution of each term. 

\begin{table*}
\caption{Losses and minimizers for $\Yv = \Uv_Y \mathbf{\Sigma}_Y \Vv_Y^\top\in\mathbb{R}^{c\times n}$ rank-$c$ and $\Xv\in\mathbb{R}^{d\times n}$, with $d \geq c$.}
\renewcommand{\arraystretch}{1.2}
    \centering
    \begin{tabular}{lcc}
         \toprule
         Loss $l(\Xv)$ & Dim & Argmin \\
         \hline
         \multirow{2}{*}{$\|\Zv\|_\ast - \alpha \|\Xv\|_\ast$ } & $d=c$ & $\frac{\alpha}{\sqrt{1-\alpha^2}}\Uv \mathbf{\Sigma}_Y \Vv_Y^\top \;:\; \Uv \in O(d)$ \\
         & $d>c$ & $\Uv \begin{bmatrix}\frac{\alpha}{\sqrt{1-\alpha^2}}\mathbf{\Sigma}_Y & \mathbf{0} \\ \mathbf{0} & \mathbf{0}\end{bmatrix} \begin{bmatrix}\Vv_Y^\top \\ \Vv^\top\end{bmatrix} :\; \Uv \in O(d), \Vv \in \mathcal{N}(\Vv_Y)$\\
         \hline
         \multirow{2}{*}{$\|\Zv\|_\ast - \|\Xv\|_\ast + \beta \|\Xv\|_2$} & $d=c$ & $\Uv (t^\ast \Iv_d)\Vv_Y^\top : \; \Uv\in O(d)$\\
         & $d>c$ & $ \Uv \begin{bmatrix}t^\ast \Iv_c & \mathbf{0} \\ \mathbf{0} & \mathbf{\Sigma}\end{bmatrix} \begin{bmatrix} \Vv_Y^\top \\ \Vv^\top\end{bmatrix} :\; \Uv\in O(d), t^\ast \succeq \mathbf{\Sigma}, \Vv \in \mathcal{N}(\Vv_Y)$\\
         \hline
         \multirow{2}{*}{$\|\Zv\|_\ast - \alpha\|\Xv\|_\ast + \beta \|\Xv\|_2^2$} & $d=c$ & $\Uv (t^\ast \Iv_c)\Vv_Y^\top\; :\; \Uv\in O(d)$\\
         & $d>c$ & $ \Uv (t^\ast \Iv_c) \Vv_Y^\top :\; \Uv\in \mathrm{St}_{c}(\mathbb{R}^d)$\\
         \bottomrule
    \end{tabular}
    \label{tab:losses_comparison}
\end{table*}

\subsection{Theoretical Guarantees and Derivations}
\label{sec:theoretical_results}
Our objective for this section is formally derive the minimizer of ($\ref{eq:the_loss}$), with the required auxiliary results. We will denote by $O(d) = \{\Uv \in \mathbb{R}^{d\times d} : \Uv^\top \Uv = \Uv\Uv^\top = \Iv_d\}$ the orthogonal group. The Stiefel manifold of orthonormal $d$-frames in $\mathbb{R}^n$ reads as $\mathrm{St}_d(\mathbb{R}^n)\subset\mathbb{R}^{n\times d}$ and the nullspace of a matrix as $\mathcal{N}$. The nuclear and spectral norms are denoted as $\|\Xv\|_\ast$ and $\|\Xv\|_2$, respectively. Recalling that the nuclear norm is the sum of the singular values $\|\Xv\|_\ast = \mathrm{Tr}((\Xv^\top \Xv)^{1/2})$, we have that it is invariant to orthogonal transformations (proofs are deferred to Appendix \ref{sec:app_proofs}).

\begin{lemma}[Symmetry]
    \label{lemma:nuclear_symmetry}
    Let $\Yv\in\mathbb{R}^{c\times n}$ and $\Xv\in\mathbb{R}^{d\times n}$. For any $\Uv_1 \in O(c), \Uv_2\in O(d)$ and $\Vv\in O(n)$
    \begin{equation}
        \left\|\begin{bmatrix}\Uv_1 \Yv \\ \Uv_2 \Xv \Vv \end{bmatrix}\right\|_\ast = \left\|\begin{bmatrix}\Yv \Vv^\top \\ \Xv \end{bmatrix}\right\|_\ast
    \end{equation}
\end{lemma}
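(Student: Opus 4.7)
The plan is to use the orthogonal invariance of the nuclear norm, applied twice: once on the left by a block diagonal orthogonal matrix, and once on the right by $\Vv^\top$.

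First, I would observe that the block diagonal matrix
\[
\Uv := \begin{bmatrix} \Uv_1 & \mathbf{0} \\ \mathbf{0} & \Uv_2 \end{bmatrix} \in \mathbb{R}^{(c+d)\times(c+d)}
\]
is orthogonal, since $\Uv^\top \Uv = \mathrm{diag}(\Uv_1^\top \Uv_1, \Uv_2^\top \Uv_2) = \Iv_{c+d}$. Left-multiplying by $\Uv$ yields the factorization
\[
\begin{bmatrix} \Uv_1 \Yv \\ \Uv_2 \Xv \Vv \end{bmatrix} = \Uv \begin{bmatrix} \Yv \\ \Xv \Vv \end{bmatrix}.
\]
Applying the left-invariance of the nuclear norm under orthogonal transformations (which follows directly from $\|\Xv\|_\ast = \mathrm{Tr}((\Xv^\top \Xv)^{1/2})$, since $(\Uv\Xv)^\top(\Uv\Xv) = \Xv^\top \Xv$), the left-hand side equals $\bigl\|[\Yv;\, \Xv \Vv]\bigr\|_\ast$, using semicolon for vertical stacking.

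Next, I would right-multiply by $\Vv^\top$, using $\Vv\Vv^\top = \Iv_n$ to simplify the lower block:
\[
\begin{bmatrix} \Yv \\ \Xv \Vv \end{bmatrix} \Vv^\top = \begin{bmatrix} \Yv \Vv^\top \\ \Xv \end{bmatrix}.
\]
Since $\Vv^\top \in O(n)$, right-invariance of the nuclear norm (which follows analogously from $\Xv \Vv^\top(\Xv\Vv^\top)^\top = \Xv\Xv^\top$ having the same eigenvalues as $\Xv^\top \Xv$) gives the desired equality. There is no real obstacle: the whole lemma is a two-line application of the orthogonal invariance of $\|\cdot\|_\ast$, together with the observation that block-diagonal orthogonal matrices are orthogonal. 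The only minor care needed is in the bookkeeping of the right-multiplication, to verify that it leaves $\Xv$ (not $\Xv\Vv\Vv^\top$) in the lower block and transforms $\Yv$ into $\Yv\Vv^\top$.
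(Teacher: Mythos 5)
Your proof is correct and uses essentially the same idea as the paper's: both arguments rest entirely on the orthogonal invariance of the nuclear norm, with your right-multiplication by $\Vv^\top$ corresponding exactly to the conjugation step the paper performs on the Gram matrix $\Yv^\top\Yv + \Vv^\top\Xv^\top\Xv\Vv$. Your packaging of the left-invariance via the block-diagonal orthogonal matrix $\mathrm{diag}(\Uv_1,\Uv_2)$ is a slightly cleaner presentation of the same computation.
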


If we fix the singular values of $\Xv$, the invariance from Lemma \ref{lemma:nuclear_symmetry} turns the minimization of $\|\cdot\|_\ast$ over $\mathbb{R}^{d\times n}$ into a minimization over $\mathrm{St}_d(\mathbb{R}^n)$. The solution is given by the following Theorem.

\begin{theorem}
    \label{theorem:nuclear_min_v}
    Let $\Yv\in\mathbb{R}^{c\times n}$ be a rank-$c$ matrix with SVD $\Uv_Y \mathbf{\Sigma}_Y \Vv_Y^\top$, where $\Uv_Y \in O(c)$, $\mathbf{\Sigma}_Y \in\mathbb{R}^{c\times c}$ is diagonal, with singular values $\mu_1 \geq \dots \geq \mu_c$, and $\Vv_Y \in \mathrm{St}_c(\mathbb{R}^n)$. For a rank-$d$ matrix $\Xv \in \mathbb{R}^{d\times n}$ with SVD $\Uv_X\mathbf{\Sigma}_X \Vv_X^\top$ and singular values $\sigma_1 \geq \dots \geq \sigma_d$, let 
    \begin{equation}
        \Vv^\ast = \argmin_{\Vv_X \in \mathrm{St}_d(\mathbb{R}^n)}\left\|\begin{bmatrix}\Yv \\ \Uv_X\mathbf{\Sigma}_X \Vv_X^\top \end{bmatrix}\right\|_\ast.
    \end{equation}
    It follows that, 
    \begin{itemize}
        \item if $c = d$,  then $\Vv^\ast = \Vv_Y$ and the min is $\sum_{i=1}^c \sqrt{\mu_i^2 + \sigma_i^2}$.
        \item  if $d < c$, then $\Vv^\ast =\Vv_Y  \begin{bmatrix} \Iv_d & \mathbf{0}_{d\times(c-d)}\end{bmatrix}^\top$ and the min is $\sum_{i=1}^{d} \sqrt{\mu_i^2 + \sigma_i^2} + \sum_{i=d+1}^c \mu_i$.
        \item  if $d > c$, then $\Vv^\ast = \begin{bmatrix} \Vv_Y & \Vv \end{bmatrix}$, with $\Vv^\top \Vv_Y = \mathbf{0}$, and the min is $\sum_{i=1}^c \sqrt{\mu_i^2 + \sigma_i^2} + \sum_{i=c+1}^{d}\sigma_i$.
    \end{itemize}
\end{theorem}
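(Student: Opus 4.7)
My plan is to apply Lemma~\ref{lemma:nuclear_symmetry} to reduce the objective to a form depending only on $\Vv_Y$, $\Vv_X$ and the fixed singular values, then derive a lower bound via a majorization inequality, and finally verify that the proposed $\Vv^\ast$ attains this bound in each of the three regimes.

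First I would apply Lemma~\ref{lemma:nuclear_symmetry} with $\Uv_1 = \Uv_Y^\top$ and $\Uv_2 = \Uv_X^\top$, which rewrites the objective as
\begin{equation*}
    \left\|\begin{bmatrix}\mathbf{\Sigma}_Y \Vv_Y^\top \\ \mathbf{\Sigma}_X \Vv_X^\top \end{bmatrix}\right\|_\ast = \mathrm{Tr}\bigl((\Av + \Bv)^{1/2}\bigr),
\end{equation*}
where $\Av := \Vv_Y \mathbf{\Sigma}_Y^2 \Vv_Y^\top$ and $\Bv := \Vv_X \mathbf{\Sigma}_X^2 \Vv_X^\top$ are $n \times n$ PSD matrices whose nonzero spectra are $\{\mu_i^2\}_{i=1}^c$ and $\{\sigma_i^2\}_{i=1}^d$, fixed as $\Vv_X$ varies over $\mathrm{St}_d(\mathbb{R}^n)$. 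In this form the problem becomes one of optimally aligning the eigenspaces of $\Bv$ relative to those of $\Av$.

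Next, I would invoke the Ky Fan/Lidskii majorization inequality, $\lambda^\downarrow(\Av+\Bv) \prec \lambda^\downarrow(\Av) + \lambda^\downarrow(\Bv)$ (elementwise sum of sorted spectra). Since $\xv \mapsto \sum_i \sqrt{x_i}$ is symmetric and concave on $\mathbb{R}_{\geq 0}^n$, it is Schur-concave, and majorization therefore yields
\begin{equation*}
    \mathrm{Tr}\bigl((\Av + \Bv)^{1/2}\bigr) \;\geq\; \sum_{i=1}^n \sqrt{\lambda_i^\downarrow(\Av) + \lambda_i^\downarrow(\Bv)},
\end{equation*}
a lower bound that does not depend on $\Vv_X$. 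Expanding it in the three regimes, using $\mu_d \geq \mu_{d+1}$ or $\sigma_c \geq \sigma_{c+1}$ to confirm that the aligned spectra remain sorted, recovers precisely the three expressions claimed in the statement.

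Finally I would verify tightness by exhibiting the proposed $\Vv^\ast$. The bound is attained when $\Av$ and $\Bv$ are simultaneously diagonalizable with their eigenvalues in matching sorted order. For $c=d$, $\Vv^\ast = \Vv_Y$ gives $\Av+\Bv = \Vv_Y(\mathbf{\Sigma}_Y^2+\mathbf{\Sigma}_X^2)\Vv_Y^\top$; for $d<c$, taking $\Vv^\ast$ to be the first $d$ columns of $\Vv_Y$ places the top-$d$ eigenvectors of $\Bv$ on the dominant eigenvectors of $\Av$; and for $d>c$, extending $\Vv_Y$ by any orthonormal $\Vv$ with $\Vv^\top \Vv_Y = \mathbf{0}$ places the surplus eigenvectors of $\Bv$ inside $\mathcal{N}(\Av)$. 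A direct spectral computation in each case shows that $\Av+\Bv$ has exactly the spectrum appearing under the square roots of the lower bound, so the inequality is saturated. The main obstacle is this alignment argument in the $d \neq c$ regime: the proposed $\Vv^\ast$ must simultaneously place the top $\min(c,d)$ eigenvectors of $\Bv$ in $\mathrm{range}(\Vv_Y)$ and keep them sorted consistently with $\mathbf{\Sigma}_Y$, since any misalignment would produce a strict majorization and, by Schur-concavity of $\sum\sqrt{\cdot}$, push the nuclear norm strictly above the Lidskii bound.
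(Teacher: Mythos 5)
Your proof is correct, but it takes a genuinely different route from the paper's. The paper removes the gauge freedom via Lemma \ref{lemma:nuclear_symmetry} just as you do, but then lower-bounds $f(\Vv_X)$ by right-multiplying with the orthogonal projection $\Vv_Y\Vv_Y^\top$ (using $\|\Mv\Pv\|_\ast\leq\|\Mv\|_\ast$) and reasons about the alignment matrix $\Hv=\Vv_X^\top\Vv_Y$, showing that $g(\Hv)=\mathrm{Tr}\bigl(\bigl(\mathbf{\Sigma}_Y^2+\Hv^\top\mathbf{\Sigma}_X^2\Hv\bigr)^{1/2}\bigr)$ is maximized at $\Hv=\Iv_c$ and that $f(\Vv_Y)$ equals that value; moreover it only writes out the $c=d$ case and asserts the other two follow. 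Your argument instead passes to $\mathrm{Tr}\bigl((\Av+\Bv)^{1/2}\bigr)$ and combines the Lidskii--Ky Fan majorization $\lambda^\downarrow(\Av+\Bv)\prec\lambda^\downarrow(\Av)+\lambda^\downarrow(\Bv)$ with the Schur-concavity of $\xv\mapsto\sum_i\sqrt{x_i}$. What this buys is substantial: your lower bound $\sum_i\sqrt{\lambda_i^\downarrow(\Av)+\lambda_i^\downarrow(\Bv)}$ is manifestly independent of $\Vv_X$, so the logic ``$\min\geq$ bound, bound attained at the candidate'' closes cleanly and covers all three regimes $c=d$, $d<c$, $d>c$ in a single computation. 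By contrast, the paper's lower bound $g(\Vv_X^\top\Vv_Y)$ still depends on $\Vv_X$, and the step relating $\max_{\|\Hv\|_2\leq1}g(\Hv)$ to $\min_{\Vv_X}f(\Vv_X)$ is the delicate point of that argument (the chain $f(\Vv_X)\geq g(\Hv)$ and $g(\Hv)\leq g(\Iv_c)=f(\Vv_Y)$ does not by itself order $f(\Vv_X)$ against $f(\Vv_Y)$); your majorization route sidesteps this entirely at the cost of invoking heavier, though standard, matrix-analytic machinery. The spectral verification of attainment in each of the three cases is exactly right. The one thing neither you nor the paper establishes is that the listed $\Vv^\ast$ exhausts the argmin set (e.g.\ when some $\sigma_i=0$ or singular values coincide), but the theorem is used downstream only through the value of the minimum and the existence of these minimizers, so this is not a gap that matters.
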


From Lemma \ref{lemma:nuclear_symmetry} and Theorem \ref{theorem:nuclear_min_v} we can derive our main result, which states that, for the appropriate choice of $\alpha$ and $\beta$, the minimizer of (\ref{eq:the_loss}) is a spectral embedding of $\Yv^\top \Yv$ with the rank of $\Yv$.

\begin{theorem}
    \label{theorem:min_nuclear_norm_ell2_squared}
    Let $\Yv \in \mathbb{R}^{c\times n}$ be a rank-$c$ matrix with SVD $\Uv_Y \mathbf{\Sigma}_Y \Vv_Y^\top$, where $\Uv_Y \in O(c)$, $\mathbf{\Sigma}_Y \in\mathbb{R}^{c\times c}$ is diagonal, with singular values $\mu_1 \geq \dots \geq \mu_c$,  and $\Vv_Y\in \mathrm{St}_c(\mathbb{R}^n)$. For $\beta\in (0,1)$ and $\sqrt{\max\left\{0,1-\frac{4\beta^2\mu_c^2}{c^2}\right\}} \leq \alpha < 1$, define the set
    \begin{equation}
        \mathcal{X} :=  \argmin_{\Xv \in \mathbb{R}^{d\times n}} \left\{\left\|\begin{bmatrix} \Yv \\ \Xv \end{bmatrix}\right\|_\ast - \alpha\|\Xv\|_\ast + \beta\|\Xv\|_2^2\right\},
    \end{equation}
    for $d\geq c$. Then,
    \begin{equation}
        \mathcal{X} =\left\{ \Uv (t^\ast \Iv_c) \Vv_Y^\top : \; \Uv\in \mathrm{St}_c(\mathbb{R}^d)\right\},
    \end{equation}
    where $t^\ast > 0$ is the solution to $\sum_{i=1}^c\frac{t}{\sqrt{\mu_i^2 + t^2}} = \alpha c - 2\beta t$.
\end{theorem}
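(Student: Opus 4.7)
The plan is to reduce the matrix optimization to a scalar program over the singular values of $\Xv$, using Lemma~\ref{lemma:nuclear_symmetry} to absorb the left orthogonal freedom and Theorem~\ref{theorem:nuclear_min_v} to optimize away the right singular vectors, and then analyze the resulting convex program via KKT. Write an SVD $\Xv=\Uv_X\mathbf{\Sigma}_X\Vv_X^\top$ with singular values $\sigma_1\geq\dots\geq\sigma_d\geq 0$. The quantities $\|\Xv\|_\ast=\sum_i\sigma_i$ and $\|\Xv\|_2^2=\sigma_1^2$ depend only on $\mathbf{\Sigma}_X$, while by Lemma~\ref{lemma:nuclear_symmetry} the stacked nuclear norm $\|\Zv\|_\ast$ is invariant under $\Uv_X$. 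Applying Theorem~\ref{theorem:nuclear_min_v} (the $d>c$ branch; $d=c$ is a specialization) gives
\[
\min_{\Vv_X}\|\Zv\|_\ast=\sum_{i=1}^c\sqrt{\mu_i^2+\sigma_i^2}+\sum_{i=c+1}^d\sigma_i,
\]
attained at $\Vv_X=[\Vv_Y\mid\Vv]$ with $\Vv^\top\Vv_Y=\mathbf{0}$. Hence the matrix minimization collapses to $\min_{\sigma_1\geq\dots\geq\sigma_d\geq 0}g(\sigma)$, where
\[
g(\sigma)=\sum_{i=1}^c\sqrt{\mu_i^2+\sigma_i^2}-\alpha\sum_{i=1}^c\sigma_i+(1-\alpha)\sum_{i=c+1}^d\sigma_i+\beta\sigma_1^2.
\]

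Because $\alpha<1$, the positive coefficients on the tail force $\sigma_{c+1}^\ast=\dots=\sigma_d^\ast=0$. The remaining head problem is strictly convex in $(\sigma_1,\dots,\sigma_c)$, so KKT is necessary and sufficient. Introducing multipliers $\lambda_i\geq 0$ for $\sigma_i\geq\sigma_{i+1}$ (with $\sigma_{c+1}:=0$) and setting $\lambda_0:=0$, stationarity reads
\[
\frac{\sigma_i}{\sqrt{\mu_i^2+\sigma_i^2}}-\alpha+2\beta\sigma_1\,\mathbbm{1}[i=1]+\lambda_{i-1}-\lambda_i=0.
\]
Summing over $i=1,\dots,c$ telescopes the multipliers; assuming $\sigma_c^\ast>0$ gives $\lambda_c=0$ by complementary slackness and yields the scalar identity $\sum_{i=1}^c\sigma_i/\sqrt{\mu_i^2+\sigma_i^2}+2\beta\sigma_1=\alpha c$.

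The crux is proving the cascade $\sigma_1^\ast=\dots=\sigma_c^\ast=t^\ast$. Setting $\sigma_i=t$ uniformly in the telescoped identity yields the defining equation $\sum_{i=1}^c t/\sqrt{\mu_i^2+t^2}=\alpha c-2\beta t$; its positive root $t^\ast$ exists uniquely since the LHS strictly increases from $0$ to $c$ while the RHS strictly decreases from $\alpha c$. To certify this as the KKT point, back-substitution gives $\lambda_k=(c-k)\alpha-\sum_{i=k+1}^c t^\ast/\sqrt{\mu_i^2+(t^\ast)^2}$, and since $\mu_i\geq\mu_c$ each summand is at most $t^\ast/\sqrt{\mu_c^2+(t^\ast)^2}$, so $\lambda_k\geq(c-k)\lambda_{c-1}$ and feasibility reduces to $\lambda_{c-1}\geq 0$, equivalently $t^\ast\leq\alpha\mu_c/\sqrt{1-\alpha^2}$. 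The main obstacle is translating this into the hypothesis on $\alpha$: letting $\bar t:=\alpha\mu_c/\sqrt{1-\alpha^2}$ and noting that $f(t):=\sum_i t/\sqrt{\mu_i^2+t^2}+2\beta t$ is strictly increasing with $f(t^\ast)=\alpha c$, it suffices to have $f(\bar t)\geq\alpha c$; using the crude bound $f(\bar t)\geq 2\beta\bar t$, this collapses to $2\beta\bar t\geq\alpha c$, which rearranges exactly to the hypothesized $\alpha^2\geq 1-4\beta^2\mu_c^2/c^2$. Reassembling, $\Xv^\ast=\Uv(t^\ast\Iv_c)\Vv_Y^\top$ where $\Uv\in\mathrm{St}_c(\mathbb{R}^d)$ collects the first $c$ columns of $\Uv_X$, yielding the stated parametrization of $\mathcal{X}$.
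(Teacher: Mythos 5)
Your proof is correct and follows essentially the same route as the paper's: reduce to a convex program over the singular values via Lemma~\ref{lemma:nuclear_symmetry} and Theorem~\ref{theorem:nuclear_min_v}, then certify the uniform solution $\sigma_1^\ast=\dots=\sigma_c^\ast=t^\ast$, $\sigma_{c+1}^\ast=\dots=\sigma_d^\ast=0$ by KKT, arriving at the same defining equation for $t^\ast$ and the same threshold on $\alpha$. The only difference is cosmetic: you retain the isotonic chain constraints $\sigma_i\geq\sigma_{i+1}$ and telescope their multipliers (bounding $t^\ast\leq\alpha\mu_c/\sqrt{1-\alpha^2}$ via monotonicity of $f$), whereas the paper drops the ordering and introduces an auxiliary variable $t$ with constraints $t\geq\sigma_i$ (bounding $t^\ast\leq\alpha c/2\beta$); both verifications of dual feasibility rearrange to exactly $\alpha^2\geq 1-4\beta^2\mu_c^2/c^2$.
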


As formalized in the following lemma, the $\ell_2$-penalty guarantees that the non-trivial singular values of the minimizers of (\ref{eq:the_loss}) are equal. Without this penalty, the minimizer set for $d=c$ is the orbit $\{\Rv\Yv\alpha/\sqrt{1-\alpha^2} \;:\; \Rv\in O(d)\}$, akin to MMCR.

\begin{lemma}[No $\ell_2$-penalty]
    \label{lemma:min_nuclear_norm_no_ell2}
    Let $\Yv \in \mathbb{R}^{c\times n}$ be a rank-$c$ matrix with SVD given by $\Uv_Y \mathbf{\Sigma}_Y \Vv_Y^\top$, where $\Uv_Y \in O(c)$, $\mathbf{\Sigma}_Y \in\mathbb{R}^{c\times c}$ is diagonal, with singular values $\mu_1 \geq \dots \geq \mu_c$, and $\Vv_Y\in \mathrm{St}_c(\mathbb{R}^n)$. For $\alpha\in(0,1)$, define the set
    \begin{equation}
        \mathcal{X} := \argmin_{\Xv \in \mathbb{R}^{d\times n}} \left\{\left\|\begin{bmatrix} \Yv \\ \Xv \end{bmatrix}\right\|_\ast - \alpha\|\Xv\|_\ast \right\},
    \end{equation}
    for $d \geq c$. Then,
    \begin{align}
        \mathcal{X} = \bigg\{\Uv \begin{bmatrix}\frac{\alpha}{\sqrt{1-\alpha^2}}\mathbf{\Sigma}_Y & \mathbf{0} \\ \mathbf{0} & \mathbf{0} \end{bmatrix} \begin{bmatrix}\Vv_Y^\top \\ \Vv^\top\end{bmatrix} : \nonumber \\ \Uv \in O(d), \Vv \in \mathcal{N}(\Vv_Y)\bigg\}.
    \end{align}
\end{lemma}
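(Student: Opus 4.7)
The strategy is to reduce the matrix optimization to a scalar problem on the singular values of $\Xv$ by exploiting the orthogonal invariance from Lemma \ref{lemma:nuclear_symmetry} and the alignment structure from Theorem \ref{theorem:nuclear_min_v}. First I would write $\Xv = \Uv_X \mathbf{\Sigma}_X \Vv_X^\top$ with $\sigma_1 \geq \dots \geq \sigma_d \geq 0$ and observe that $\|\Xv\|_\ast = \sum_i \sigma_i$ depends only on $\mathbf{\Sigma}_X$, while, after applying Lemma \ref{lemma:nuclear_symmetry} to absorb the left orthogonal factors of $\Yv$ and $\Xv$, the stacked nuclear norm depends only on $\mathbf{\Sigma}_X$ and $\Vv_X$. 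In particular, $\Uv_X \in O(d)$ is unconstrained and contributes the $\Uv$ factor in the claimed form.

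Next, for any fixed spectrum $\mathbf{\Sigma}_X = \mathrm{diag}(\sigma_1,\dots,\sigma_d)$, I would invoke Theorem \ref{theorem:nuclear_min_v} (the $d \geq c$ cases) to minimize over $\Vv_X \in \mathrm{St}_d(\mathbb{R}^n)$. The optimum is attained exactly when $\Vv_X = [\Vv_Y \ \Vv]$ with $\Vv^\top \Vv_Y = \mathbf{0}$, so the reduced objective becomes
\[
\sum_{i=1}^c \sqrt{\mu_i^2 + \sigma_i^2} + \sum_{i=c+1}^{d} \sigma_i - \alpha \sum_{i=1}^{d} \sigma_i,
\]
which is separable in the $\sigma_i$. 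The $i > c$ summands equal $(1-\alpha)\sigma_i$, strictly increasing since $\alpha < 1$, so $\sigma_i^\ast = 0$. Each $i \leq c$ summand $\sqrt{\mu_i^2 + \sigma_i^2} - \alpha \sigma_i$ is strictly convex on $[0,\infty)$, and the first-order condition $\sigma_i / \sqrt{\mu_i^2 + \sigma_i^2} = \alpha$ yields the unique minimizer $\sigma_i^\ast = \alpha \mu_i / \sqrt{1-\alpha^2}$. The ordering $\sigma_1^\ast \geq \dots \geq \sigma_d^\ast$ is inherited from $\mu_1 \geq \dots \geq \mu_c > 0$, so that constraint is inactive, and reassembling recovers precisely the set in the lemma.

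The one subtlety I foresee is that Theorem \ref{theorem:nuclear_min_v} is stated for rank-$d$ $\Xv$, whereas the optimal spectrum is rank-deficient as soon as $d > c$. I would address this by continuity of the nuclear norm in $\mathbf{\Sigma}_X$, taking the trailing singular values to zero in the rank-$d$ formula; in the limit the columns of $\Vv_X$ (and of $\Uv_X$) corresponding to zero singular values are arbitrary, which is exactly the redundancy encoded by letting $\Uv$ range over $O(d)$ and $\Vv$ over $\mathcal{N}(\Vv_Y)$ in the statement. Nothing else in the argument is delicate; separability after the reduction via Lemma \ref{lemma:nuclear_symmetry} and Theorem \ref{theorem:nuclear_min_v} is what makes the problem tractable.
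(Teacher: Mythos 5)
Your proposal is correct and follows essentially the same route as the paper: absorb the orthogonal factors via Lemma \ref{lemma:nuclear_symmetry}, apply Theorem \ref{theorem:nuclear_min_v} to reduce the inner minimization over $\Vv_X$ to the closed-form expression in the singular values, and then solve the resulting separable convex problem in the $\sigma_i$ after relaxing the ordering constraint (which is inactive at the optimum since the $\mu_i$ are ordered). Your added remark about handling the rank-deficient spectrum when $d>c$ by a limiting argument is a point the paper's proof passes over silently, and it is a reasonable way to justify the appearance of the free factors $\Uv\in O(d)$ and $\Vv\in\mathcal{N}(\Vv_Y)$ in the solution set.
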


\begin{lemma}
    \label{lemma:min_nuclear_norm_ell2}
    Let $\Yv \in \mathbb{R}^{c\times n}$ be a rank-$c$ matrix with SVD $\Uv_Y \mathbf{\Sigma}_Y \Vv_Y^\top$, where $\Uv_Y \in O(c)$, $\mathbf{\Sigma}_Y \in\mathbb{R}^{c\times c}$ is diagonal, with singular values $\mu_1 \geq \dots \geq \mu_c$, and $\Vv_Y\in \mathrm{St}_c(\mathbb{R}^n)$. For $\beta\in(0,1)$, define the set
    \begin{equation}
        \mathcal{X} :=  \argmin_{\Xv \in \mathbb{R}^{d\times n}} \left\{\left\|\begin{bmatrix} \Yv \\ \Xv \end{bmatrix}\right\|_\ast - \|\Xv\|_\ast + \beta\|\Xv\|_2\right\},
    \end{equation}
    for $d\geq c$. Then,
    \begin{align}
        \mathcal{X} =\bigg\{ \Uv \begin{bmatrix}t^\ast \Iv_c & \mathbf{0} \\ \mathbf{0} & \mathbf{\Sigma}\end{bmatrix} \begin{bmatrix} \Vv_Y^\top \\ \Vv^\top\end{bmatrix} : \nonumber \\ \Uv\in O(d),\; t^\ast \Iv \succeq \mathbf{\Sigma},\;\Vv \in \mathcal{N}(\Vv_Y)\bigg\},
    \end{align}
    where $t^\ast > 0$ is the solution to $\sum_{i=1}^c\frac{t}{\sqrt{\mu_i^2 + t^2}} = c - \beta$.
\end{lemma}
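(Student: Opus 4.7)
The strategy mirrors the argument for Theorem \ref{theorem:min_nuclear_norm_ell2_squared}: exploit the orthogonal invariance of Lemma \ref{lemma:nuclear_symmetry} to decouple the unitary factors of $\Xv$ from the optimization, invoke Theorem \ref{theorem:nuclear_min_v} to evaluate $\|\Zv\|_\ast$ at the optimal $\Vv_X$, and reduce the problem to a scalar convex optimization over the singular values.

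Writing the SVD $\Xv = \Uv_X \mathbf{\Sigma}_X \Vv_X^\top$ with $\mathbf{\Sigma}_X = \mathrm{diag}(\sigma_1,\dots,\sigma_d)$ and $\sigma_1 \geq \dots \geq \sigma_d \geq 0$, the terms $\|\Xv\|_\ast = \sum_i \sigma_i$ and $\|\Xv\|_2 = \sigma_1$ depend only on the singular values, while Lemma \ref{lemma:nuclear_symmetry} guarantees that $\|\Zv\|_\ast$ is independent of $\Uv_X$. For fixed $\mathbf{\Sigma}_X$, the case $d \geq c$ of Theorem \ref{theorem:nuclear_min_v} identifies the optimal $\Vv_X^\ast = \begin{bmatrix}\Vv_Y & \Vv\end{bmatrix}$ with $\Vv \in \mathcal{N}(\Vv_Y)$ and yields $\|\Zv\|_\ast = \sum_{i=1}^c \sqrt{\mu_i^2 + \sigma_i^2} + \sum_{i=c+1}^d \sigma_i$. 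After cancellation with $-\|\Xv\|_\ast$, the loss collapses to
\begin{equation}
\phi(\sigma_1,\dots,\sigma_d) = \sum_{i=1}^c \left(\sqrt{\mu_i^2 + \sigma_i^2} - \sigma_i\right) + \beta \sigma_1.
\end{equation}

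Next, I would argue that every minimizer satisfies $\sigma_1 = \sigma_2 = \dots = \sigma_c$. Each summand $s \mapsto \sqrt{\mu_i^2 + s^2} - s$ is strictly decreasing in $s \geq 0$, so any $\sigma_i$ (with $2 \leq i \leq c$) lying strictly below $\sigma_1$ can be raised toward $\sigma_1$ while still respecting the ordering constraints, strictly decreasing $\phi$ without perturbing the $\beta \sigma_1$ term. The remaining singular values $\sigma_{c+1},\dots,\sigma_d$ are absent from $\phi$ and are constrained only to lie in $[0,\sigma_1]$, which produces the free diagonal block $\mathbf{\Sigma}$ and the condition $t^\ast \Iv \succeq \mathbf{\Sigma}$ in the stated minimizer.

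It then remains to solve the scalar convex problem $\min_{t \geq 0}\, \phi(t) = \sum_{i=1}^c (\sqrt{\mu_i^2 + t^2} - t) + \beta t$. Its derivative $\phi'(t) = \sum_{i=1}^c t/\sqrt{\mu_i^2 + t^2} - (c - \beta)$ is strictly increasing from $-(c-\beta) < 0$ at $t=0$ to $\beta > 0$ as $t \to \infty$, so a unique positive root $t^\ast$ satisfies the fixed-point equation in the lemma. The step I expect to be most delicate is the collapse to $\sigma_1 = \dots = \sigma_c$: because the spectral-norm penalty couples only to $\sigma_1$ whereas the ordering constraints tie all the $\sigma_i$ together non-separably, one cannot appeal to a separable first-order argument and must instead rely carefully on the strict monotonicity of $s \mapsto \sqrt{\mu_i^2 + s^2} - s$ together with the ordering $\sigma_i \leq \sigma_1$.
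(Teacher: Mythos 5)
Your proposal is correct, and the reduction is the same as the paper's: invoke Lemma \ref{lemma:nuclear_symmetry} to quotient out the orthogonal factors, apply the $d>c$ case of Theorem \ref{theorem:nuclear_min_v} to evaluate the inner minimization over $\Vv_X$, observe that $\sum_{i=c+1}^d \sigma_i$ cancels against part of $-\|\Xv\|_\ast$, and arrive at the ordered problem $\min_{\sigma_1\geq\dots\geq\sigma_d\geq 0}\sum_{i=1}^c(\sqrt{\mu_i^2+\sigma_i^2}-\sigma_i)+\beta\sigma_1$. Where you diverge is the last step: the paper introduces an auxiliary variable $t$ with constraints $t-\sigma_i\geq 0$, relaxes the ordering, and verifies the full set of KKT conditions of the resulting Lagrangian to certify $\sigma_i^\ast=t^\ast$; you instead argue directly that since each $s\mapsto\sqrt{\mu_i^2+s^2}-s$ is strictly decreasing (using $\mu_i>0$ from the rank-$c$ assumption) and raising any $\sigma_i<\sigma_1$ with $2\leq i\leq c$ toward $\sigma_1$ preserves the ordering constraints while leaving $\beta\sigma_1$ untouched, every minimizer must have $\sigma_1=\dots=\sigma_c$, after which a one-dimensional strictly convex problem yields the fixed-point equation for $t^\ast$. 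Your exchange argument is more elementary and, contrary to the worry in your closing paragraph, not delicate at all: the only coupling is the chain $\sigma_1\geq\dots\geq\sigma_c$, and pushing each variable up to its predecessor is always feasible and strictly improving, so no first-order machinery is needed. The paper's KKT route buys uniformity with the proofs of Theorem \ref{theorem:min_nuclear_norm_ell2_squared} and Lemma \ref{lemma:min_nuclear_norm_no_ell2} (where the squared spectral penalty makes the duality bookkeeping genuinely useful), whereas your argument is shorter and self-contained for this particular lemma. Both correctly identify the free trailing block $\mathbf{\Sigma}$ with $t^\ast\Iv\succeq\mathbf{\Sigma}$ and the monotone limits $\phi'(0^+)=-(c-\beta)<0$, $\phi'(t)\to\beta>0$, guaranteeing a unique $t^\ast>0$.
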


\begin{corollary}[Orthogonal disjoint classes]
    Let $\Yv$ contain $n$ samples from $c$ disjoint classes and 
    \begin{equation}
        \Xv \in \argmin_{\Xv \in \mathbb{R}^{d\times n}} \left\{\left\|\begin{bmatrix} \Yv \\ \Xv \end{bmatrix}\right\|_\ast - \alpha \|\Xv\|_\ast+ \beta\|\Xv\|_2^2\right\}.
    \end{equation}
    Then, $\yv_i \neq \yv_j \implies \langle \xv_i, \xv_j \rangle = 0$.
    \label{corollary}
\end{corollary}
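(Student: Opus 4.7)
The plan is to apply Theorem \ref{theorem:min_nuclear_norm_ell2_squared} directly and then unpack what the resulting form implies about inner products of columns, using the special block structure that the one-hot label matrix induces on $\Vv_Y$.

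First I would invoke Theorem \ref{theorem:min_nuclear_norm_ell2_squared}: any minimizer of the loss has the form $\Xv = \Uv (t^\ast \Iv_c) \Vv_Y^\top$ with $\Uv \in \mathrm{St}_c(\mathbb{R}^d)$, where $\Vv_Y$ comes from the thin SVD $\Yv = \Uv_Y \mathbf{\Sigma}_Y \Vv_Y^\top$. Since $\Uv^\top \Uv = \Iv_c$, the Gram matrix of the columns of $\Xv$ is
\begin{equation}
    \Xv^\top \Xv = (t^\ast)^2 \Vv_Y \Vv_Y^\top,
\end{equation}
so $\langle \xv_i, \xv_j\rangle = (t^\ast)^2 (\Vv_Y \Vv_Y^\top)_{ij}$. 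The corollary will follow once I show $(\Vv_Y\Vv_Y^\top)_{ij}=0$ whenever $\yv_i \neq \yv_j$.

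Next I would exploit the disjoint-class assumption. Representing each label as a one-hot vector in $\{0,1\}^c$, the Gram matrix $\Yv^\top \Yv$ satisfies $(\Yv^\top\Yv)_{ij} = \mathbbm{1}[\yv_i = \yv_j]$. After a permutation of the columns that groups samples by class, $\Yv^\top\Yv$ is block-diagonal, with the $k$-th block equal to the all-ones matrix of size $n_k$, where $n_k$ is the number of samples of class $k$. The eigendecomposition of each such block has a single nonzero eigenvalue $n_k$ with eigenvector $\mathbf{1}_{n_k}/\sqrt{n_k}$, and zero eigenvalues otherwise. Hence the columns of $\Vv_Y$ (which span the row space of $\Yv$) can be taken as the normalized indicator vectors $\ev_k/\sqrt{n_k}$, where $\ev_k \in \{0,1\}^n$ is the indicator of class $k$.

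Finally, with this description of $\Vv_Y$, the projector $\Vv_Y\Vv_Y^\top = \sum_{k=1}^{c} \ev_k \ev_k^\top / n_k$ is block-diagonal with blocks $\tfrac{1}{n_k}\mathbf{1}\mathbf{1}^\top$, so $(\Vv_Y\Vv_Y^\top)_{ij} = 0$ whenever $i,j$ belong to different classes, i.e.\ whenever $\yv_i \neq \yv_j$. Combining with the expression for $\Xv^\top\Xv$ yields $\langle \xv_i,\xv_j\rangle = 0$, as claimed. I do not anticipate a real obstacle here; the only subtlety is the observation that Theorem \ref{theorem:min_nuclear_norm_ell2_squared} determines the Gram matrix up to the Stiefel factor $\Uv$, which drops out because $\Uv^\top\Uv = \Iv_c$, so the inner products are entirely governed by $\Vv_Y$.
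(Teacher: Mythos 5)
Your proposal is correct. It follows the paper's intended route in its first step --- invoking Theorem \ref{theorem:min_nuclear_norm_ell2_squared} to write any minimizer as $\Uv(t^\ast \Iv_c)\Vv_Y^\top$ with $\Uv\in\mathrm{St}_c(\mathbb{R}^d)$, so that $\Xv^\top\Xv = (t^\ast)^2\,\Vv_Y\Vv_Y^\top$ --- but it differs in how the orthogonality of the relevant entries of $\Vv_Y\Vv_Y^\top$ is established. The paper derives the corollary as a special case of Lemma \ref{lemma:orthogonal_vs}, whose proof proceeds abstractly from $\Vv_Y^\top\Vv_Y=\Iv_c$ and a counting argument over repeated columns, and which covers general minterm structures. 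You instead exploit the one-hot structure directly: $\Yv^\top\Yv$ is block-diagonal with all-ones blocks, so $\Vv_Y\Vv_Y^\top$ is the projection $\sum_k \ev_k\ev_k^\top/n_k$ onto the span of the class indicators, which is block-diagonal. Your route is more elementary and self-contained for the disjoint case; the paper's lemma buys generality to arbitrary correlated labels. One small point worth tightening: when two classes have equal cardinality $n_k$, the right singular vectors are not uniquely the normalized indicators, but since $\Vv_Y\Vv_Y^\top$ is the (unique) orthogonal projection onto the row space of $\Yv$, your final computation is unaffected by this choice --- it would be worth saying so explicitly rather than asserting that the columns of $\Vv_Y$ ``can be taken'' to be the indicators.
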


\paragraph{Comparison with OLÉ, MMCR and MCR$^2$} MMCR maximizes the nuclear norm of the matrix of $\ell_2$-normalized class centroids. The optimal representations for this loss have the same right-singular vectors as the label matrix $\Yv$, with the singular values of the former proportional to those of the latter. MMCR behaves thus similarly to the loss in the first row of Table \ref{tab:losses_comparison}. While it represents disjoint classes as orthogonal subspaces, the solution for correlated labels is unclear. The orthogonalization proposed in OLÉ acts as a regularizer for cross-entropy training, whereas the loss we propose (\ref{eq:the_loss}) is standalone. In MCR$^2$ \cite{yu2020learning}, disjoint classes are encoded as orthogonal subspaces, with their sum spanning the entire representation space. In our case, the corresponding subspaces have the smallest dimension necessary to represent $\Yv$ \textit{i.e.}, the rank of our representations is that of the semantics, regardless of the ambient space. This key difference can be attributed to the use of the nuclear norm which, being the convex envelope of the rank, naturally leads to low-rank solutions.

\paragraph{Comparison with Covariance Regularization} From Lemma \ref{lemma:nuclear_symmetry}, the orbit $\{\Rv\Xv : \Rv\in O(d)\}$ contains equivalent embedding matrices. Denote the SVD of $\Xv$ by $\Xv=\Uv\mathbf{\Sigma}\Vv^\top$. Since $\Uv\in O(d)$, we can define an equivalent embedding matrix by writing $\Xv$ in a new basis $\Uv$ such that its covariance is diagonal (assuming centered embeddings) \textit{i.e.}, $\Xv':=\Uv^\top \Xv$ and $\Xv'\Xv'^\top = \Uv^\top \Uv \mathbf{\Sigma} \Vv^\top \Vv \mathbf{\Sigma} \Uv^\top \Uv = \mathbf{\Sigma}^2$. Thus, instead of searching for $\Xv$ that minimizes $\|\Zv\|_\ast$ and maximizes $\|\Xv\|_\ast$, we can limit the search space to the set of matrices which admit the canonical basis of $\mathbb{R}^c$ as left singular vectors \textit{i.e.}, with an SVD $\Xv'=\mathbf{\Sigma} \Vv^\top$. For such matrices, the nuclear norm is given by the trace of $(\Xv\Xv^\top)^{\frac{1}{2}}$ and the problem of maximizing $\|\Xv\|_\ast$ becomes
\begin{align}
    &\max_{\Xv\in\mathbb{R}^{c\times n}}\; \sum_{i=1}^c \sqrt{(\Xv\Xv^\top)_{ii}}\nonumber \\
    &\;\;\;\mathrm{s.t.} \;\; (\Xv\Xv^\top)_{ij} = 0, \;i\neq j.
\end{align}
The terms of the sum are the standard deviations of each coordinate \textit{i.e.}, $\sqrt{(\Xv\Xv^\top)_{ii}}=\sqrt{\mathrm{Cov}[\Xv_{.,i}]}$. This is simply the covariance regularization proposed in VICReg \cite{bardesvicreg}.

\section{SUBSPACES AS PROPOSITIONS}
\label{sec:probabilistic_formulation}
Given a minimizer of (\ref{eq:the_loss}) we can derive probabilistic answers to propositions conditioned on the representations. This geometry-induced probabilistic formulation encodes the semantic partial orders, allowing us to perform propositional calculus on the representations, as illustrated in Fig. \ref{fig:projections}.

\begin{figure}
    \centering
    \tikzset{every picture/.style={line width=0.3pt}} 

\begin{tikzpicture}[x=0.75pt,y=0.75pt,yscale=-1,xscale=1]

\draw    (286.93,136.4) -- (286.93,7.65) ;
\draw [shift={(286.93,5.65)}, rotate = 90] [color={rgb, 255:red, 0; green, 0; blue, 0 }  ][line width=0.75]    (10.93,-3.29) .. controls (6.95,-1.4) and (3.31,-0.3) .. (0,0) .. controls (3.31,0.3) and (6.95,1.4) .. (10.93,3.29)   ;
\draw    (286.93,136.4) -- (450.3,136.4) ;
\draw [shift={(452.3,136.4)}, rotate = 180] [color={rgb, 255:red, 0; green, 0; blue, 0 }  ][line width=0.75]    (10.93,-3.29) .. controls (6.95,-1.4) and (3.31,-0.3) .. (0,0) .. controls (3.31,0.3) and (6.95,1.4) .. (10.93,3.29)   ;
\draw    (286.93,136.4) -- (190.59,192.59) ;
\draw [shift={(188.87,193.6)}, rotate = 329.74] [color={rgb, 255:red, 0; green, 0; blue, 0 }  ][line width=0.75]    (10.93,-3.29) .. controls (6.95,-1.4) and (3.31,-0.3) .. (0,0) .. controls (3.31,0.3) and (6.95,1.4) .. (10.93,3.29)   ;
\draw [color={rgb, 255:red, 155; green, 155; blue, 155 }  ,draw opacity=1 ] [dash pattern={on 0.84pt off 2.51pt}]  (344.13,160.91) -- (344.13,62.85) ;
\draw [color={rgb, 255:red, 246; green, 174; blue, 45 }  ,draw opacity=1 ][line width=1.5]    (286.93,136.4) -- (344.13,160.91) ;
\draw [color={rgb, 255:red, 155; green, 155; blue, 155 }  ,draw opacity=1 ] [dash pattern={on 0.84pt off 2.51pt}]  (246.07,160.91) -- (344.13,160.91) ;
\draw [color={rgb, 255:red, 155; green, 155; blue, 155 }  ,draw opacity=1 ] [dash pattern={on 0.84pt off 2.51pt}]  (384.17,137.2) -- (344.13,160.91) ;
\draw [color={rgb, 255:red, 155; green, 155; blue, 155 }  ,draw opacity=1 ] [dash pattern={on 0.84pt off 2.51pt}]  (384.17,137.2) -- (384.17,39.14) ;
\draw  [dash pattern={on 0.84pt off 2.51pt}]  (384.17,39.14) -- (344.13,62.85) ;
\draw [color={rgb, 255:red, 155; green, 155; blue, 155 }  ,draw opacity=1 ] [dash pattern={on 0.84pt off 2.51pt}]  (246.07,160.91) -- (246.07,58.76) ;
\draw [color={rgb, 255:red, 134; green, 187; blue, 216 }  ,draw opacity=1 ][line width=1.5]    (286.93,136.4) -- (384.17,39.14) ;
\draw [color={rgb, 255:red, 155; green, 155; blue, 155 }  ,draw opacity=1 ] [dash pattern={on 0.84pt off 2.51pt}]  (245.94,62.89) -- (344.13,62.85) ;
\draw [color={rgb, 255:red, 117; green, 142; blue, 79 }  ,draw opacity=1 ][line width=1.5]    (286.93,136.4) -- (245.94,62.89) ;
\draw [color={rgb, 255:red, 155; green, 155; blue, 155 }  ,draw opacity=1 ] [dash pattern={on 0.84pt off 2.51pt}]  (286.93,38.33) -- (384.17,39.14) ;
\draw [color={rgb, 255:red, 51; green, 101; blue, 138 }  ,draw opacity=1 ][line width=1.5]    (286.93,136.4) -- (286.93,38.33) ;
\draw [color={rgb, 255:red, 2; green, 48; blue, 71 }  ,draw opacity=1 ][line width=1.5]    (384.17,136.4) -- (286.93,136.4) ;
\draw [color={rgb, 255:red, 242; green, 100; blue, 25 }  ,draw opacity=1 ][line width=1.5]    (246.07,160.91) -- (286.93,136.4) ;
\draw [color={rgb, 255:red, 155; green, 155; blue, 155 }  ,draw opacity=1 ] [dash pattern={on 0.84pt off 2.51pt}]  (286.93,38.33) -- (245.94,62.89) ;
\draw  [draw opacity=0][fill={rgb, 255:red, 246; green, 174; blue, 45 }  ,fill opacity=0.1 ] (281.76,136.5) -- (440.42,136.5) -- (354.45,190.73) -- (195.79,190.73) -- cycle ;
\draw  [draw opacity=0][fill={rgb, 255:red, 134; green, 187; blue, 216 }  ,fill opacity=0.1 ] (286.72,13.93) -- (439.3,13.93) -- (439.3,136.5) -- (286.72,136.5) -- cycle ;
\draw  [draw opacity=0][fill={rgb, 255:red, 117; green, 142; blue, 79 }  ,fill opacity=0.1 ] (195.97,68.1) -- (287.53,13.93) -- (287.49,135.07) -- (195.92,189.25) -- cycle ;
\draw [line width=1.5]    (286.93,136.4) -- (344.13,62.85) ;

\draw (420,118) node [anchor=north west][inner sep=0.75pt]    {$\mathbf{p} \land \mathbf{q}$};
\draw (212,178.75) node [anchor=north west][inner sep=0.75pt]    {$\mathbf{p} \land \neg \mathbf{q}$};
\draw (291.53,8.0) node [anchor=north west][inner sep=0.75pt]    {$\neg \mathbf{p} \land \mathbf{q}$};
\draw (307.4,167.19) node [anchor=north west][inner sep=0.75pt]  [color={rgb, 255:red, 208; green, 2; blue, 27 }  ,opacity=1 ,rotate=-357.33,xslant=1.72]  {$ \begin{array}{l}
\mathbf{\textcolor[rgb]{0.96,0.68,0.18}{p}}\\
\end{array}$};
\draw (418.68,61.24) node [anchor=north west][inner sep=0.75pt]  [color={rgb, 255:red, 134; green, 187; blue, 216 }  ,opacity=1 ]  {$\mathbf{q}$};
\draw (203.23,106.69) node [anchor=north west][inner sep=0.75pt]  [color={rgb, 255:red, 117; green, 142; blue, 79 }  ,opacity=1 ,rotate=-332.09,xslant=0.65]  {$\mathbf{p} \oplus \mathbf{q}$};
\draw (313,61.59) node [anchor=north west][inner sep=0.75pt]    {$ \begin{array}{l}
\mathbf{x}\\
\end{array}$};
\draw (356.1,155.12) node [anchor=north west][inner sep=0.75pt]  [color={rgb, 255:red, 208; green, 2; blue, 27 }  ,opacity=1 ,xslant=1.24]  {$\textcolor[rgb]{0.96,0.68,0.18}{P(\mathbf{p|x})}$};
\draw (369.24,19.29) node [anchor=north west][inner sep=0.75pt]  [color={rgb, 255:red, 134; green, 187; blue, 216 }  ,opacity=1 ]  {$P(\mathbf{q} |\mathbf{x})$};
\draw (209.37,62.77) node [anchor=north west][inner sep=0.75pt]  [color={rgb, 255:red, 117; green, 142; blue, 79 }  ,opacity=1 ,rotate=-326.78,xslant=0.66]  {$P(\mathbf{p} \oplus \mathbf{q} | \mathbf{x})$};
\draw (215.71,157.06) node [anchor=north west][inner sep=0.75pt]  [font=\footnotesize,color={rgb, 255:red, 246; green, 174; blue, 45 }  ,opacity=1 ,rotate=-329.78,xslant=0.59]  {$\textcolor[rgb]{0.95,0.39,0.1}{P(\mathbf{p} \land \neg \mathbf{q} |\mathbf{x})}$};
\draw (324.12,116.72) node [anchor=north west][inner sep=0.75pt]  [font=\footnotesize,color={rgb, 255:red, 74; green, 144; blue, 226 }  ,opacity=1 ]  {$\textcolor[rgb]{0.01,0.19,0.28}{P(\mathbf{p} \land \mathbf{q} |\mathbf{x})}$};
\draw (292.78,36.24) node [anchor=north west][inner sep=0.75pt]  [font=\footnotesize,color={rgb, 255:red, 51; green, 101; blue, 138 }  ,opacity=1 ]  {$\textcolor[rgb]{0.2,0.4,0.54}{P( \neg \mathbf{p} \land \mathbf{q} |\mathbf{x})}$};

\end{tikzpicture}
    \caption{Subspace Boolean lattice. Each axis encodes a minterm of 2 literals: $\pv\land\qv$, $\neg\pv\land\qv$ and $\pv\land\neg\qv$. The propositions $\pv$ and $\qv$ are represented by two orthogonal 2-d subspaces. The squared norm of the projection of $\xv$, with $\|\xv\|=1$, over each subspace yields the posterior probability of the corresponding proposition.}
    \label{fig:projections}
\end{figure}

\subsection{Boolean Subspace Lattice}
\label{sec:boolean_lattice}
Given a full-row rank $\Yv\in\{0,1\}^{c\times n}$, the minimizer of (\ref{eq:the_loss}) embeds an inclusion Boolean sublattice of the power set $2^{[c]}$, as a subspace lattice of $\mathbb{R}^{d}$. We start by showing that, if we consider the columns of $\Yv=[\yv_1\dots\yv_n]$, denoted $\yv_i \in \{0,1\}^c$, as minterms of $c$ literals (logical propositions), then for any $\yv_i \neq \yv_j$, the corresponding embeddings are orthogonal \textit{i.e.}, $\langle \xv_i, \xv_j \rangle=0$. This allows us to associate more general propositions $\qv$ to subspaces $\mathcal{S}_q \subseteq \mathbb{R}^c$. Logical implication $\pv \implies \qv$ or equivalently, the semantic partial order $\qv \geq \pv$, corresponds in the representation space to the subspace inclusion $\mathcal{S}_p \subseteq \mathcal{S}_q$. This is a more general version of the orthogonality of disjoint classes of MMCR and MCR$^2$.

\begin{lemma}[Minterm orthogonality]
    \label{lemma:orthogonal_vs}
    Let $\Yv\in\{0,1\}^{c\times n}$ be a rank-c matrix with SVD $\Uv_Y\mathbf{\Sigma}_Y \Vv_Y^\top$, where $\Uv_Y\in O(c)$, $\mathbf{\Sigma}_Y \in \mathbb{R}^{c\times c}$ and $\Vv_Y \in\mathrm{St}_c(\mathbb{R}^n)\subset\mathbb{R}^{n\times c}$, with rows $\{\vv_i^\top\}_{i\in [n]}$. Let $\mathcal{I}$ be the largest index set such that for $i,j\in\mathcal{I}$ $\yv_i\neq \yv_j$. If $\mathrm{rank}\;\Yv = |\mathcal{I}|$ then $\{\vv_i\}_{i\in\mathcal{I}}$ is an orthogonal basis for $\mathbb{R}^c$.
\end{lemma}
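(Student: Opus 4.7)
The plan is to exploit the orthonormality identity $\Vv_Y^\top \Vv_Y = \Iv_c$ together with the fact that $\Uv_Y \mathbf{\Sigma}_Y$ is invertible (which follows from $\mathrm{rank}\,\Yv = c$, so all $\mu_i>0$). First, the SVD gives $\yv_i = \Uv_Y \mathbf{\Sigma}_Y \vv_i$, and since $\Uv_Y \mathbf{\Sigma}_Y$ is a bijection on $\mathbb{R}^c$, I conclude that $\yv_i = \yv_j$ if and only if $\vv_i = \vv_j$. Therefore the distinct values in $\{\vv_i\}_{i\in[n]}$ are in bijection with the distinct columns of $\Yv$, and there are exactly $|\mathcal{I}|$ of them.

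Next, I would collect the summands in the Gram identity by distinct value. Denoting the distinct rows by $\tilde\vv_1,\ldots,\tilde\vv_m$ with multiplicities $n_1,\ldots,n_m$ (so $\sum_k n_k = n$ and $m = |\mathcal{I}|$), the identity
\begin{equation}
\Iv_c \;=\; \Vv_Y^\top \Vv_Y \;=\; \sum_{i=1}^n \vv_i \vv_i^\top \;=\; \sum_{k=1}^m n_k\, \tilde\vv_k\, \tilde\vv_k^\top
\end{equation}
follows at once from expanding $\Vv_Y^\top \Vv_Y$ as an outer-product sum over rows.

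Finally, under the hypothesis $m = \mathrm{rank}\,\Yv = c$, let $\Bv := [\tilde\vv_1 \; \cdots \; \tilde\vv_c] \in \mathbb{R}^{c\times c}$ and $\Nv := \mathrm{diag}(n_1,\ldots,n_c)$. The displayed identity becomes $\Bv \Nv \Bv^\top = \Iv_c$, so $\Bv\Nv^{1/2}$ is a square orthogonal matrix. Since $\Mv\Mv^\top = \Iv$ implies $\Mv^\top \Mv = \Iv$ for square $\Mv$, I get $\Nv^{1/2}\Bv^\top \Bv \Nv^{1/2} = \Iv_c$, i.e., $\Bv^\top \Bv = \Nv^{-1}$. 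This is precisely the statement that $\{\tilde\vv_k\}$ is a pairwise orthogonal set of nonzero vectors with $\|\tilde\vv_k\|^2 = n_k^{-1}$; since there are $c$ of them in $\mathbb{R}^c$, they form an orthogonal basis. The only nontrivial step is the initial bijection relating $\mathcal{I}$ to the distinct rows of $\Vv_Y$; the rest reduces to recognizing that once distinct terms are collected, the Gram identity forces $\Bv$ to be square-orthogonal up to the diagonal scaling $\Nv^{1/2}$, so I foresee no real obstacle.
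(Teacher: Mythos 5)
Your proof is correct and follows essentially the same route as the paper's: you establish the bijection $\yv_i=\yv_j\Leftrightarrow\vv_i=\vv_j$ via the invertibility of $\Uv_Y\mathbf{\Sigma}_Y$, collect the Gram identity $\Vv_Y^\top\Vv_Y=\Iv_c$ over distinct rows with multiplicities, and conclude from the square-orthogonality of the rescaled matrix (your $\Bv\Nv^{1/2}$ is exactly the paper's $\bar{\Vv}$ with columns $\sqrt{|\mathcal{J}_i|}\,\vv_i$). The only difference is notational — you work with outer-product sums directly rather than applying the identity to an arbitrary test vector $\xv$ — and your observation that $\|\tilde\vv_k\|^2=n_k^{-1}$ is a nice explicit byproduct.
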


Recall from Theorem \ref{theorem:min_nuclear_norm_ell2_squared}, that the minimizer of (\ref{eq:the_loss}) is precisely $\Vv_Y^\top$, up to a global scale and an orthogonal transformation. Combining this with Lemma \ref{lemma:orthogonal_vs}, we have that the normalized embeddings corresponding to the unique minterms $\{\xv_i/\|\xv_i\|_2\}_{i\in\mathcal{I}}$ form an orthonormal basis for the representation space $\mathbb{R}^c$. We will henceforth write this basis as $\{\ev_i\}_{i\in\mathcal{I}}$. Given a unit $\ell_2$-norm representation $\xv$, we have $\sum_{i\in\mathcal{I}} \langle \xv, \ev_i \rangle^2 = 1$ and we can define the probability of the $i$-th minterm $\yv_i$, given the image, as $P(\yv_i | \xv) := \langle \xv, \ev_i \rangle^2$ (by abuse of notation we will use the same symbols for a proposition and the corresponding random variable). Since $0\leq \langle \xv, \ev_i \rangle^2\leq 1$, the representations induce a categorical distribution over the dictionary $\{\yv_i\}_{i\in\mathcal{I}}$. More general propositions $\qv$ over the $c$ labels can be written as a disjunction of the conjunctions  $\{\yv_i\}_{i\in\mathcal{I}}$ \textit{i.e.}, in the disjunctive normal form (DNF).


\begin{lemma}[Propositions as projections]
    Let $\Yv\in\{0,1\}^{c\times n}$ verify the conditions of Lemma \ref{lemma:orthogonal_vs}. Given $\xv\in\mathbb{R}^d$, with $\|\xv\|_2 = 1$, define the posterior probability of the Bernoulli variable associated with the minterm $\yv_i$ as $P(\yv_i | \xv) := \langle \xv, \ev_i \rangle^2$. Then, $\forall\qv$ such that $\qv = \bigvee_{i\in\mathcal{J}} \yv_i$, for some $\mathcal{J}\subseteq\mathcal{I}$, there is a unique projection operator $\Pv_q$ such that $P(\qv|\xv) = \langle \xv, \Pv_q \xv \rangle$.
    \label{lemma:projections_are_propositions}
\end{lemma}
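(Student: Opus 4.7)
The plan is to exploit that the minterms $\{\yv_i\}_{i\in\mathcal{I}}$ are the atoms of a finite Boolean algebra and are thus mutually exclusive as propositions. I would first observe that, because distinct minterms are logically inconsistent, finite additivity on disjoint events gives
\begin{equation}
P(\qv\mid\xv) \;=\; P\Bigl(\bigvee_{i\in\mathcal{J}} \yv_i \,\Big|\, \xv\Bigr) \;=\; \sum_{i\in\mathcal{J}} P(\yv_i \mid \xv) \;=\; \sum_{i\in\mathcal{J}} \langle \xv, \ev_i\rangle^2,
\end{equation}
consistent with the Bernoulli specification of the lemma.

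Next, I would propose as explicit candidate $\Pv_q := \sum_{i\in\mathcal{J}} \ev_i \ev_i^\top$, which is the orthogonal projector onto the subspace $\mathcal{S}_q := \mathrm{span}\{\ev_i : i\in\mathcal{J}\} \subseteq \mathbb{R}^c$. By Lemma~\ref{lemma:orthogonal_vs}, $\{\ev_i\}_{i\in\mathcal{I}}$ is orthonormal, so $\ev_i^\top \ev_j = \delta_{ij}$. From this it is immediate that $\Pv_q^\top = \Pv_q$ and $\Pv_q^2 = \sum_{i,j\in\mathcal{J}} \ev_i(\ev_i^\top \ev_j)\ev_j^\top = \Pv_q$, confirming that $\Pv_q$ is an orthogonal projection. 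Expanding the quadratic form then yields $\langle \xv, \Pv_q \xv\rangle = \sum_{i\in\mathcal{J}} \langle \xv, \ev_i\rangle^2 = P(\qv\mid\xv)$, establishing existence.

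For uniqueness, I would invoke the standard fact that a self-adjoint operator is determined by its quadratic form (via the polarization identity): if $\Pv, \Pv'$ are orthogonal projections with $\langle \xv, \Pv\xv\rangle = \langle \xv, \Pv'\xv\rangle$ for every $\xv$, then $\Pv = \Pv'$. The main subtlety — though a mild one — is ensuring that $P(\qv\mid\xv)$ does not depend on the particular DNF chosen to represent $\qv$; this follows from the uniqueness of atoms in a finite Boolean algebra, which forces the index set $\mathcal{J}$, and hence $\mathcal{S}_q$, to depend only on the proposition $\qv$ itself.
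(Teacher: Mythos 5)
Your proof is correct and follows essentially the same route as the paper's: decompose $P(\qv|\xv)$ over the mutually exclusive minterms via finite additivity and recognize the resulting sum $\sum_{i\in\mathcal{J}}\ev_i\ev_i^\top$ as the orthogonal projector onto $\mathrm{span}\{\ev_i : i\in\mathcal{J}\}$, using the orthonormality supplied by Lemma \ref{lemma:orthogonal_vs}. Your explicit uniqueness argument via the polarization identity (together with the homogeneity that extends the identity from the unit sphere to all of $\mathbb{R}^d$) is a welcome addition, since the paper asserts uniqueness in the statement but does not argue it in its proof.
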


If given $\xv$, proposition $\qv$ holds, $\xv$ lies in the corresponding subspace \textit{i.e.}, $\Pv_q \xv =\xv$ and $P(\qv|\xv)=1$. Conversely, if $\qv$ is false then $\Pv_q \xv = 0$ and $P(\qv|\xv)=0$. The angle between the proposition subspace and the representation determines the probability $P(\qv|\xv)=\langle \xv, \Pv_q \xv \rangle$. Therefore, this formulation allows us to update representations via projections. If new information implies that $\qv$ holds for $\xv$, we consider the new embedding $\Pv_{q} \xv$. The same reasoning applies to disjunction, conjunction or negation of propositions, with $\Pv_{p \land q} = \Pv_{p} \Pv_{q}$, $\Pv_{p \lor q} = \Pv_{p}  + \Pv_{q} - \Pv_{p} \Pv_{q}$ and $\Pv_{\neg q} = \Iv - \Pv_{q}$. 

The set of subspaces of a vector space forms a lattice under subspace inclusion \cite{vonneumann}. Given subspaces $\mathcal{S}_p$ and $\mathcal{S}_q$, $\mathcal{S}_p \wedge \mathcal{S}_q$ is the set-theoretic intersection \textit{i.e.}, the greatest lower bound or \textit{meet}. The least upper bound or \textit{join}, $\mathcal{S}_p \lor \mathcal{S}_q$ is given by the closure of the sum $\mathcal{S}_p + \mathcal{S}_q$. From the one-to-one correspondence between projections and closed subspaces, the lattice structure has an algebraic characterization as $\mathcal{S}_p \leq \mathcal{S}_q \Leftrightarrow \Pv_p \leq \Pv_q \Leftrightarrow \Pv_p = \Pv_p \Pv_q$. If the projections commute then $\Pv_p \wedge \Pv_q = \Pv_p \Pv_q$ and $\Pv_p \lor \Pv_q = \Pv_p + \Pv_q - \Pv_p \Pv_q$. The \textit{largest} projection is the identity $\Iv$, and the smallest one is the zero operator $\mathbf{0}$. We have that $\Pv_p \lor \Pv_p^\perp = \Iv$, $\Pv_p \wedge \Pv_p^\perp = \mathbf{0}$ and $\Pv_p = \Pv_p^{\perp\perp}$, where $\Pv_p^\perp = \Iv - \Pv_p$ is called the orthocomplement of $\Pv_p$. Every subspace lattice is orthomodular \cite{Holland1975}, which is a weaker version of the distributive property of Boolean algebras. In our case, the subspace lattice is Boolean since all the projections share the eigenbasis $\{\ev_i\}_{i\in\mathcal{I}}$ and thus commute.

\begin{figure*}
\begin{subfigure}{0.24\linewidth}
    \includegraphics[trim={0.0cm 0.0cm 0.0cm 0},clip,width=\linewidth]{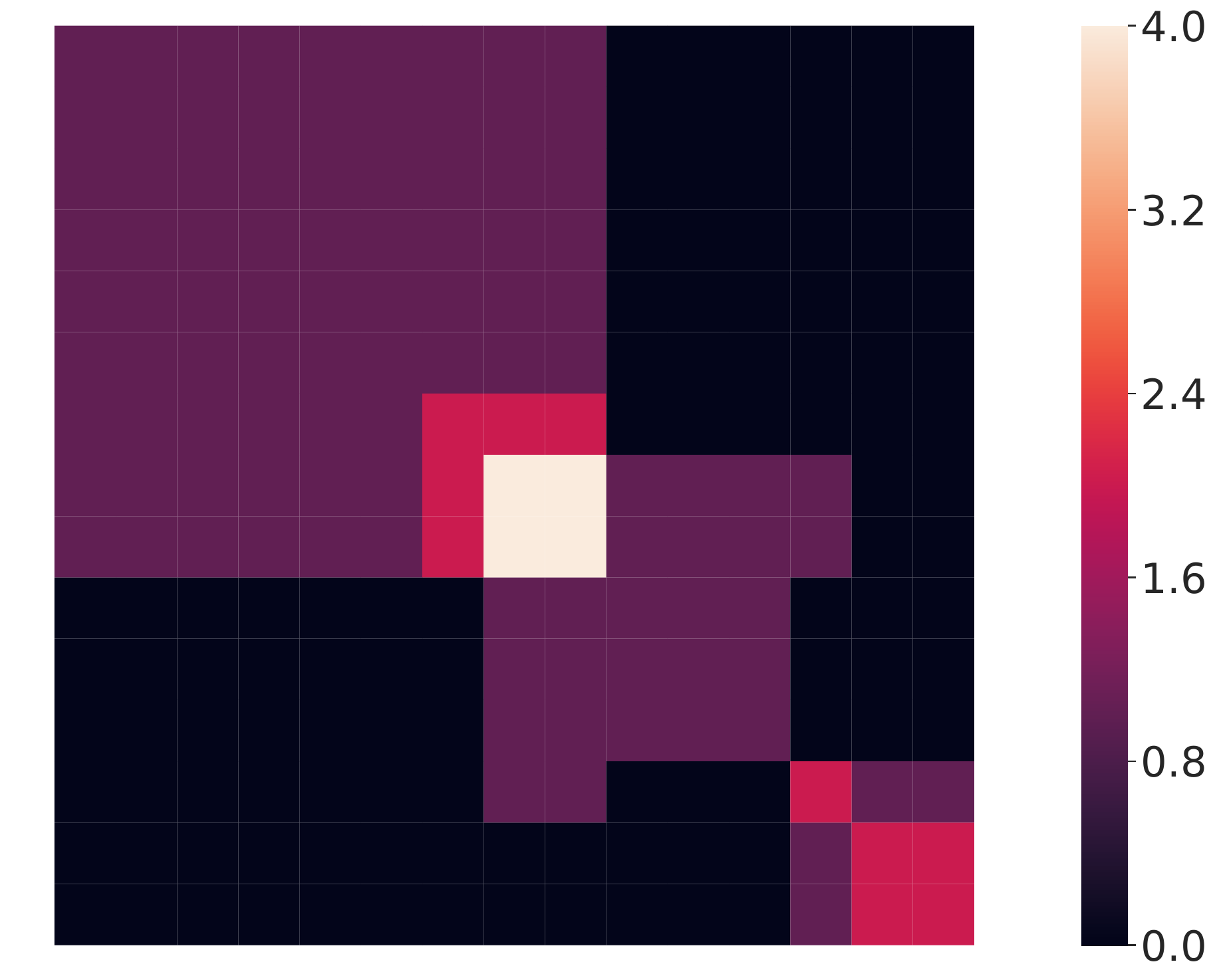}
    \caption{$\Yv^\top \Yv$}
    \label{fig:multilabel_gram_comparison:a}
\end{subfigure}
\hfill
\begin{subfigure}{0.24\linewidth}
    \includegraphics[width=\linewidth]{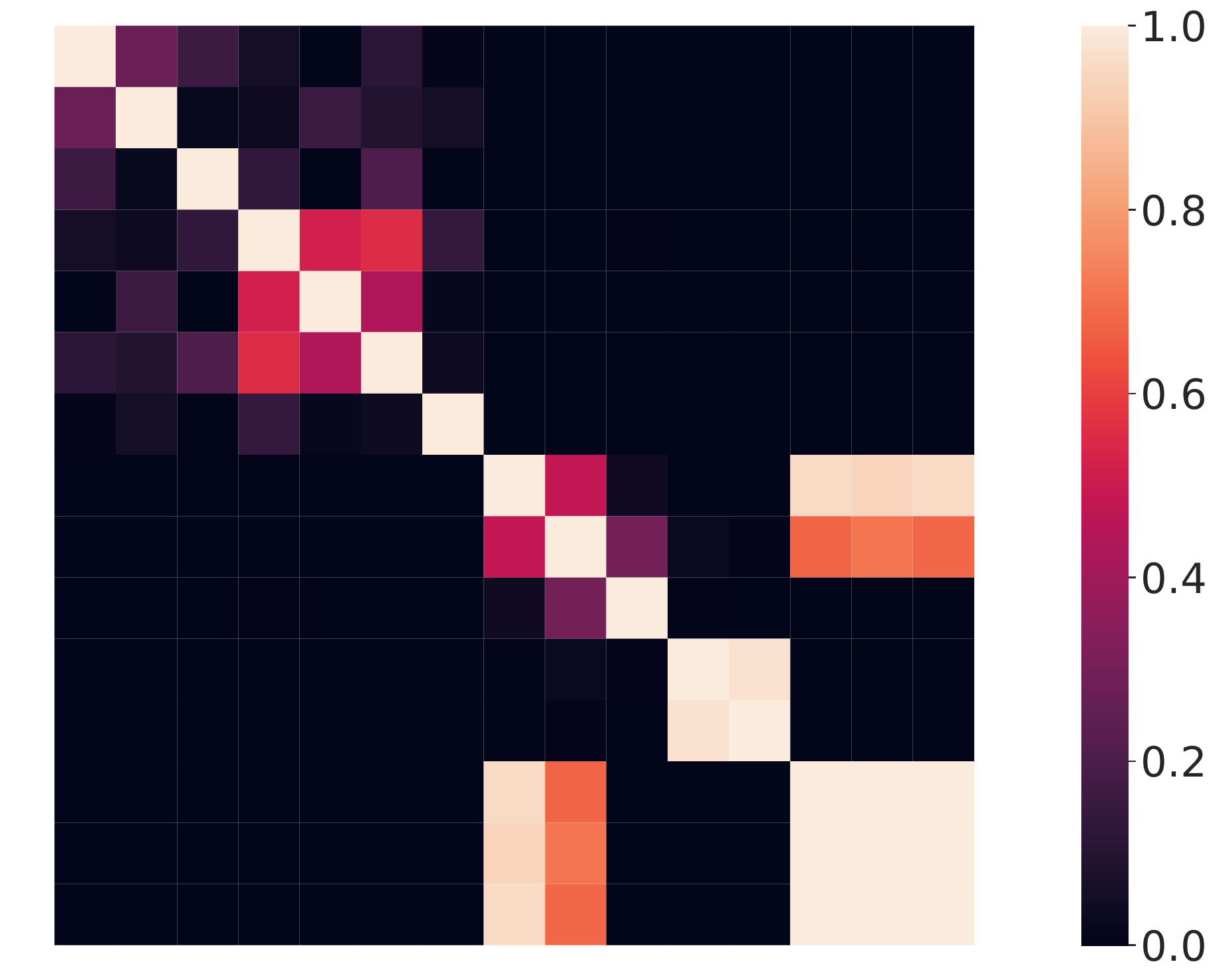}
    \caption{OLÉ}
     \label{fig:multilabel_gram_comparison:b}
\end{subfigure}
\hfill
\begin{subfigure}{0.24\linewidth}
    \includegraphics[width=\linewidth]{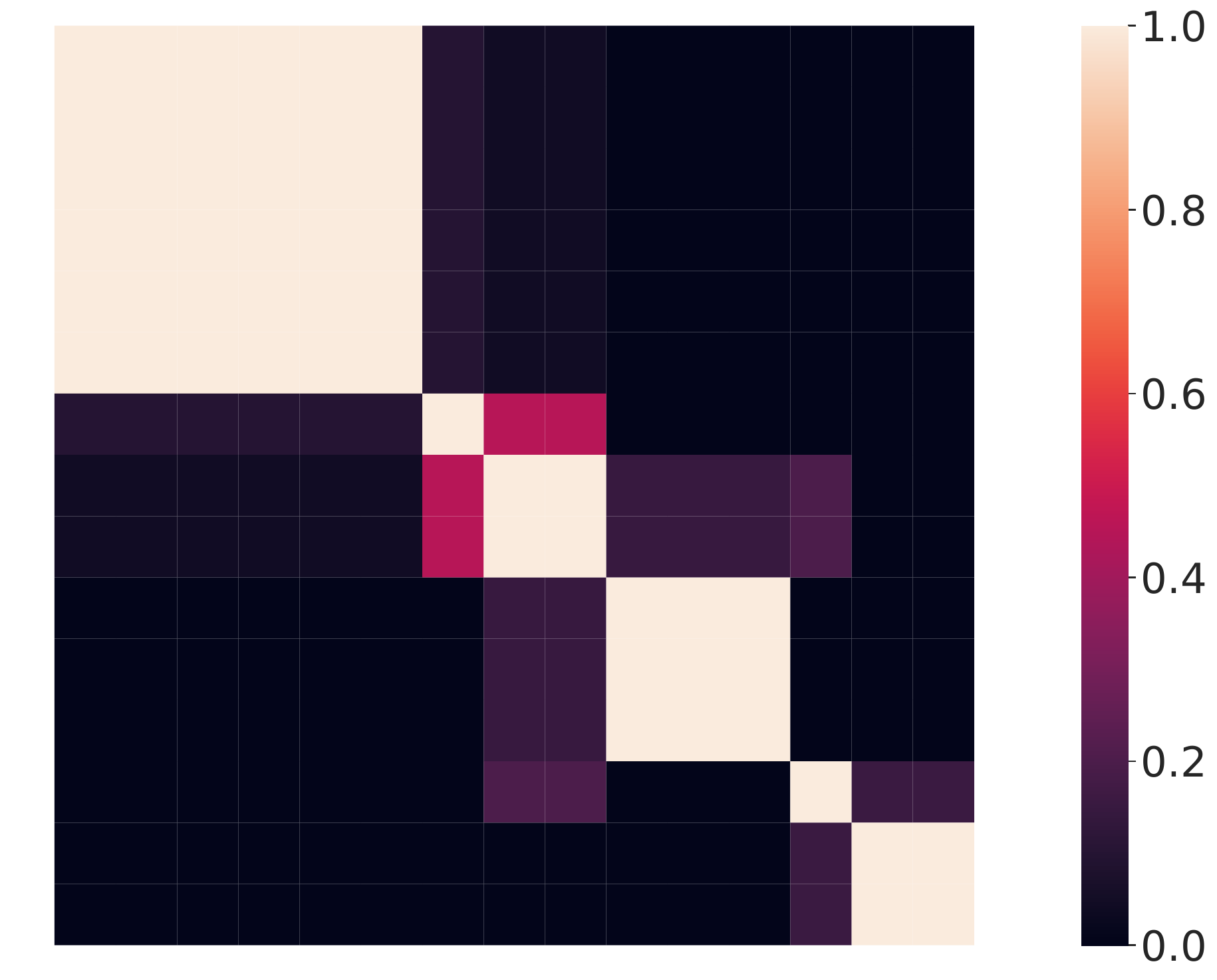}
    \caption{MMCR}
     \label{fig:multilabel_gram_comparison:c}
\end{subfigure}
\begin{subfigure}{0.24\linewidth}
    \includegraphics[width=\linewidth]{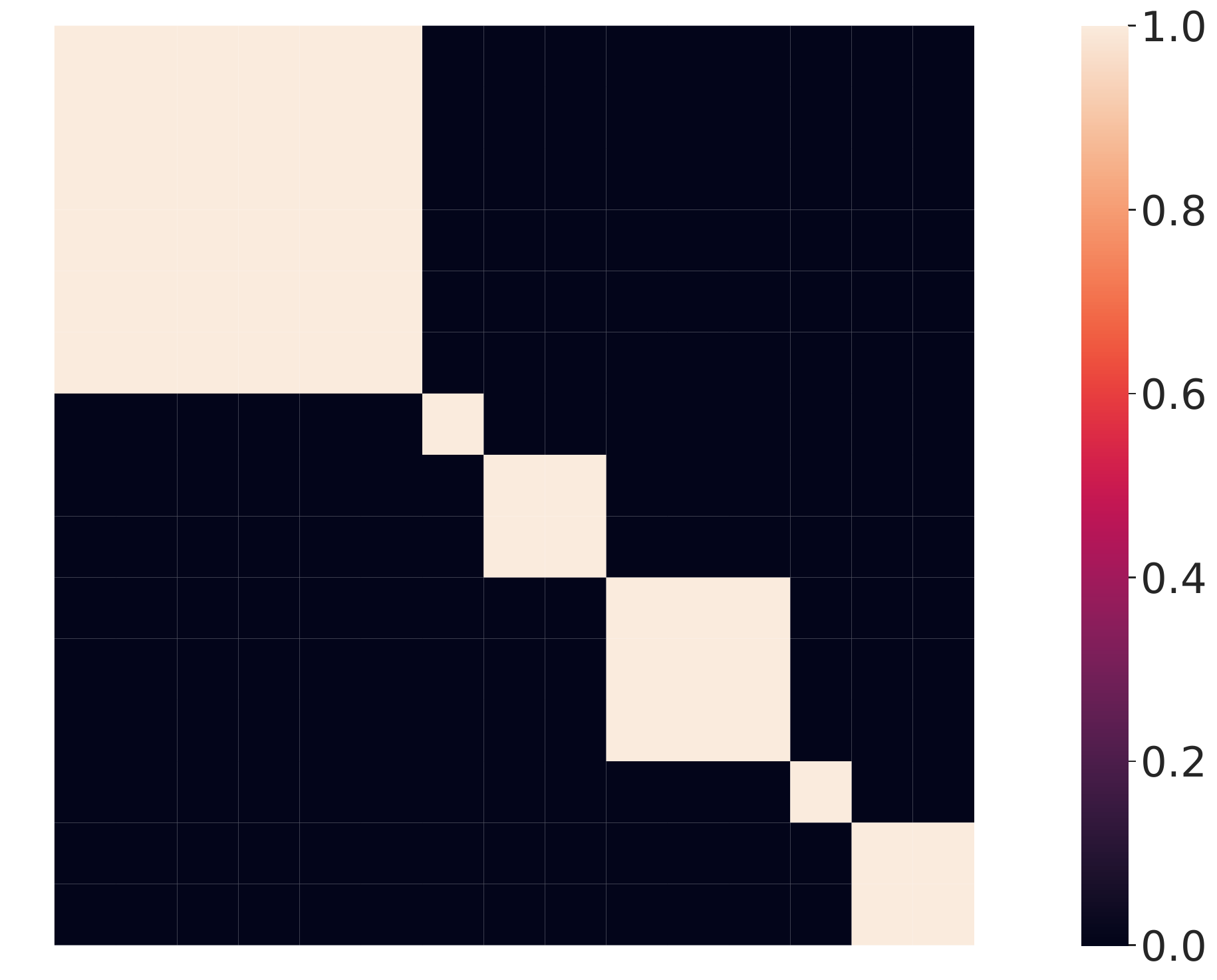}
    \caption{Ours}
     \label{fig:multilabel_gram_comparison:d}
\end{subfigure}%
\caption{Gram matrix of $\Yv \in \{0,1\}^{6\times 15}$ and of the representations optimized with OLÉ, MMCR and our loss.}
\label{fig:multilabel_gram_comparison}
\end{figure*}

\section{EXPERIMENTS}
\label{sec:experiments}
Given the fundamental differences in how representations are learned and inference conducted, it is important to verify that the unique properties of our methodology do not come at the expense of performance in canonical benchmarks. Thus, we first evaluate our approach in standard classification and then present results for retrieval from propositional queries. In both cases, we only train encoder models, not employing linear probes. At inference time, proposition probabilities are derived from the geometry of the representations (Lemma \ref{lemma:projections_are_propositions}). Our results show that the proposed loss can surpass traditional cross-entropy training in standard classification. In multi-label settings, our method captures the true semantic geometry, unlike MMCR or OLÉ, which allows for image retrieval from complex propositional queries.

\subsection{Implementation Details}
We implemented the loss (\ref{eq:the_loss}) in PyTorch, using the subdifferential of the nuclear norm \cite{recht2010guaranteed} $\Uv_X \Vv_X^\top \in \partial_{\Xv} \|\Xv\|_\ast$ as the descent direction in SGD. While the nuclear norm is non-smooth, we observed empirical convergence in the all the experiments conducted, akin to prior works. All experiments were performed on a Tesla T4 GPU with 16GB of memory. During training, the batched output of the image encoder is plugged directly in (\ref{eq:the_loss}), without any normalization or centering. The subspace representation of each proposition was computed from the training data, via the SVD of the embedding matrix of images for which the proposition holds. If $\Xv$ is the matrix of all the representations from the training set that verify $\qv$, then $\qv$ is represented by the subspace spanned by the singular vectors of $\Xv$ corresponding to the largest singular values. These are the only non-null singular values if (\ref{eq:the_loss}) attains its minimum. 

In order for the label matrix $\Yv$ to have full row-rank equal to the number of minterms, as required by Theorem \ref{theorem:min_nuclear_norm_ell2_squared}, we construct a minterm batch sampler so that every batch has as many minterms as labels, with each minterm having the same number of samples in the batch.

\subsection{Synthetic Experiments}
\label{sec:synthetic_experiments}
In order to shed light on the differences between our approach and the nuclear norm-based losses from OLÉ and MMCR, we optimized representations $\Xv$ for synthetic binary label matrices $\Yv$. Since all methods, ours included, provide theoretical guarantees for the orthogonality of disjoint classes, we focus here on the multi-label case. In Fig. \ref{fig:multilabel_gram_comparison} we plot the Gram matrix of a label matrix $\Yv^\top \Yv \in \{0,1\}^{n\times n}$, and the Gram matrices of the $\ell_2$-normalized solutions $\Xv^\top \Xv \in \mathbb{R}^{n\times n}$. OLÉ, MMCR were implemented as described in the respective papers, with MMCR requiring embedding normalization and OLÉ using a lower bound on the nuclear norm to avoid collapse during training. The hyperparameters of our loss (\ref{eq:the_loss}) were set to $\alpha=0.99$ and $\beta=0.7$, with optimization performed via gradient descent. In Fig. \ref{fig:multilabel_gram_comparison:d}, we observe that our representations verify $\forall\yv_i\neq \yv_j \implies \langle \xv_i, \xv_j\rangle = 0$ \textit{i.e.}, different minterms correspond to orthogonal directions, as consequence of Theorem \ref{theorem:min_nuclear_norm_ell2_squared} and Lemma \ref{lemma:orthogonal_vs}. Thus, our approach generalizes the disjoint classes setting, unlike the others methods considered, which converge to representations with no clear interpretation. In Appendix \ref{sec:app_synthetic_experiments}, we present additional experiments, showing that in standard classification, all three methods represent disjoint classes as orthogonal subspaces, as predicted theoretically. We also plot the convergence of the singular values of the our representations during training, which are in accordance with Theorem \ref{theorem:min_nuclear_norm_ell2_squared}.

\subsection{Standard Classification}
We report the classification performance of our method on MNIST \cite{deng2012mnist}, FashionMNIST \cite{xiao2017fashion}, CIFAR-10 and CIFAR-100 \cite{krizhevsky2009learning}, comparing it with the standard approach of using linear classifiers and optimizing a cross-entropy loss. 

We conducted experiments with two backbones, a standard ConvNet with 2 convolutional layers and a ResNet-18 \cite{he2016deep}, both with leaky ReLU activations and a fully connected output layer with dimension equal to the number of classes. Minimization of (\ref{eq:the_loss}) was carried out via SGD, with a batch size of 512 and no weight decay. For CIFAR-10 and CIFAR-100, we applied the standard data augmentations, including random cropping with a 4-pixel padding resized to $32\times 32$ and random horizontal flipping. Full training details can be consulted in Appendix \ref{sec:app_classification}. We report the results in Table \ref{tab:standard_classification}, which show that we perform on par with, or better than, the \textit{de facto} classification approach of using linear probes and optimizing a cross-entropy loss. To provide further insight into the geometry of the representation space, we plot in Fig. \ref{fig:inner_products_results} the squared inner products between the 1-dimensional subspaces of each class, and the squared inner products between the $\ell_2$-normalized embeddings of the test set, for three of the datasets. Notice that, in each dataset, the set of 1-d subspaces representing the different classes forms an orthonormal basis for the corresponding representation space, as predicted theoretically.

\begin{table}[]
    \centering
    \caption{Classification accuracy}
    \begin{tabular}{cccc}
    \toprule
        Dataset & Backbone & Ours & CrossEnt.\\
        \midrule
        \multirow{2}{*}{MNIST}         & ConvNet   & 0.992 & 0.992 \\
                                       & ResNet-18 & \textbf{0.996} & 0.995 \\
        \midrule
        \multirow{2}{*}{FashionMNIST} & ConvNet & \textbf{0.932}   & 0.930 \\
                                       & ResNet-18 & 0.935  & 0.935 \\
        \midrule
        CIFAR-10                       & ResNet-18 & \textbf{0.934} & 0.929\\
        \midrule
        CIFAR-100                      & ResNet-18  & \textbf{0.728} & 0.705 \\

    \bottomrule
    \end{tabular}
    \label{tab:standard_classification}
\end{table}

\begin{figure}
    \centering
    \begin{subfigure}[t]{0.32\linewidth}
    \includegraphics[width=\linewidth]{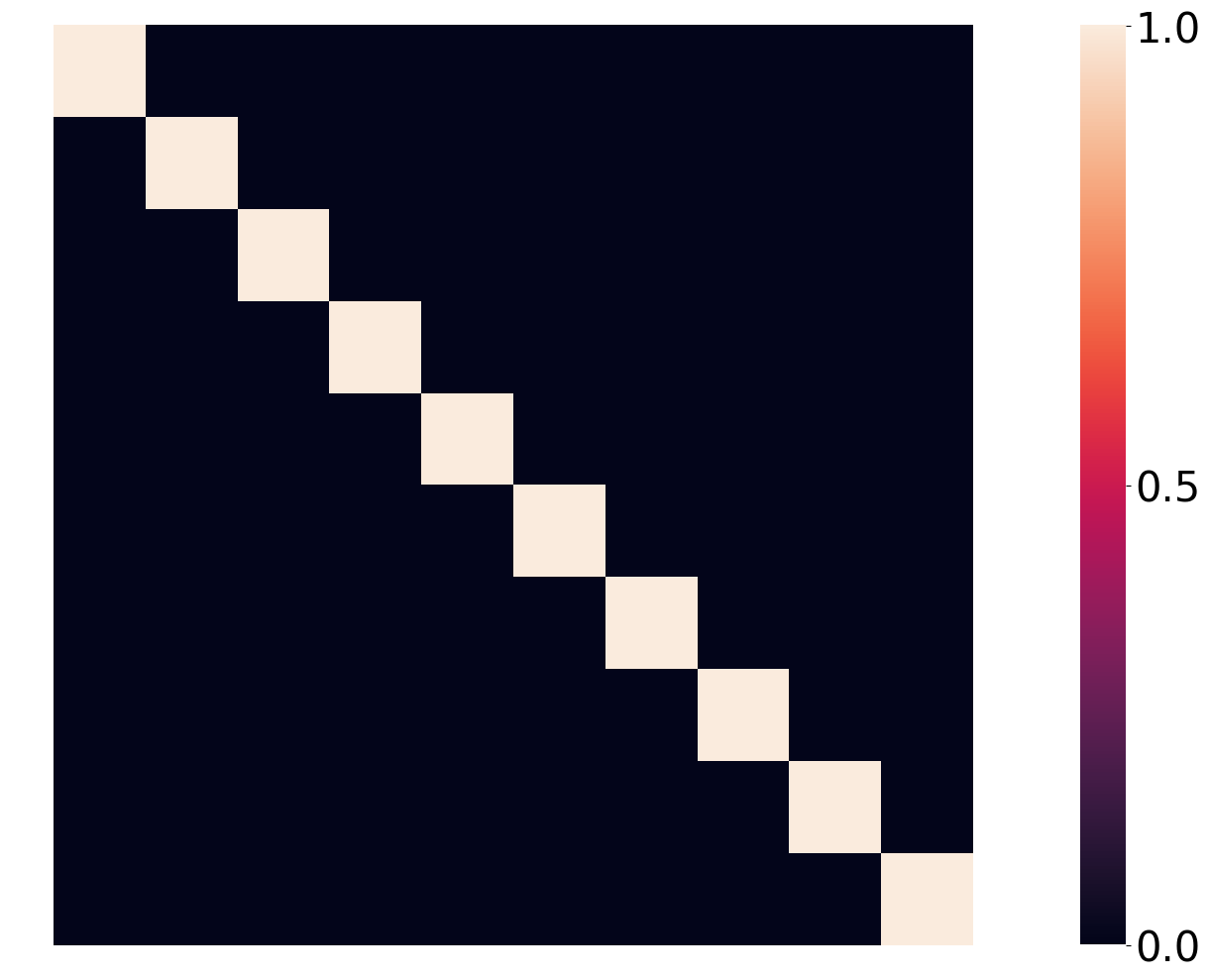}
      \caption{FashionMNIST}
      \label{fig:fashionmnist_classes_inner}
    \end{subfigure}
    \hfill
    \begin{subfigure}[t]{0.32\linewidth}
    \includegraphics[width=\linewidth]{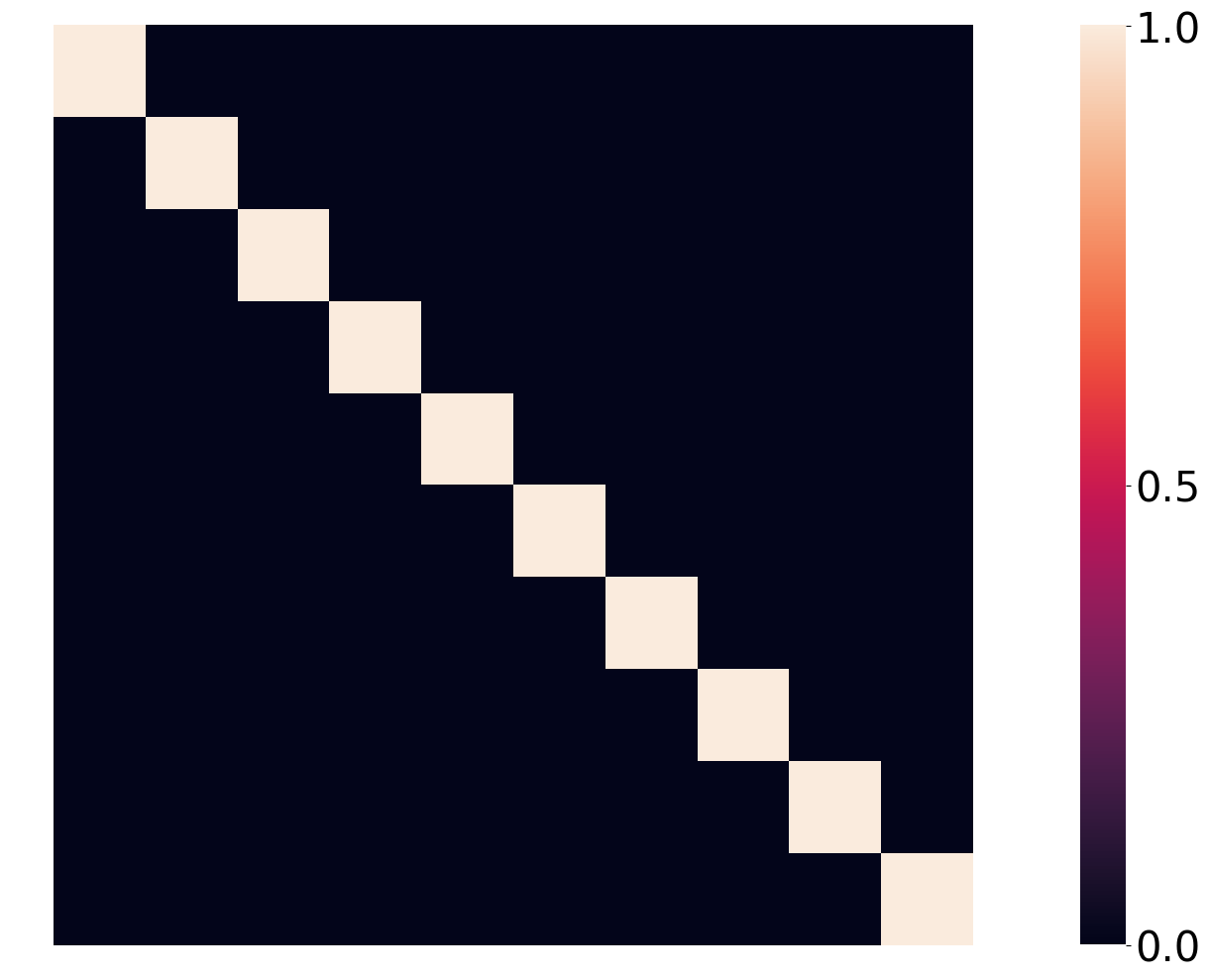}
      \caption{CIFAR-10}
    \label{fig:cifar10_minterms_inner}
    \end{subfigure}%
    \hfill
    \begin{subfigure}[t]{0.32\linewidth}
    \includegraphics[width=\linewidth]{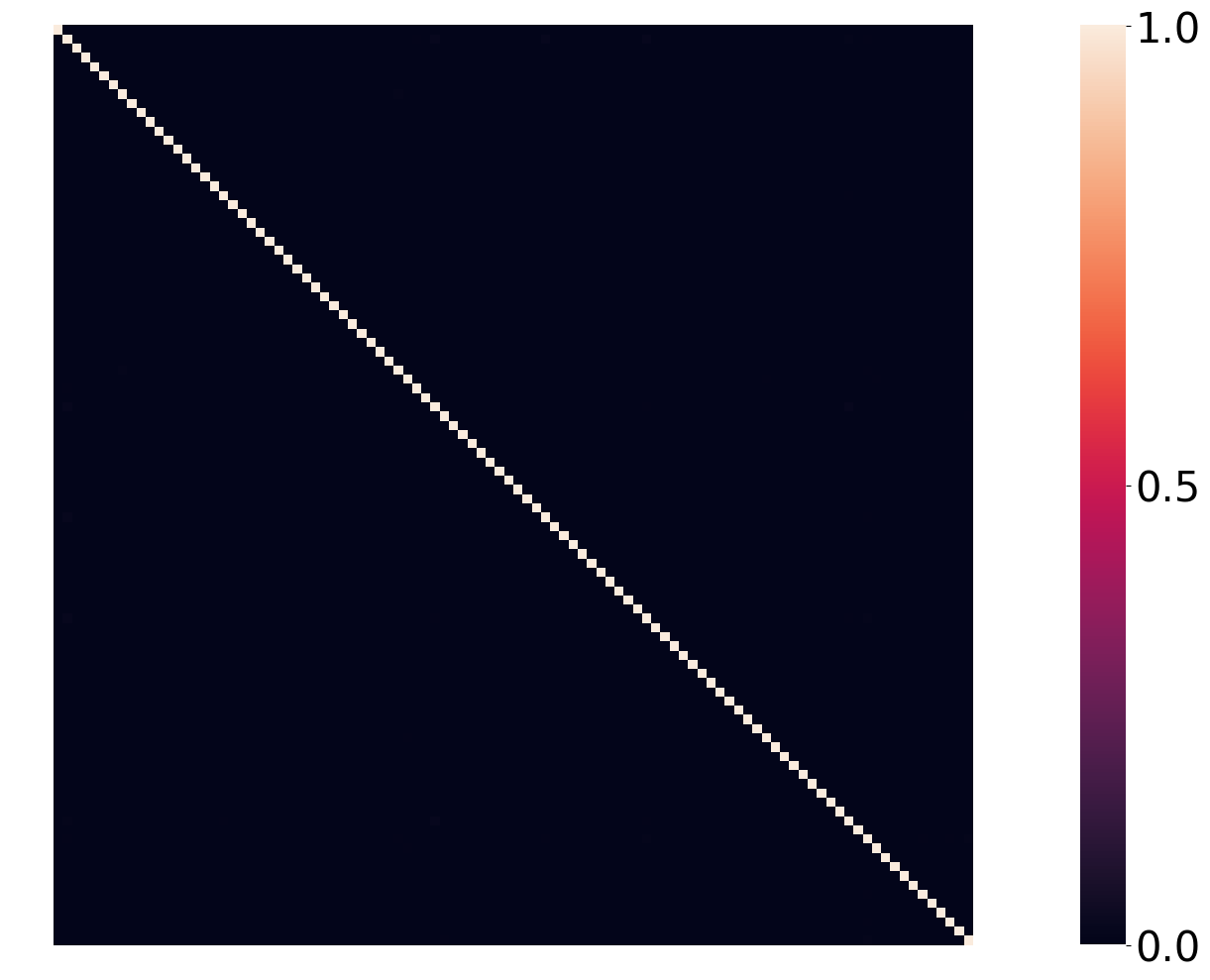}
      \caption{CIFAR-100}
    \label{fig:cifar100_minterms_inner}
    \end{subfigure}%

    \medskip  %
    \begin{subfigure}[t]{0.32\linewidth}
    \includegraphics[width=\linewidth]{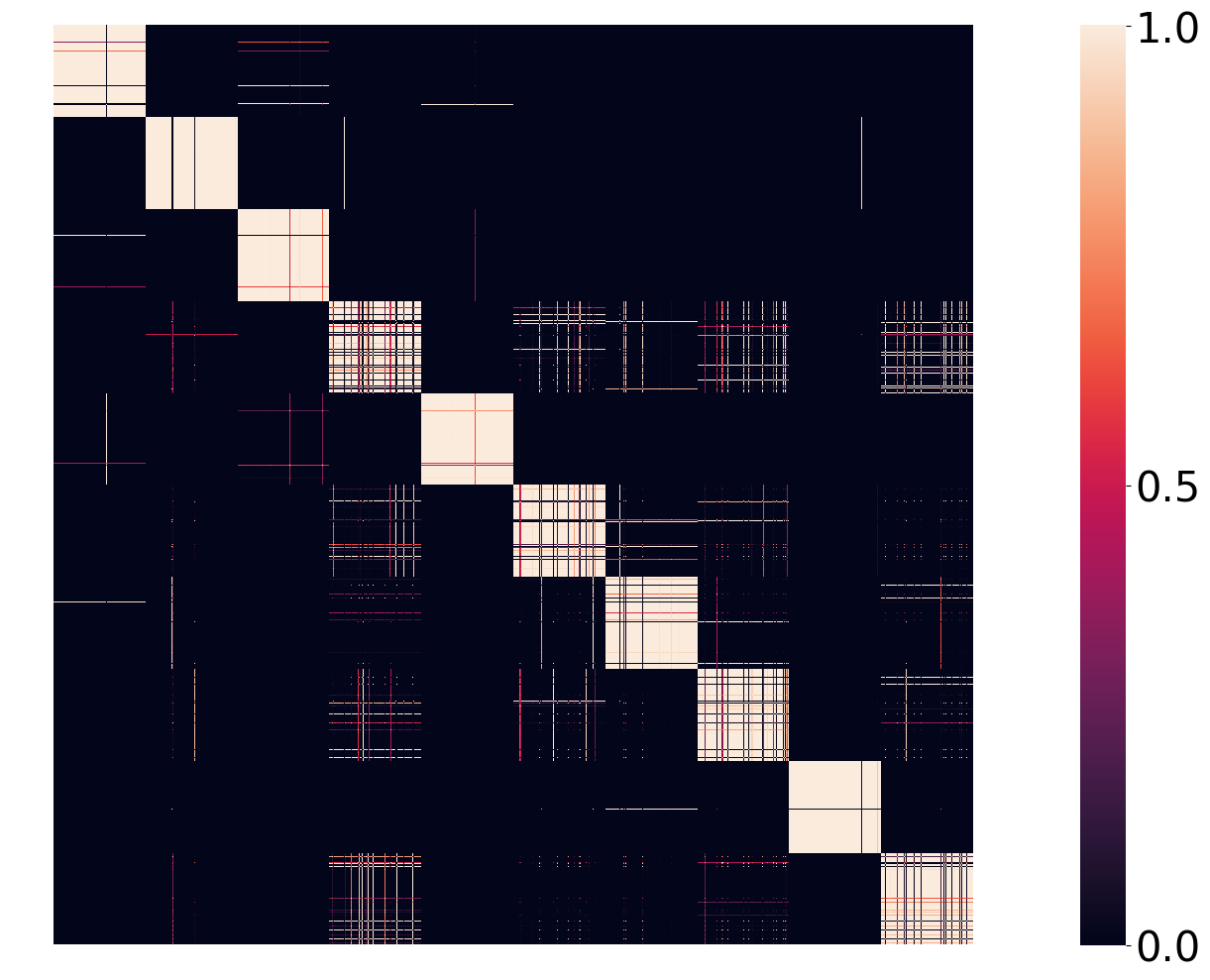}
      \caption{FashionMNIST}
      \label{fig:fashionmnist_classes_inner}
    \end{subfigure}
    \hfill
    \begin{subfigure}[t]{0.32\linewidth}
    \includegraphics[width=\linewidth]{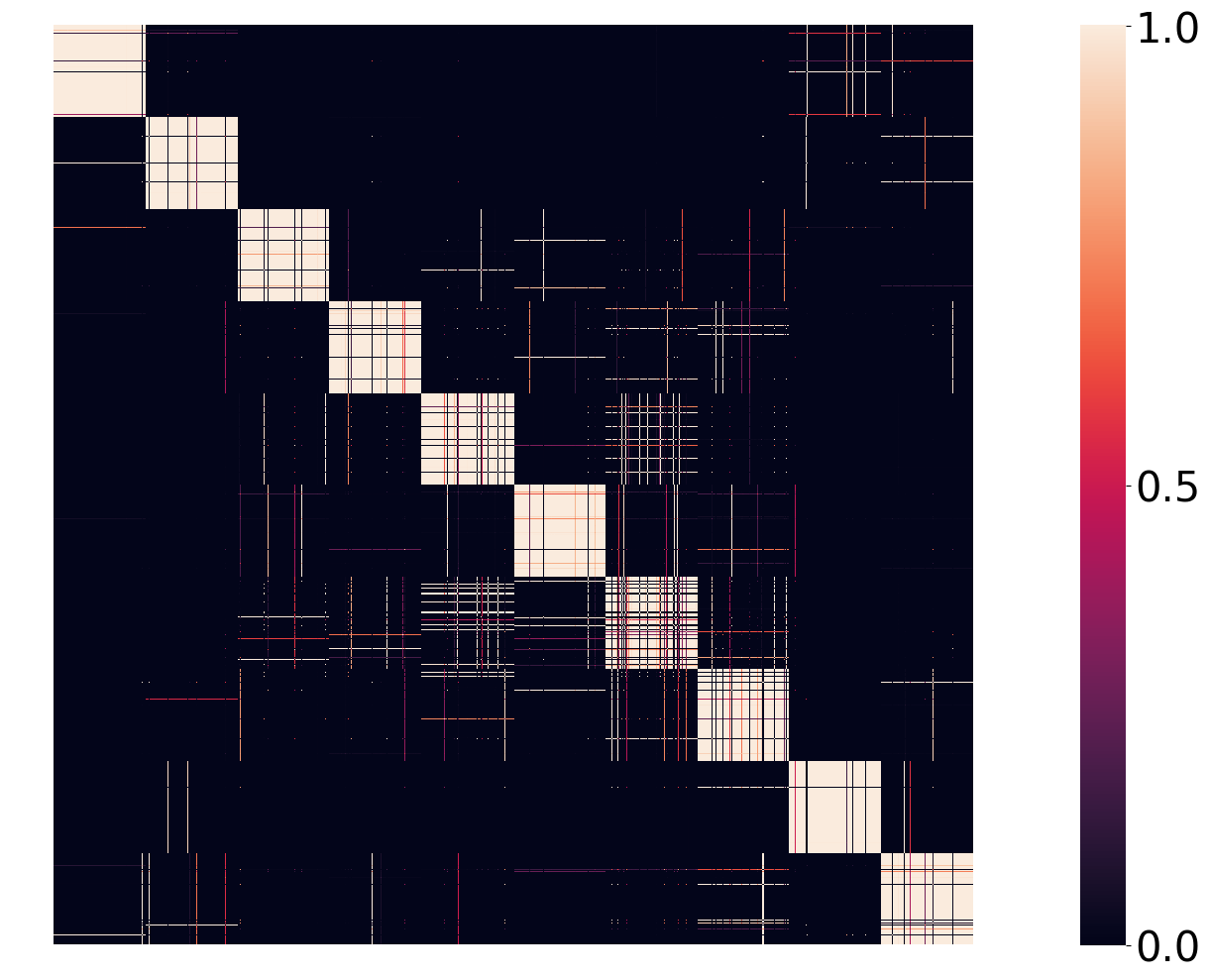}
      \caption{CIFAR-10}
    \label{fig:cifar10_minterms_inner}
    \end{subfigure}%
    \hfill
    \begin{subfigure}[t]{0.32\linewidth}
    \includegraphics[width=\linewidth]{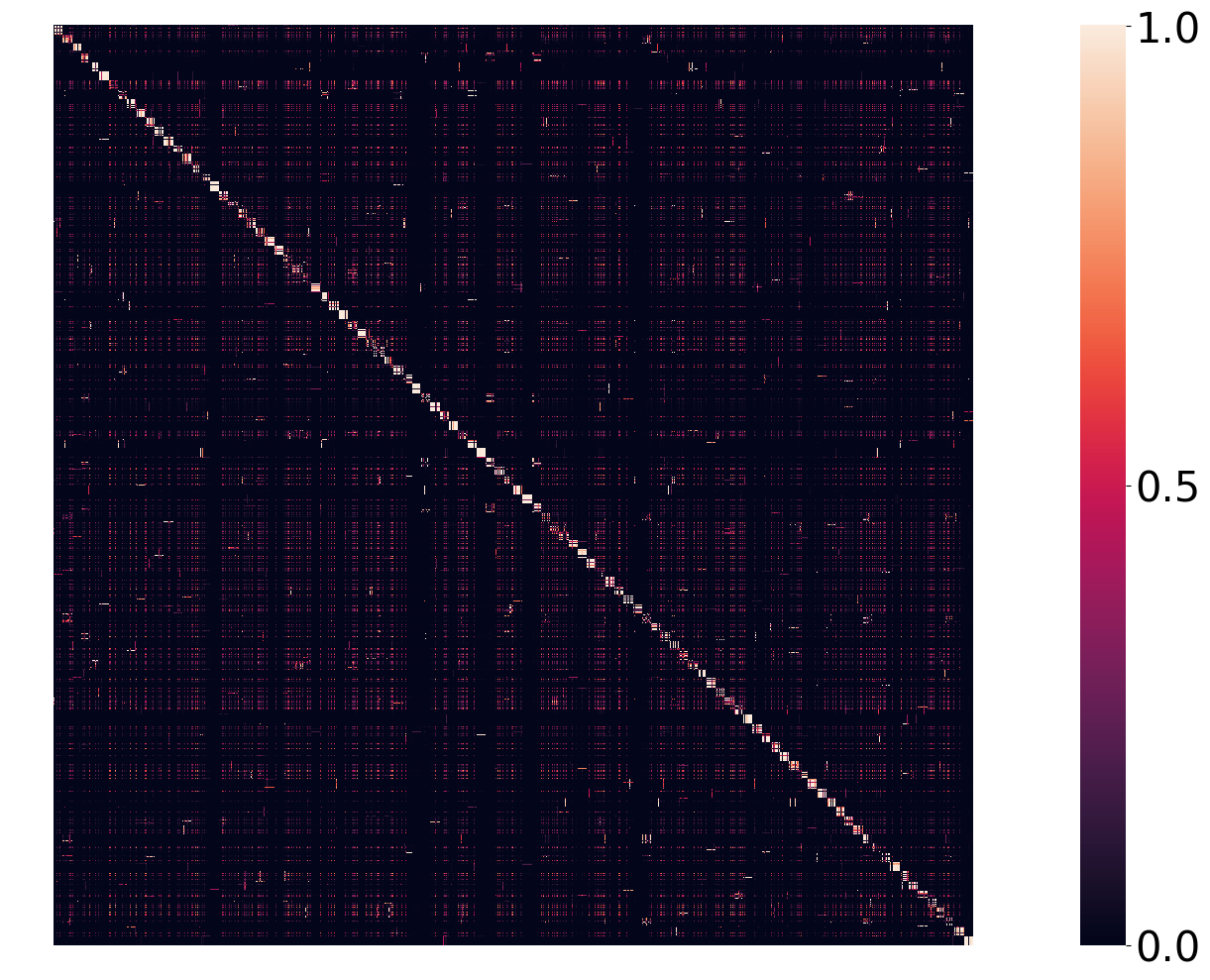}
      \caption{CIFAR-100}
    \label{fig:cifar100_minterms_inner}
    \end{subfigure}%
    \caption{\textbf{Top}: Inner products between the principal direction of each class. \textbf{Bottom}: Inner products between the unit $\ell_2$-norm test set embeddings.}
    \label{fig:inner_products_results}
\end{figure}


\begin{figure}
    \begin{subfigure}{\linewidth}
        \includegraphics[width=\linewidth]{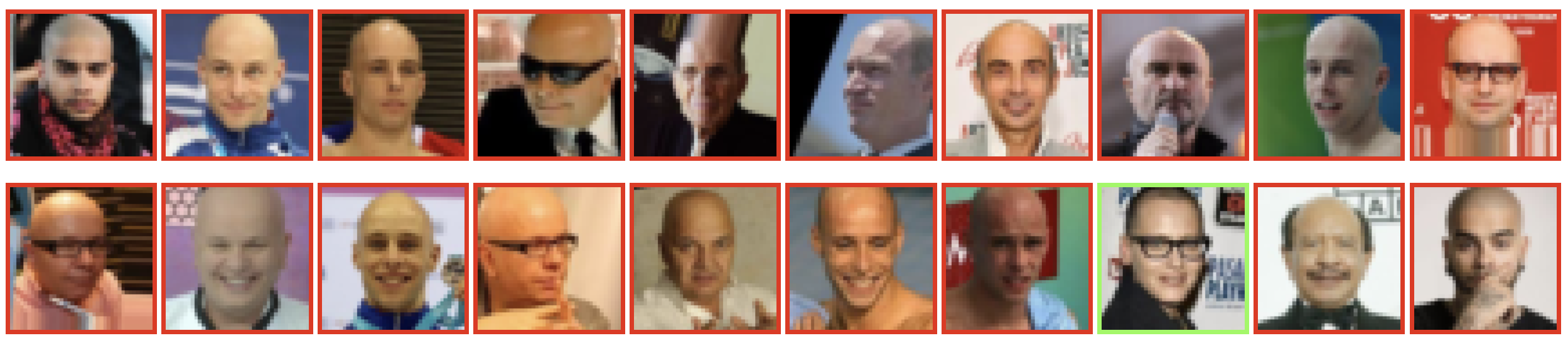}
        \caption{\textbf{CLIP}: \textit{A male person that is \textcolor{blue}{not bald}}}
        \label{fig:celeba_a}
    \end{subfigure}
    \medskip
    \begin{subfigure}{\linewidth}
        \includegraphics[width=\linewidth]{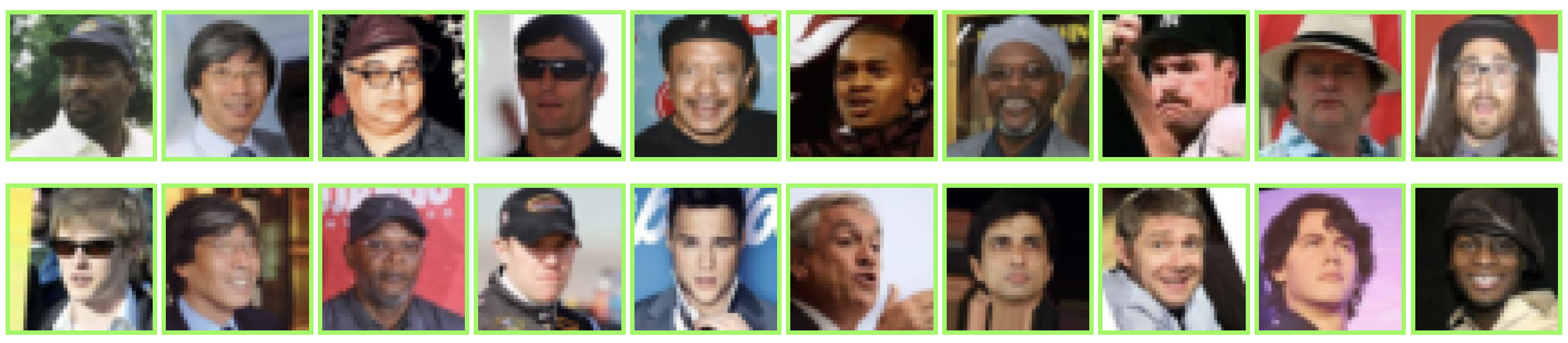}
        \caption{\textbf{Ours}: \texttt{Male} $\land\neg$ \texttt{Bald}}
        \label{fig:celeba_a}
    \end{subfigure}
    \medskip
    \begin{subfigure}{\linewidth}
        \includegraphics[width=\linewidth]{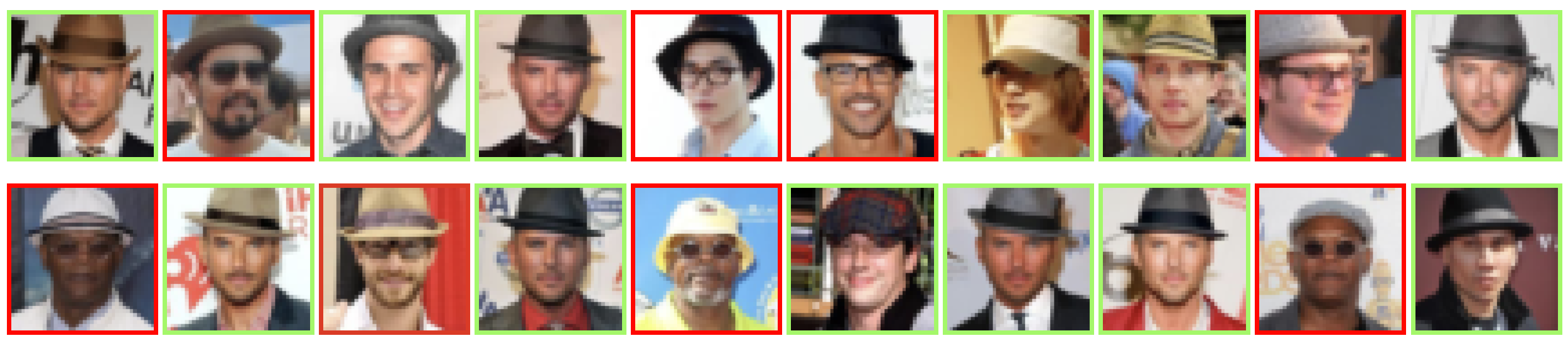}
        \caption{\textbf{CLIP}: \textit{A male person \textcolor{blue}{without eyeglasses} and wearing a hat}}
        \label{fig:celeba_a}
    \end{subfigure}
    \medskip
    \begin{subfigure}{\linewidth}
        \includegraphics[width=\linewidth]{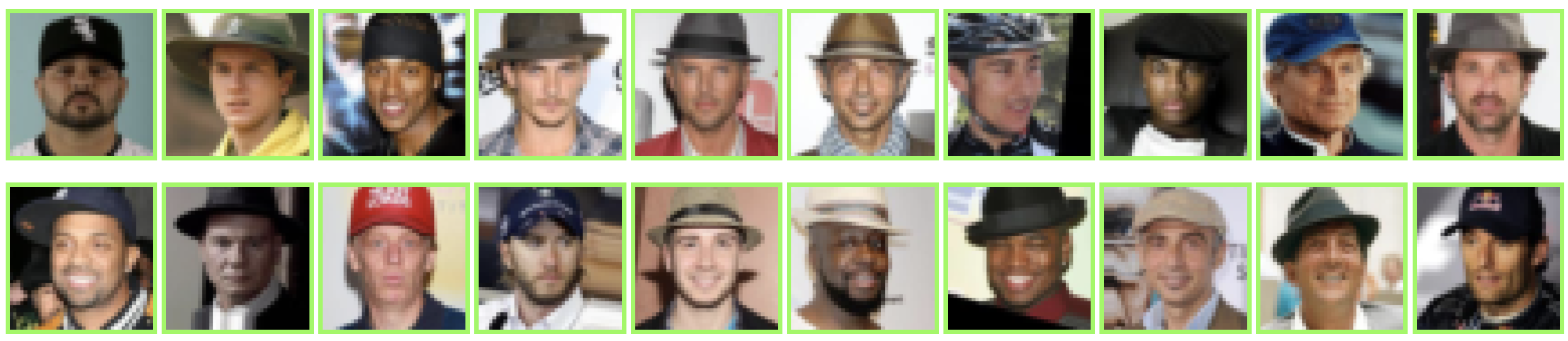}
        \caption{\textbf{Ours}: \texttt{Male} $\land$ \texttt{Wearing Hat} $\land\neg$ \texttt{Eyeglasses}}
        \label{fig:celeba_a}
    \end{subfigure}
    \medskip
    \begin{subfigure}{\linewidth}
        \includegraphics[width=\linewidth]{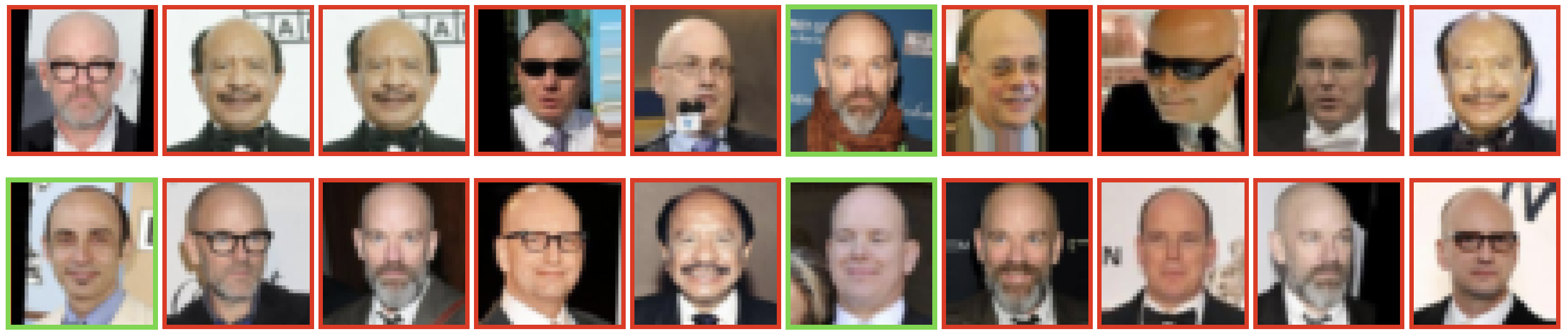}
        \caption{\textbf{CLIP}: \textit{A bald male person \textcolor{blue}{without eyeglasses and not wearing a necktie}}}
        \label{fig:celeba_a}
    \end{subfigure}
    \medskip
    \begin{subfigure}{\linewidth}
        \includegraphics[width=\linewidth]{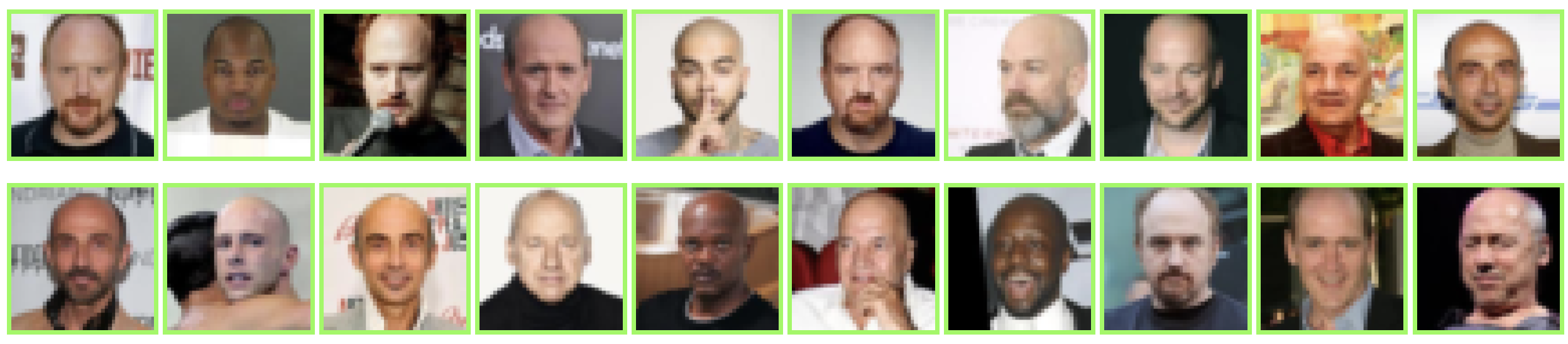}
        \caption{\textbf{Ours}: \texttt{Male} $\land$ \texttt{Bald} $\land\neg$ \texttt{Wearing Necktie} $\land\neg$ \texttt{Eyeglasses}}
        \label{fig:celeba_a}
    \end{subfigure}
    \medskip
    \begin{subfigure}{\linewidth}
        \includegraphics[width=\linewidth]{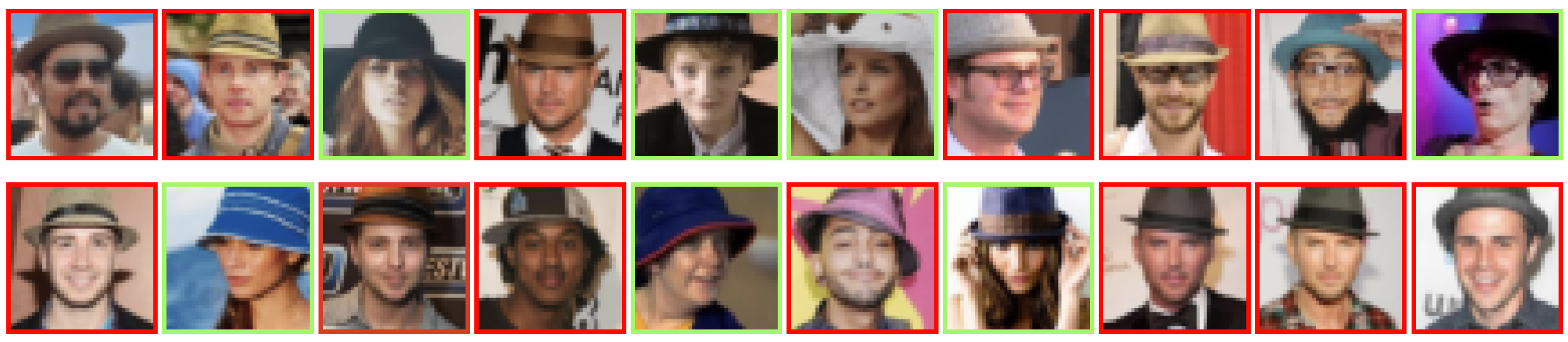}
        \caption{\textbf{CLIP}: \textit{A person that is \textcolor{blue}{not male} wearing a hat}}
        \label{fig:celeba_a}
    \end{subfigure}
    \medskip
    \begin{subfigure}{\linewidth}
        \includegraphics[width=\linewidth]{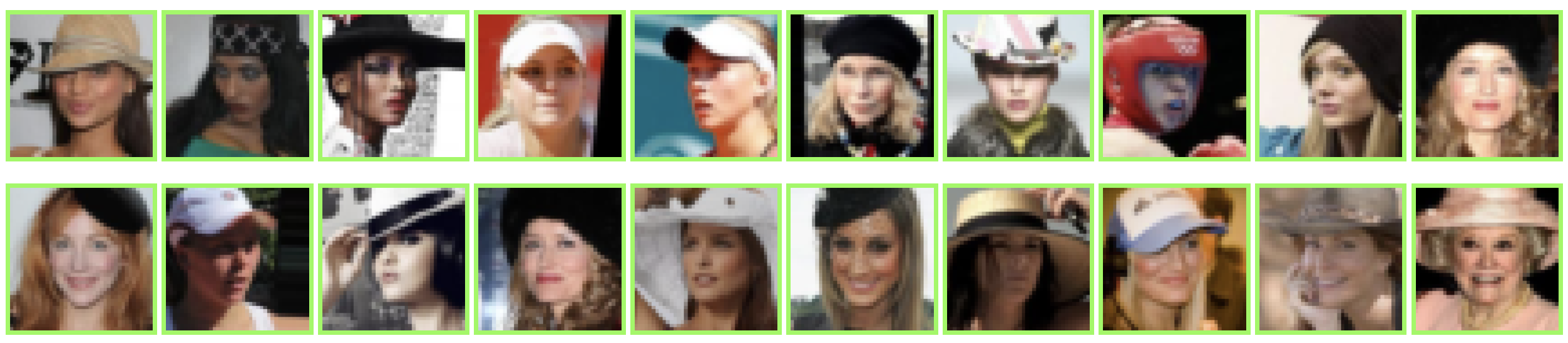}
        \caption{\textbf{Ours}: \texttt{Wearing Hat} $\land\neg$ \texttt{Male}}
        \label{fig:celeba_a}
    \end{subfigure}
    \caption{Ours vs CLIP's top-20 retrieved images from the test set of Celeb-A using propositional and the corresponding natural language queries.}
    \label{fig:celeba_retrieval}
\end{figure}

\subsection{Retrieval}
We showcase the applicability of our approach to the multi-label setting by using a subset of CelebA \cite{liu2015faceattributes} corresponding to five of the most consistent labels: $\texttt{Men}$, $\texttt{Bald}$, $\texttt{Eyeglasses}$, $\texttt{Wearing Hat}$, and $\texttt{Wearing Necktie}$. This subset comprises 72,900 training images, 8,952 validation images, and 8,303 test images, with 25 minterms. We used a ResNet-18 \cite{he2016deep} backbone with output dimension 25, optimized via SGD, with momentum set to $0.98$ and an initial learning rate of $10^{-4}$, decayed by 0.5 after 100 epochs. We used $\alpha=0.999$ and $\beta=0.01$. 

To evaluate the retrieval performance of our approach, we constructed propositional queries using subsets of the 5 labels as literals, along with the corresponding natural language counterparts. For example, the proposition with 3 literals $\texttt{Bald}\land\texttt{Male}\land\neg \texttt{Eyeglasses}$ translates to the query `\textit{a bald male person without eyeglasses}'. More examples are available in Appendix \ref{sec:app_retrieval}. In total, there are 242 queries, of which 171 have at least 10 examples in the test set. With these logical queries and their natural language counterparts, we compared our retrieval results against zero-shot (ZS) CLIP \cite{radford2021learning}, using the ViT-B/32 visual encoder. Motivated by \cite{yuksekgonul2022and}, we also present results for Bag-of-Words CLIP (BoW), where we prompt CLIP with an enumeration of the labels, without logical connectives. In the aforementioned example, the BoW query is simply `\textit{bald male eyeglasses}'. We report the mean average precision (mAP) and Precision@10 (Pr@10) in Table \ref{tab:retrieval}. Given prior observations that vision-language models struggle with negation, we present results separately for positive queries (no negated literals) and queries containing at least one negated literal. 

Our method consistently outperforms CLIP, with the performance gap widening for queries involving negation. As expected, CLIP performs similarly for natural language and BoW in the positive case. For queries with negated literals, removing logical connectives does not have a drastic impact on CLIP’s performance, suggesting that CLIP represents these queries in a manner that is closer to a bag-of-words, rather than a logically meaningful embedding. We present retrieval examples for multiple queries in Fig. \ref{fig:celeba_retrieval}, showing the top-20 retrieved images by our model and by CLIP. Green and red image borders indicate true and false positives, respectively.

\begin{table}[]
    \centering
    \caption{Celeb-A retrieval results}
    \begin{tabular}{ccccc}
    \toprule 
        \multirow{2}{*}{Model} & \multicolumn{2}{c}{Positive} & \multicolumn{2}{c}{w/ negations}\\
              & Pr@10 & mAP & Pr@10 & mAP \\
        \midrule
        Ours  & \textbf{0.93} & \textbf{0.75} & \textbf{0.88} & \textbf{0.79} \\
        ZS-CLIP  & 0.67 & 0.46 & 0.21 & 0.30 \\
        ZS-CLIP (BoW) & 0.66 & 0.48 & 0.11 & 0.24 \\
    \bottomrule
    \end{tabular}
    \label{tab:retrieval}
\end{table}

\section{CONCLUSION}
In this work, we introduced a novel approach for learning image representations that adhere to the semantic geometry of the data. Motivated by recent advances in self-supervised learning, our method is based on a new nuclear norm-based loss function, which we show to yield the spectral embeddings of the data. Our approach, being theoretically grounded, diverges both from contrastive learning approaches and methods that simply emphasize class orthogonality. Instead, we show that our representations form a subspace Boolean lattice, which facilitates the definition of probabilistic propositional queries through projection operators. We believe that our work can thus reveal new possibilities for handling complex retrieval tasks and multi-label classification. Finally, while we presented our approach targeting visual-semantic representations, the proposed loss is modality agnostic.   

\paragraph{Limitations and Future Work} While we provide theoretical guarantees for our approach, some limitations are worthy of mention. Firstly, the use of the nuclear norm leads to the non-smoothness of the loss function. This can make optimization not as straightforward as that of other losses, namely the cross-entropy or the log-determinant. Nevertheless, all the experiments conducted point to good convergence properties when employing the sub-differential as the descent direction. Secondly, and akin to similar works in representation learning, we did not investigate the convergence properties of the batched version of the proposed nuclear norm-based loss, relying uniquely on empirical validation. Finally, we consider our main contribution to be theoretical and additional experiments would showcase the full potential of the proposed methodology in multi-label classification and retrieval. 

\subsubsection*{Acknowledgements}
This work was supported by LARSyS funding (DOI: \nolinkurl{10.54499/LA/P/0083/2020}, \nolinkurl{10.54499/UIDP/50009/2020}, and \nolinkurl{10.54499/UIDB/50009/2020}), through Fundação para a Ciência e a Tecnologia. G Moreira was also supported via grant \nolinkurl{SFRH/BD/151466/2021} through the Carnegie Mellon Portugal program.  M Marques and J Costeira were also supported by the PT Smart Retail project (PRR - \nolinkurl{02/C05-i11/2024.C645440011-00000062}), through IAPMEI - Agência para a Competitividade e Inovação.

\bibliographystyle{apalike}
\bibliography{references}

\begin{thebibliography}{}

\bibitem[Atigh et~al., 2021]{Atigh2021HyperbolicPrototypes}
Atigh, M.~G., Keller-Ressel, M., and Mettes, P. (2021).
\newblock {Hyperbolic Busemann Learning with Ideal Prototypes}.
\newblock In {\em Advances in Neural Information Processing Systems}, pages 103--115.

\bibitem[Bardes et~al., 2021]{bardesvicreg}
Bardes, A., Ponce, J., and LeCun, Y. (2021).
\newblock Vicreg: Variance-invariance-covariance regularization for self-supervised learning.
\newblock In {\em International Conference on Learning Representations (ICLR)}.

\bibitem[Birkhoff and Von~Neumann, 1975]{birkhoff1975logic}
Birkhoff, G. and Von~Neumann, J. (1975).
\newblock {\em The logic of quantum mechanics}.
\newblock Springer.

\bibitem[Cabral et~al., 2011]{cabral2011matrix}
Cabral, R., Torre, F., Costeira, J.~P., and Bernardino, A. (2011).
\newblock Matrix completion for multi-label image classification.
\newblock In {\em Advances in neural information processing systems}, volume~24.

\bibitem[Chamberlain et~al., 2017]{Chamberlain2017NeuralSpace}
Chamberlain, B.~P., Clough, J., and Deisenroth, M.~P. (2017).
\newblock {Neural embeddings of graphs in hyperbolic space}.

\bibitem[Chami et~al., 2020]{Chami2020FromClustering}
Chami, I., Gu, A., Chatziafratis, V., and R{\'{e}}, C. (2020).
\newblock {From trees to continuous embeddings and back: hyperbolic hierarchical clustering}.
\newblock In {\em Advances in Neural Information Processing Systems}, pages 15065--15076.

\bibitem[Choudhary et~al., 2021]{choudhary2021probabilistic}
Choudhary, N., Rao, N., Katariya, S., Subbian, K., and Reddy, C. (2021).
\newblock Probabilistic entity representation model for reasoning over knowledge graphs.
\newblock {\em Advances in Neural Information Processing Systems}, 34:23440--23451.

\bibitem[Deng, 2012]{deng2012mnist}
Deng, L. (2012).
\newblock The mnist database of handwritten digit images for machine learning research.
\newblock {\em IEEE Signal Processing Magazine}, 29(6):141--142.

\bibitem[Ganea et~al., 2018]{Ganea2018HyperbolicEmbeddings}
Ganea, O.-E., B{\'{e}}cigneul, G., and Hofmann, T. (2018).
\newblock {Hyperbolic entailment cones for learning hierarchical embeddings}.
\newblock In {\em International Conference on Machine Learning}, pages 1646--1655.

\bibitem[Gardenfors, 2004]{gardenfors2004conceptual}
Gardenfors, P. (2004).
\newblock {\em Conceptual spaces: The geometry of thought}.
\newblock MIT press.

\bibitem[Geffet and Dagan, 2005]{geffet2005distributional}
Geffet, M. and Dagan, I. (2005).
\newblock The distributional inclusion hypotheses and lexical entailment.
\newblock In {\em Proceedings of the 43rd Annual Meeting of the Association for Computational Linguistics (ACL’05)}, pages 107--114.

\bibitem[Goldberg et~al., 2010]{goldberg2010transduction}
Goldberg, A., Recht, B., Xu, J., Nowak, R., and Zhu, J. (2010).
\newblock Transduction with matrix completion: Three birds with one stone.
\newblock {\em Advances in neural information processing systems}, 23.

\bibitem[He et~al., 2016]{he2016deep}
He, K., Zhang, X., Ren, S., and Sun, J. (2016).
\newblock Deep residual learning for image recognition.
\newblock In {\em Proceedings of the IEEE conference on computer vision and pattern recognition}, pages 770--778.

\bibitem[Holland, 1975]{Holland1975}
Holland, S.~S. (1975).
\newblock {\em The Current Interest in Orthomodular Lattices}, pages 437--496.
\newblock Springer Netherlands, Dordrecht.

\bibitem[Krizhevsky et~al., 2009]{krizhevsky2009learning}
Krizhevsky, A., Hinton, G., et~al. (2009).
\newblock Learning multiple layers of features from tiny images.
\newblock Technical report, Toronto, ON, Canada.

\bibitem[Lecomte et~al., 2023]{lecomte2023information}
Lecomte, V., Schaeffer, R., Isik, B., Khona, M., LeCun, Y., Koyejo, S., Gromov, A., and Shwartz-Ziv, R. (2023).
\newblock An information-theoretic understanding of maximum manifold capacity representations.
\newblock In {\em NeurIPS 2023 Workshop on Symmetry and Geometry in Neural Representations}.

\bibitem[Lezama et~al., 2018]{lezama2018ole}
Lezama, J., Qiu, Q., Mus{\'e}, P., and Sapiro, G. (2018).
\newblock Ole: Orthogonal low-rank embedding-a plug and play geometric loss for deep learning.
\newblock In {\em Proceedings of the IEEE Conference on Computer Vision and Pattern Recognition}, pages 8109--8118.

\bibitem[Li et~al., 2018]{li2018smoothing}
Li, X., Vilnis, L., Zhang, D., Boratko, M., and McCallum, A. (2018).
\newblock Smoothing the geometry of probabilistic box embeddings.
\newblock In {\em International Conference on Learning Representations}.

\bibitem[Liu et~al., 2015]{liu2015faceattributes}
Liu, Z., Luo, P., Wang, X., and Tang, X. (2015).
\newblock Deep learning face attributes in the wild.
\newblock In {\em Proceedings of International Conference on Computer Vision (ICCV)}.

\bibitem[M{\'e}moli et~al., 2018]{memoli2018quasimetric}
M{\'e}moli, F., Sidiropoulos, A., and Sridhar, V. (2018).
\newblock Quasimetric embeddings and their applications.
\newblock {\em Algorithmica}, 80:3803--3824.

\bibitem[Mittelstaedt, 1972]{mittelstaedt1972interpretation}
Mittelstaedt, P. (1972).
\newblock On the interpretation of the lattice of subspaces of the hilbert space as a propositional calculus.
\newblock {\em Zeitschrift f{\"u}r Naturforschung A}, 27(8-9):1358--1362.

\bibitem[Moreira et~al., 2024]{moreira2024hyperbolic}
Moreira, G., Marques, M., Costeira, J.~P., and Hauptmann, A. (2024).
\newblock Hyperbolic vs euclidean embeddings in few-shot learning: Two sides of the same coin.
\newblock In {\em Proceedings of the IEEE/CVF Winter Conference on Applications of Computer Vision}, pages 2082--2090.

\bibitem[Nickel and Kiela, 2017]{Nickel2017PoincareRepresentations}
Nickel, M. and Kiela, D. (2017).
\newblock {Poincar{\'{e}} embeddings for learning hierarchical representations}.
\newblock In {\em Advances in Neural Information Processing Systems}.

\bibitem[Nickel and Kiela, 2018]{Nickel2018LearningGeometry}
Nickel, M. and Kiela, D. (2018).
\newblock {Learning continuous hierarchies in the Lorentz model of hyperbolic geometry}.
\newblock In {\em International Conference on Machine Learning}, pages 3779--3788.

\bibitem[Palel et~al., 2022]{patel2022modeling}
Palel, D., Dangati, P., Lee, J.-Y., and Boratko, Michaela nd~McCallum, A. (2022).
\newblock Modeling label space interactions in multi-label classification using box embeddings.
\newblock In {\em International Conference on Learning Representations}.

\bibitem[Quantmeyer et~al., 2024]{quantmeyer2024and}
Quantmeyer, V., Mosteiro, P., and Gatt, A. (2024).
\newblock How and where does clip process negation?
\newblock In {\em Proceedings of the 3rd Workshop on Advances in Language and Vision Research (ALVR)}, pages 59--72.

\bibitem[Radford et~al., 2021]{radford2021learning}
Radford, A., Kim, J.~W., Hallacy, C., Ramesh, A., Goh, G., Agarwal, S., Sastry, G., Askell, A., Mishkin, P., Clark, J., et~al. (2021).
\newblock Learning transferable visual models from natural language supervision.
\newblock In {\em International conference on machine learning}, pages 8748--8763. PMLR.

\bibitem[Recht et~al., 2010]{recht2010guaranteed}
Recht, B., Fazel, M., and Parrilo, P.~A. (2010).
\newblock Guaranteed minimum-rank solutions of linear matrix equations via nuclear norm minimization.
\newblock {\em SIAM review}, 52(3):471--501.

\bibitem[Ren and Leskovec, 2020]{ren2020beta}
Ren, H. and Leskovec, J. (2020).
\newblock Beta embeddings for multi-hop logical reasoning in knowledge graphs.
\newblock {\em Advances in Neural Information Processing Systems}, 33:19716--19726.

\bibitem[Singh et~al., 2024]{singh2024learn}
Singh, J., Shrivastava, I., Vatsa, M., Singh, R., and Bharati, A. (2024).
\newblock Learn" no" to say" yes" better: Improving vision-language models via negations.
\newblock {\em arXiv preprint arXiv:2403.20312}.

\bibitem[Steck et~al., 2024]{steck2024cosine}
Steck, H., Ekanadham, C., and Kallus, N. (2024).
\newblock Is cosine-similarity of embeddings really about similarity?
\newblock In {\em Companion Proceedings of the ACM on Web Conference 2024}, pages 887--890.

\bibitem[van Rijsbergen, 2004]{vanRijsbergen_2004}
van Rijsbergen, C.~J. (2004).
\newblock {\em The Geometry of Information Retrieval}.
\newblock Cambridge University Press.

\bibitem[Vendrov et~al., 2015]{vendrov2015order}
Vendrov, I., Kiros, R., Fidler, S., and Urtasun, R. (2015).
\newblock Order-embeddings of images and language.
\newblock In {\em arXiv preprint arXiv:1511.06361}.

\bibitem[Vilnis et~al., 2018]{vilnis2018probabilistic}
Vilnis, L., Li, X., Murty, S., and McCallum, A. (2018).
\newblock Probabilistic embedding of knowledge graphs with box lattice measures.
\newblock In {\em Proceedings of the 56th Annual Meeting of the Association for Computational Linguistics (Volume 1: Long Papers)}, pages 263--272.

\bibitem[Vilnis and McCallum, 2015]{vilnis2014word}
Vilnis, L. and McCallum, A. (2015).
\newblock Word representations via gaussian embedding.
\newblock In {\em ICLR}.

\bibitem[Von~Neumann, 2018]{vonneumann}
Von~Neumann, J. (2018).
\newblock {\em Mathematical Foundations of Quantum Mechanics}.
\newblock Princeton University Press.

\bibitem[Xiao et~al., 2017]{xiao2017fashion}
Xiao, H., Rasul, K., and Vollgraf, R. (2017).
\newblock Fashion-mnist: a novel image dataset for benchmarking machine learning algorithms.
\newblock {\em arXiv preprint arXiv:1708.07747}.

\bibitem[Yerxa et~al., 2023]{yerxa2024learning}
Yerxa, T., Kuang, Y., Simoncelli, E., and Chung, S. (2023).
\newblock Learning efficient coding of natural images with maximum manifold capacity representations.
\newblock In {\em Advances in Neural Information Processing Systems}, volume~36.

\bibitem[Yu et~al., 2020]{yu2020learning}
Yu, Y., Chan, K. H.~R., You, C., Song, C., and Ma, Y. (2020).
\newblock Learning diverse and discriminative representations via the principle of maximal coding rate reduction.
\newblock {\em Advances in Neural Information Processing Systems}, 33:9422--9434.

\bibitem[Yuksekgonul et~al., 2023]{yuksekgonul2022and}
Yuksekgonul, M., Bianchi, F., Kalluri, P., Jurafsky, D., and Zou, J. (2023).
\newblock When and why vision-language models behave like bags-of-words, and what to do about it?
\newblock {\em International Conference on Learning Representations}.

\end{thebibliography}

\section*{Checklist}
 \begin{enumerate}

 \item For all models and algorithms presented, check if you include:
 \begin{enumerate}
   \item A clear description of the mathematical setting, assumptions, algorithm, and/or model. [Yes - Sections \ref{sec:theoretical_results}]
   \item An analysis of the properties and complexity (time, space, sample size) of any algorithm. [No]
   \item (Optional) Anonymized source code, with specification of all dependencies, including external libraries. [Yes - \url{https://github.com/gabmoreira/subspaces}]
 \end{enumerate}

 \item For any theoretical claim, check if you include:
 \begin{enumerate}
   \item Statements of the full set of assumptions of all theoretical results. [Yes - Assumptions presented in all Lemmas and Theorems.]
   \item Complete proofs of all theoretical results. [Yes - Proofs in Appendix \ref{sec:app_proofs}] 
   \item Clear explanations of any assumptions. [Yes]     
 \end{enumerate}

 \item For all figures and tables that present empirical results, check if you include:
 \begin{enumerate}
   \item The code, data, and instructions needed to reproduce the main experimental results (either in the supplemental material or as a URL). [Yes - \url{https://github.com/gabmoreira/subspaces}.]
   \item All the training details (e.g., data splits, hyperparameters, how they were chosen). [Yes - Section \ref{sec:experiments} and Appendix \ref{sec:app_experiments}] 
         \item A clear definition of the specific measure or statistics and error bars (e.g., with respect to the random seed after running experiments multiple times). [No]
         \item A description of the computing infrastructure used. (e.g., type of GPUs, internal cluster, or cloud provider). [Yes - Section \ref{sec:experiments}]
 \end{enumerate}

 \item If you are using existing assets (e.g., code, data, models) or curating/releasing new assets, check if you include:
 \begin{enumerate}
   \item Citations of the creator If your work uses existing assets. [Yes - MNIST, FashionMNIST, CIFAR-10, CIFAR-100 and CelebA] 
   \item The license information of the assets, if applicable. [Yes]
   \item New assets either in the supplemental material or as a URL, if applicable. [Not Applicable]
   \item Information about consent from data providers/curators. [Not Applicable - Publicly available data] 
   \item Discussion of sensible content if applicable, e.g., personally identifiable information or offensive content. [Not Applicable]
 \end{enumerate}

 \item If you used crowdsourcing or conducted research with human subjects, check if you include:
 \begin{enumerate}
   \item The full text of instructions given to participants and screenshots. [Not Applicable]
   \item Descriptions of potential participant risks, with links to Institutional Review Board (IRB) approvals if applicable. [Not Applicable]
   \item The estimated hourly wage paid to participants and the total amount spent on participant compensation. [Not Applicable]
 \end{enumerate}

 \end{enumerate}

\clearpage
\newpage
\onecolumn
\aistatstitle{Learning Visual-Semantic Subspace Representations:\\
Supplementary Materials}
\appendix
\addcontentsline{toc}{section}{Appendix}
\vspace{-3cm}
\section{Definitions}
\label{sec:app_theory}
We consider finite dimensional vector spaces over $\mathbb{R}$ with inner product $\langle \uv,\vv \rangle = \uv^\top \vv$ \textit{i.e.}, the usual dot product. The induced norm is thus $\|\xv\|_2 = \sqrt{\langle \xv, \xv\rangle}$. A linear operator is called symmetric (self-adjoint) if $\Tv=\Tv^\top$. A symmetric operator $\Tv$ is said to be positive if $\langle \Tv\xv, \xv\rangle>0, \forall \xv\in\mathbb{R}^d.$ A projection operator is an idempotent self-adjoint \textit{i.e.}, $\Tv=\Tv^\top=\Tv^2$. 

\begin{definition}[Spectral Norm] 
    The spectral norm is defined as 
    \begin{equation}
        \|\Tv\|_2 := \max_{\xv} \frac{\|\Tv \xv\|_2}{\|\xv\|_2}.
    \end{equation}
    If $\sigma_1\geq\dots\geq\sigma_d$ are singular values of $\Tv$ then $\|\Tv\|_2 = \sigma_1$.
\end{definition}

\begin{definition}[Nuclear Norm] 
    The nuclear norm of an operator, denoted $\|\cdot\|_\ast$, is the $\ell_1$-norm of its singular values. For $\Tv$ with singular values $\sigma_1\geq\dots\geq\sigma_n$, $\|\Tv\|_\ast = \sum_{i=1}^n \sigma_i$. It follows that $\|\Tv\|_\ast = \mathrm{Tr}\left(\left(\Tv^\top\Tv\right)^\frac{1}{2}\right)$. The nuclear norm is the convex envelope of the rank.
\end{definition}

\begin{definition}[Partially Ordered Set] 
    A partially ordered set (poset) is a set where a partial order is defined. Given a set $\mathcal{X}$ a (weak) partial order is a binary relation $\leq$  between certain pairs of elements that is reflexive (every element relates to itself) $a \leq a$, antisymmetric $a \leq b, b\leq a \Rightarrow a=b$ and transitive $a \leq b, b\leq c \Rightarrow a \leq c$. Strong partial orders are not reflexive. A function between posets is called order-preserving or isotone if $x\leq y \Rightarrow f(x) \leq f(y)$. Every poset $(\mathcal{X},\preceq)$ is isomorphic to the poset $(2^\mathcal{X},\subseteq)$ \textit{i.e.}, the poset defined on the power set of $\mathcal{X}$, with inclusion as the partial order.
\end{definition}

\begin{definition}[Transitive Relation]
    A relation $R$ is transitive if $x R y, y R z \implies z R z$. For example, inclusion and composition are transitive relations.
\end{definition}

\begin{definition}[Join and Meet]
    An element $m$ of a poset $\mathcal{P}$ is the meet of $x, y$, denoted $x \wedge y$, if $m\leq x, m\leq y$ and if $w \leq x, w\leq y$ then $w \leq m$. Thus, $m$ is the greatest lower bound (infimum). An element $j$  of $\mathcal{P}$ is the join of $x,y$ denoted $x\lor y$ if is is the lowest upper bound (supremum). 
\end{definition}

\begin{definition}[Order Lattice]
    A poset is a lattice if every two-element subset has a meet and a join.
\end{definition}

\begin{definition}[Modular Lattice]
    A lattice is called modular if for all elements a, b and c the implication
    \begin{equation}
        a \leq c \implies a \lor (b \wedge c) = (a \lor b) \wedge c
    \end{equation}
    holds. Hence distributivity may not hold.
\end{definition}

\begin{definition}[Complemented Lattice]
    A bounded lattice, with least and greatest elements denoted $0$ and $1$, respectively, is called complemented if every element $a$ has a complement $b$ such that $a\wedge b = 0$ and $a \lor b = 1$. Orthocomplementation is the operation that maps $a$ to $a^\perp$ such that $a\wedge a^\perp = 0$, $a \lor a^\perp = 1$ and $a^{\perp^\perp}=1$.
\end{definition}

\begin{definition}[Boolean Lattice]
    A Boolean lattice is a complemented distributive lattice.
\end{definition}

\clearpage
\newpage
\section{Proofs}
\label{sec:app_proofs}
\subsection{Proof of Lemma \ref{lemma:nuclear_symmetry}}
The nuclear norm on the left is equal to $\mathrm{Tr}\left(\left(\Yv^\top \Uv_1^\top \Uv_1 \Yv + \Vv^\top \Xv^\top \Uv_2^\top \Uv_2 \Xv \Vv\right)^{1/2}\right)$ which yields $\mathrm{Tr}\left(\left(\Yv^\top \Yv + \Vv^\top \Xv^\top \Xv\Vv\right)^{1/2}\right)$. Through further manipulation we arrive at $\mathrm{Tr}\left(\left((\Vv \Yv^\top \Yv \Vv^\top + \Xv^\top \Xv\right)^{1/2}\right)$. The nuclear norm on the right is $\mathrm{Tr}\left(\left(\Vv\Yv^\top \Yv\Vv^\top + \Xv^\top\Xv\right)^{1/2}\right)$ and thus equal to that on the left.

\subsection{Proof of Theorem \ref{theorem:nuclear_min_v}}
We prove the case where $d=c$. The other cases follow from this one. Invoking Lemma \ref{lemma:nuclear_symmetry}, let us remove the gauge freedom and rewrite the problem as 
\begin{align}
    \min_{\Vv_X \in \mathrm{St}_c(\mathbb{R}^n)}\;&\left\|\begin{bmatrix}\mathbf{\Sigma}_Y \Vv_Y^\top \\ \mathbf{\Sigma}_X \Vv_X^\top \end{bmatrix}\right\|_\ast.
    \label{eq:matrix_here}
\end{align}
Let $f(\Vv_X):=\left\|\begin{bmatrix}\mathbf{\Sigma}_Y \Vv_Y^\top \\ \mathbf{\Sigma}_X \Vv_X^\top \end{bmatrix}\right\|_\ast$. Since $\Vv_Y \in \mathrm{St}_c(\mathbb{R}^n)$, then $\Vv_Y \Vv_Y^\top$ is an orthogonal projection and thus
\begin{equation}
   f(\Vv_X) \geq \left\|\begin{bmatrix}\mathbf{\Sigma}_Y \Vv_Y^\top \\ \mathbf{\Sigma}_X \Vv_X^\top \end{bmatrix} \Vv_Y \Vv_Y^\top \right\|_\ast = \left\|\begin{bmatrix}\mathbf{\Sigma}_Y \\ \mathbf{\Sigma}_X \Vv_X^\top \Vv_Y\end{bmatrix}\Vv_Y^\top\right\|_\ast = \left\|\begin{bmatrix}\mathbf{\Sigma}_Y \\ \mathbf{\Sigma}_X \Vv_X^\top \Vv_Y\end{bmatrix}\right\|_\ast.
\end{equation}
Let $\Hv\in\mathbb{R}^{c\times c} = \Vv_X^\top\Vv_Y$ be the misalignment between $\Vv_X$ and $\Vv_Y$ and define the lower bound 
\begin{equation}
    g(\Hv) = \left\|\begin{bmatrix}\mathbf{\Sigma}_Y \\ \mathbf{\Sigma}_X \Hv\end{bmatrix}\right\|_\ast,
\end{equation}
which verifies $\max_{\|\Hv\|_2\leq 1} g(\Hv) \leq \min_{\Vv_X\in\mathrm{St}_c(\mathbb{R})^n} f(\Vv_X)$. Noting that $g(\Hv)$ can be written as 
\begin{equation}
    g(\Hv)=\mathrm{Tr}\left(\sqrt{\mathbf{\Sigma}_Y^2 + \Hv^\top\mathbf{\Sigma}_X^2 \Hv}\right),
\end{equation}
we have that it attains its maximum for $\Hv=\Iv_c$. We can verify that $f$ attains this lower bound for $\Vv_X = \Vv_Y$ \textit{i.e.}, 
\begin{equation}
    f(\Vv_Y) = \left\|\begin{bmatrix} \mathbf{\Sigma}_Y \Vv_Y^\top \\ \mathbf{\Sigma}_X \Vv_Y^\top \end{bmatrix}\right\|_\ast = \left\|\begin{bmatrix} \mathbf{\Sigma}_Y \\ \mathbf{\Sigma}_X \end{bmatrix}\right\|_\ast = \mathrm{Tr}\left(\sqrt{\mathbf{\Sigma}_Y^2 + \mathbf{\Sigma}_X^2}\right).
\end{equation}

\subsection{Proof of Theorem \ref{theorem:min_nuclear_norm_ell2_squared}}
Writing the SVD of $\Xv$ as $\Uv_X \mathbf{\Sigma}_X \Vv_X^\top$, where $\Uv_X\in O(d)$, $\mathbf{\Sigma}_X$ is diagonal and $\Vv_X\in \mathrm{St}_d(\mathbb{R}^n)$, the terms $\|\Xv\|_\ast$ and $\|\Xv\|_2$ only depend on $\mathbf{\Sigma}_X$. Let us rewrite the loss as
\begin{equation}
    \left\|\begin{bmatrix} \Yv \\ \Xv\end{bmatrix}\right\|_\ast - \alpha\|\mathbf{\Sigma}_X\|_\ast + \beta\|\mathbf{\Sigma}_X\|_2^2 = \left\|\begin{bmatrix} \Uv_Y \mathbf{\Sigma}_Y \Vv_Y^\top \\ \Uv_X\mathbf{\Sigma}_X \Vv_X^\top\end{bmatrix}\right\|_\ast - \alpha\mathrm{Tr}(\mathbf{\Sigma}_X) + \beta\|\mathbf{\Sigma}_X\|_2^2.
\end{equation}
Using Lemma \ref{lemma:nuclear_symmetry}, we have the equivalent problem
\begin{align}
    \min_{\mathbf{\Sigma}_X \succeq 0, \Vv_X\in \mathrm{St}_d(\mathbb{R}^n)}\left\{ \left\|\begin{bmatrix} \mathbf{\Sigma}_Y \Vv_Y^\top \\ \mathbf{\Sigma}_X \Vv_X^\top\end{bmatrix}\right\|_\ast - \alpha\mathrm{Tr}(\mathbf{\Sigma}_X) + \beta\|\mathbf{\Sigma}_X\|_2^2\right\}
\end{align}
or similarly,
\begin{align}
    \min_{\mathbf{\Sigma}_X \succeq  0}\;\left\{ \min_{\Vv_X\in \mathrm{St}_d(\mathbb{R}^n)}\left\{ \left\|\begin{bmatrix} \mathbf{\Sigma}_Y \Vv_Y^\top \\ \mathbf{\Sigma}_X \Vv_X^\top\end{bmatrix}\right\|_\ast \right\} - \alpha\mathrm{Tr}(\mathbf{\Sigma}_X) + \beta\|\mathbf{\Sigma}_X\|_2^2 \right\}.
\end{align}
Denote the singular values of $\Yv$ by $\mu_1 \geq \dots \geq \mu_c$ and those of $\Xv$ by $\sigma_1 \geq \dots \geq \sigma_d$. According to Theorem \ref{theorem:nuclear_min_v}, the inner minimization yields $\sum_{i=1}^c \sqrt{\mu_i^2 + \sigma_i^2} + \sum_{i=c+1}^{d}\sigma_i$ for $\Vv_X = \begin{bmatrix}\Vv_Y & \Vv \end{bmatrix}$. Thus,
\begin{align}
    \min_{\sigma_1\geq\dots\geq\sigma_d\geq 0}\;\left\{\sum_{i=1}^c \sqrt{\mu_i^2 + \sigma_i^2} + \sum_{i=c+1}^{d}\sigma_i - \alpha\sum_{i=1}^{d}\sigma_i  + \beta \sigma_1^2 \right\}.
    \label{eq:main_theorem_loss_in_sigma}
\end{align}
This problem is convex. Let us drop the constraint and consider the relaxed problem
\begin{align}
    &\min_{\sigma,t}\; \beta t^2 + \sum_{i=1}^c \left(\sqrt{\mu_i^2 + \sigma_i^2} - \alpha \sigma_i\right) + (1-\alpha)\sum_{i=c+1}^d \sigma_i \nonumber \\
    &\;\;\mathrm{s.t.}\;\; t - \sigma_i \geq 0,\quad i\in [c] \nonumber \\
    &\quad\quad\; \sigma_i \geq 0,\quad  i\in \{c+1,\dots,d\}
\end{align}
For dual variables $\lambda_i \geq 0$, $i\in[c]$ and $\nu_i \geq 0, i\in\{c+1,\dots,d\}$ we have the Lagrangian
\begin{equation}
    L(\sigma,t;\lambda,\nu) = \sum_{i=1}^c \left(\sqrt{\mu_i^2 + \sigma_i^2}  +(\lambda_i-\alpha)\sigma_i\right) + \beta t^2 - \sum_{i=1}^c\lambda_i t + \sum_{i=c+1}^d\left( (1-\alpha)\sigma_i-\nu_i\sigma_i\right).
\end{equation}
The KKT conditions sufficient for the optimality of the relaxation are thus
\begin{equation}
    \begin{cases}
        t-\sigma_i\geq0,\;i\in[c] \quad&\text{(a - primal\;feasibility)} \\
        \sigma_i\geq0,\;i\in\{c+1,\dots,d\} \quad&\text{(b - primal\;feasibility)}\\
        \lambda_i \geq 0,\; i\in[c]\quad&\text{(c - dual \;feasibility)}\\
        \nu_i \geq 0,\; i\in\{c+1,\dots,d\}\quad&\text{(d - dual \;feasibility)} \\
        \frac{\sigma_i}{\sqrt{\sigma_i^2 + \mu_i^2}} + \lambda_i-\alpha = 0,\;i\in[c] \quad&\text{(e - stationarity)} \\
        \nu_i=1-\alpha,\;i\in\{c+1,\dots,d\} \quad&\text{(f - stationarity)}\\ 
        t =\frac{1}{2\beta }\sum_{i=1}^c\lambda_i \quad&\text{(g - stationarity)} \\ 
        \sum_{i=1}^c \lambda_i(t-\sigma_i)=0\quad&\text{(h - complementary\;slackness)} \\
        \sum_{i=c+1}^d \nu_i\sigma_i=0\quad&\text{(i - complementary\;slackness)}.
    \end{cases}
    \label{eq:primal_kkt_main_theorem}
\end{equation}
Let $\sigma_i^\ast=t^\ast,\;i\in[c]$ and $\sigma_i^\ast=0,\;i\in\{c+1,\dots,d\}$. Primal feasibility (a), (b) and complementary slackness (h), (i) hold. Since $\alpha < 1$, we have $\nu_i^\ast=1-\alpha \geq 0,\;i\in\{c+1,\dots,d
\}$. Thus, dual feasibility (d) and stationarity (f) hold. Plugging $\sigma_i^\ast=t^\ast,\;i\in[c]$ in the stationarity conditions (e) and (g) yields
\begin{align}
    \begin{cases}\frac{t^\ast}{\sqrt{{t^\ast}^2+\mu_i^2}}+\lambda_i^\ast -\alpha = 0,\; i\in[c] \\ t^\ast=\frac{1}{2\beta}\sum_{i=1}^c \lambda_i^\ast.\end{cases}
\end{align}
Solving for $t^\ast$, we obtain $\alpha c - 2\beta t^\ast = \sum_{i=1}^c\frac{t^\ast}{\sqrt{\mu_i^2+{t^\ast}^2}}$ and thus $0 < t^\ast\leq\frac{\alpha c}{2\beta}$. It suffices to verify dual feasibility (c). Plugging this upper bound in the stationarity condition (e) we obtain
\begin{align}
    \lambda_i^\ast=\alpha-\frac{t^\ast}{\sqrt{\mu_i^2+{t^\ast}^2}}\geq \alpha - \frac{\alpha c}{2\beta\sqrt{\mu_c^2 + (\alpha c/2\beta)^2}}\geq 0.
\end{align}
We can thus find $\alpha$ that ensures dual feasibility (c), which yields $\alpha\geq\sqrt{\max\left\{0,1-\frac{4\beta^2\mu_c^2}{c^2}\right\}}$. All KKT conditions for the relaxation hold. To show that this solution solves the original problem, note that $\sigma_1^\ast = \dots = \sigma_c^\ast = t^\ast > 0 $ and $\sigma_{c+1}^\ast=\dots\sigma_{d}^\ast = 0$ imply $\sigma_1^\ast\geq\dots\geq\sigma_d^\ast\geq 0$.

Concluding the proof, the solution set is 
\begin{equation}
    \mathcal{X} = \left\{ \Uv (t^\ast \Iv_c) \begin{bmatrix} \Vv_Y^\top \\ \Vv^\top\end{bmatrix} \Bigg|\; \Uv\in \mathrm{St}_c(\mathbb{R}^d),\; \Vv \in \mathcal{N}(\Vv_Y)\right\},
\end{equation}
where $t^\ast$ is the solution to $\alpha c - 2\beta t = \sum_{i=1}^c\frac{t}{\sqrt{\mu_i^2+{t}^2}}$.

\subsection{Proof of Lemma \ref{lemma:min_nuclear_norm_ell2}}
\begin{proof}
     Writing the SVD of $\Xv$ as $\Uv_X \mathbf{\Sigma}_X \Vv_X^\top$, where $\Uv_X\in O(d)$, $\mathbf{\Sigma}_X$ is diagonal and $\Vv_X\in \mathrm{St}_d(\mathbb{R}^n)$, the terms $\|\Xv\|_\ast$ and $\|\Xv\|_2$ only depend on $\mathbf{\Sigma}_X$. Let us rewrite the loss as
    \begin{equation}
        \left\|\begin{bmatrix} \Yv \\ \Xv\end{bmatrix}\right\|_\ast - \|\mathbf{\Sigma}_X\|_\ast + \beta\|\mathbf{\Sigma}_X\|_2 = \left\|\begin{bmatrix} \Uv_Y \mathbf{\Sigma}_Y \Vv_Y^\top \\ \Uv_X\mathbf{\Sigma}_X \Vv_X^\top\end{bmatrix}\right\|_\ast - \mathrm{Tr}(\mathbf{\Sigma}_X) + \beta\|\mathbf{\Sigma}_X\|_2.
    \end{equation}
    Using Lemma \ref{lemma:nuclear_symmetry}, we have
    \begin{align}
        \min_{\mathbf{\Sigma}_X \succeq 0, \Vv_X\in \mathrm{St}_d(\mathbb{R}^n)}\left\{ \left\|\begin{bmatrix} \mathbf{\Sigma}_Y \Vv_Y^\top \\ \mathbf{\Sigma}_X \Vv_X^\top\end{bmatrix}\right\|_\ast - \mathrm{Tr}(\mathbf{\Sigma}_X) + \beta\|\mathbf{\Sigma}_X\|_2\right\}
    \end{align}
    or equivalently,
     \begin{align}
        \min_{\mathbf{\Sigma}_X \succeq  0}\;\left\{ \min_{\Vv_X\in \mathrm{St}_d(\mathbb{R}^n)} \left\|\begin{bmatrix} \mathbf{\Sigma}_Y \Vv_Y^\top \\ \mathbf{\Sigma}_X \Vv_X^\top\end{bmatrix}\right\|_\ast - \mathrm{Tr}(\mathbf{\Sigma}_X) + \beta\|\mathbf{\Sigma}_X\|_2 \right\}.
    \end{align}
     Denoting the singular values of $\Yv$ by $\mu_1 \geq \dots \geq \mu_c$ and those of $\Xv$ by $\sigma_1 \geq \dots \geq \sigma_d$ then, according to Theorem \ref{theorem:nuclear_min_v}, the inner minimization yields $\sum_{i=1}^c \sqrt{\mu_i^2 + \sigma_i^2} + \sum_{i=c+1}^{d}\sigma_i$. Thus,
    \begin{align}
        &\min_{\sigma_1\geq\dots\geq\sigma_d\geq 0}\;\left\{\sum_{i=1}^c \sqrt{\mu_i^2 + \sigma_i^2} + \sum_{i=c+1}^{d}\sigma_i - \sum_{i=1}^{d}\sigma_i  + \beta \sigma_1 \right\}
        \nonumber \\
        =&\min_{\sigma_1\geq\dots\geq\sigma_c\geq 0}\;\left\{\sum_{i=1}^c\left(\sqrt{\mu_i^2 + \sigma_i^2} - \sigma_i\right)  + \beta \sigma_1 \right\}.
    \end{align}
    This problem is convex and independent of the values of $\sigma_i$ for $i > c$. Let us relax the constraint $\sigma_1\geq\dots\geq\sigma_d\geq 0$ and consider the problem
    \begin{align}
        &\min_{\sigma,t}\; \beta t + \sum_{i=1}^c\left( \sqrt{\mu_i^2 + \sigma_i^2} - \sigma_i\right) \nonumber \\
        &\;\;\mathrm{s.t.}\;\; t - \sigma_i \geq 0,\quad i\in [c].
    \end{align}
    For the dual variables $\lambda_i \geq 0$, $i\in [c]$, we have the Lagrangian
    \begin{equation}
        L(\sigma,t;\lambda) = \beta t + \sum_{i=1}^c\left( \sqrt{\mu_i^2 + \sigma_i^2} - \sigma_i - \lambda_i(t-\sigma_i)\right),
    \end{equation}
    with KKT conditions
    \begin{equation}
        \begin{cases}
            t-\sigma_i \geq 0,\; i \in [c] \quad&\text{(a - primal feasibility)}\\
            \lambda_i \geq 0,\; i \in [c] \quad&\text{(b - dual feasibility)}\\
            \frac{\sigma_i}{\sqrt{\mu_i^2+\sigma_i^2}}+\lambda_i-1 = 0,\; i \in [c] \quad&\text{(c - stationarity)}\\
            \sum_{i=1}^c \lambda_i = \beta \quad&\text{(d - stationarity)}\\
            \sum_{i=1}^c \lambda_i(t-\sigma_i)=0\quad&\text{(e - complementary slackness)}. \\
        \end{cases}
    \end{equation}
    Let $\sigma_i^\ast = t^\ast$, $i\in[c]$. Then, primal feasibility (a) and complementary slackness (e) hold. In order to have stationarity (c), we set
    \begin{align}
        \lambda_i^\ast = 1 - \frac{t^\ast}{\sqrt{\mu_i^2+t^{\ast^2}}},\; i\in[c]
    \end{align}
    and thus $\lambda_i^\ast \geq 0$, $i\in[c]$ and dual feasibility holds. Solving (d) yields
    \begin{align}
        \sum_{i=1}^c\frac{t^\ast}{\sqrt{\mu_i^2+t^{\ast^2}}} = c - \beta,
    \end{align}
    from where we find that $t^\ast > 0$. Thus, the solution of the relaxed problem $\sigma_i^\ast = t^\ast$, for $i \in [c]$ is also feasible for the original problem, provided we pick $\sigma_{c+1},\dots,\sigma_{d}$ such that $t^\ast \geq \sigma_{c+1}\geq \dots\geq\sigma_{d}\geq 0$. Concluding the proof, the solution set is 
    \begin{equation}
        \mathcal{X} = \left\{ \Uv \begin{bmatrix}t^\ast \Iv_c & \mathbf{0} \\ \mathbf{0} & \mathbf{\Sigma}\end{bmatrix} \begin{bmatrix} \Vv_Y^\top \\ \Vv^\top\end{bmatrix} \Bigg|\; \Uv\in O(d),\; t^\ast \Iv \succeq \mathbf{\Sigma},\; \Vv \in \mathcal{N}(\Vv_Y)\right\},
    \end{equation}
    where $t^\ast$ is the solution to $\sum_{i=1}^c\frac{t}{\sqrt{\mu_i^2+t^{^2}}} = c - \beta$.
\end{proof}

\subsection{Proof of Lemma \ref{lemma:min_nuclear_norm_no_ell2}}
If we write the SVD of $\Xv$ as $\Uv_X \mathbf{\Sigma}_X \Vv_X^\top$, where $\Uv_X\in O(d)$, $\mathbf{\Sigma}_X$ is diagonal and $\Vv_X\in \mathrm{St}_d(\mathbb{R}^n)$ the term $\|\Xv\|_\ast$ is simply the trace of $\mathbf{\Sigma}_X$. Let us rewrite the loss as
\begin{equation}
    \left\|\begin{bmatrix} \Yv \\ \Xv\end{bmatrix}\right\|_\ast - \alpha \|\mathbf{\Sigma}_X\|_\ast = \left\|\begin{bmatrix} \Uv_Y \mathbf{\Sigma}_Y \Vv_Y^\top \\ \Uv_X\mathbf{\Sigma}_X \Vv_X^\top\end{bmatrix}\right\|_\ast - \alpha\mathrm{Tr}(\mathbf{\Sigma}_X).
\end{equation}
Using Lemma \ref{lemma:nuclear_symmetry}, we have
\begin{align}
    \min_{\mathbf{\Sigma}_X \succeq 0, \Vv_X\in \mathrm{St}_d(\mathbb{R}^n)}\; \left\|\begin{bmatrix} \mathbf{\Sigma}_Y \Vv_Y^\top \\ \mathbf{\Sigma}_X \Vv_X^\top\end{bmatrix}\right\|_\ast - \alpha\mathrm{Tr}(\mathbf{\Sigma}_X)
\end{align}
or equivalently,
\begin{align}
    \min_{\mathbf{\Sigma}_X \succeq 0}\;\left\{ \min_{\Vv_X\in \mathrm{St}_d(\mathbb{R}^n)} \left\|\begin{bmatrix} \mathbf{\Sigma}_Y \Vv_Y^\top \\ \mathbf{\Sigma}_X \Vv_X^\top\end{bmatrix}\right\|_\ast - \alpha\mathrm{Tr}(\mathbf{\Sigma}_X)\right\}.
\end{align}
Denote the singular values of $\Yv$ by $\mu_1 \geq \dots \geq \mu_c$ and those of $\Xv$ by $\sigma_1 \geq \dots \geq \sigma_d$. According to Theorem \ref{theorem:nuclear_min_v}, the inner minimization yields $\sum_{i=1}^c \sqrt{\mu_i^2 + \sigma_i^2} + \sum_{i=c+1}^{d}\sigma_i$. Thus,
\begin{align}
    &\min_{\sigma_1\geq\dots\geq\sigma_d\geq 0}\;\left\{\sum_{i=1}^c \sqrt{\mu_i^2 + \sigma_i^2} + \sum_{i=c+1}^{d}\sigma_i - \alpha \sum_{i=1}^{d}\sigma_i  \right\}.
\end{align}
This problem is convex. Let us relax the constraints $\sigma_1\geq\dots\geq\sigma_d\geq 0$, replacing them by $\sigma_i\geq 0, i\in [d]$. In this relaxation, the derivative w.r.t. $\sigma_k$ yields $\frac{\sigma_k}{\sqrt{\mu_k+\sigma_k^2}}-\alpha$ for $k\in[c]$. The first-order stationarity condition puts thus the optimum at $\sigma_i^\ast = \frac{\alpha}{\sqrt{1-\alpha^2}}\mu_i$ for $i\in[c]$. For $k > c$, the derivative w.r.t. $\sigma_k$ yields $1-\alpha > 0$. Given the relaxed constraint $\sigma_i \geq 0, i\in[d]$, the minimum is therefore attained for $\sigma_i^\ast = 0$, $i\in\{c+1,\dots,d\}$. Note that 
\begin{align}
\sigma_i^\ast =
    \begin{cases}
        \frac{\alpha}{\sqrt{1-\alpha^2}}\mu_i,\quad & i\in [c] \\
        0,\quad & i \in \{c+1,\dots,d\}
    \end{cases}
\end{align}
is feasible for the original problem and thus optimal as well. The solution set is thus given by
\begin{equation}
    \mathcal{X}=\left\{\Uv \begin{bmatrix}\frac{\alpha}{\sqrt{1-\alpha^2}}\mathbf{\Sigma}_Y & \mathbf{0} \\ \mathbf{0} & \mathbf{0} \end{bmatrix} \begin{bmatrix}\Vv_Y^\top \\ \Vv^\top\end{bmatrix} \;\Bigg|\; \Uv \in O(d),\; \Vv \in \mathcal{N}(\Vv_Y)\right\}.
\end{equation}

\subsection{Proof of Lemma \ref{lemma:orthogonal_vs}}
We start by showing that $\forall i,j \in [n]\; \yv_i = \yv_j \Leftrightarrow \vv_i = \vv_j$. Each column of $\mathbf{\Sigma} \Vv^\top$ can be written as an orthogonal transformation of each column of $\Yv$ by $\Uv^\top\in O(c)$. Therefore, $\yv_i = \yv_j \Leftrightarrow \Uv^\top \yv_i = \Uv^\top \yv_j \Leftrightarrow \mathbf{\Sigma} \vv_i = \mathbf{\Sigma} \vv_j$. Since $\mathbf{\Sigma}$ is square and full-rank, $\mathbf{\Sigma} \vv_i = \mathbf{\Sigma} \vv_j \Leftrightarrow \vv_i = \vv_j$. Together with the definition of $\mathcal{I}$, this implies that $\forall i,j\in\mathcal{I}$ $\vv_i \neq \vv_j$. For the second part, since $\Vv \in \mathrm{St}_c(\mathbb{R}^n)$ we have $\Vv^\top \Vv = \Iv_c$. Thus, $\forall \xv\in\mathbb{R}^c$, $\Vv^\top \Vv \xv  = \xv$. For $i \in \mathcal{I}$, define the sets $\mathcal{J}_i = \{j \in [n] : \yv_j = \yv_i\}$ and note that $\bigcup_{i \in \mathcal{I}} \mathcal{J}_i = [n]$. We can write $\Vv^\top \Vv \xv$ as 
\begin{align}
    \sum_{i=1}^n \vv_i \langle \vv_i, \xv\rangle  = \sum_{i\in\mathcal{I}}\sum_{j\in\mathcal{J}_i} \vv_j \langle \vv_j, \xv\rangle = 
    \sum_{i\in\mathcal{I}} |\mathcal{J}_i| \vv_i \langle \vv_i, \xv\rangle = 
    \sum_{i\in\mathcal{I}} \sqrt{|\mathcal{J}_i|} \vv_i \langle \sqrt{|\mathcal{J}_i|}\vv_i, \xv\rangle,
    \label{eq:equal_cols_proof}
    \end{align}
where we used the fact that, by definition, for any two indices $l,k \in \mathcal{J}_i$ we must have $\vv_l=\vv_k=\vv_i$. We can write (\ref{eq:equal_cols_proof}) in matrix notation by defining the matrix $\bar{\Vv}$ with columns $\{\sqrt{|\mathcal{J}_i|} \vv_i\}_{i\in\mathcal{I}}$.  The identity $\forall \xv\in\mathbb{R}^c\;\Vv^\top \Vv \xv = \xv$ becomes $\bar{\Vv}^\top \bar{\Vv} \xv = \xv$. From the hypothesis that $|\mathcal{I}| = \mathrm{rank}\; \Yv = c$, $\bar{\Vv}$ is square and full-rank. Thus, $\forall \xv\in\mathbb{R}^c\;\bar{\Vv}^\top \bar{\Vv} \xv = \xv$ implies that $\bar{\Vv}^\top \bar{\Vv} = \Iv_c$ and we have $\bar{\Vv}\in O(c)$. Therefore, the columns of $\bar{\Vv}$,  $\{\sqrt{|\mathcal{J}_i|} \vv_i\}_{i\in\mathcal{I}}$, form an orthonormal basis for $\mathbb{R}^c$.

\subsection{Proof of Lemma \ref{lemma:projections_are_propositions}}
Since the minterms $\{\yv_i\}_{i\in\mathcal{I}}$ correspond to disjoint events, the posterior is given by the probability of the disjunction of all $\yv_i$ that imply $\qv$. Thus,
\begin{align}
    P(\qv|\xv) &= P\left(\bigvee_{i\in\mathcal{I}\;\mathrm{st}\; \yv_i \implies \qv} \yv_i\bigg| \xv\right) \stackrel{(a)}{=} \sum_{i\in\mathcal{I}\;\mathrm{st}\;  \yv_i \implies \qv} P(\yv_i | \xv) \stackrel{(b)}{=}\sum_{i\in\mathcal{I}\;\mathrm{st}\;  \yv_i \implies \qv} \left\langle \xv^\top \ev_i, \ev_i^\top \xv\right\rangle \nonumber \\ &=\left\langle \xv, \left(\sum_{i\in\mathcal{I}\;\mathrm{st}\;  \yv_i \implies \qv} \ev_i \ev_i^\top \right) \xv\right\rangle \stackrel{(c)}{=} \langle \xv, \Pv_q \xv\rangle,
    \label{eq:posterior_probability}
\end{align}
(a) $\{\yv_i\}_{i\in\mathcal{I}}$ correspond to disjoint events; (b) definition of $P(\yv_i|\xv)$; (c) $\Pv_q$ is a projection operator onto the subspace spanned by $\{\ev_i\}_{i\in\mathcal{I}\;\mathrm{st}\; \yv_i \implies \qv}$.

\subsection{Additional theoretical results}
If we fix the singular values of $\Xv$, the nuclear norm of $\Zv$ can easily be upper bounded. This stems from the triangle inequality
\begin{equation}
    \left\|\begin{bmatrix}
        \Yv \\
        \Xv
    \end{bmatrix}\right\|_\ast \leq \left\| \begin{bmatrix} \Yv \\ \mathbf{0}_{d\times n} \end{bmatrix}\right\|_\ast + \left\|\begin{bmatrix} \mathbf{0}_{c\times n} \\ \Xv \end{bmatrix}\right\|_\ast
\end{equation}
being tight if we pick $\Xv$ such that $\Yv\Xv^\top = \Yv^\top \Xv = 0$ (the nuclear norm is additive if the matrices have orthogonal row and column spaces \cite{recht2010guaranteed}). Hence if $\mathbf{\Sigma}_Y$ denotes the singular values of $\Yv$
\begin{equation}
    \max_{\Vv_X\in\mathrm{St}_d(\mathbb{R}^d)} \left\|\begin{bmatrix}
        \Yv \\
        \Uv_X \mathbf{\Sigma}_X \Vv_X^\top
    \end{bmatrix}\right\|_\ast = \mathrm{Tr}\left(\mathbf{\Sigma}_Y + \mathbf{\Sigma}_X\right)
\end{equation}

\begin{lemma}
    Let $\Xv\in\mathbb{R}^{d\times n}$, with $n>d$, be a matrix with unit $\ell_2$-norm columns. Then $\max_{\Xv} \; \|\Xv\|_\ast = d\sqrt{n/d}$ and $\mathrm{argmax}_{\Xv} \; \|\Xv\|_\ast=\sqrt{n/d} \;\Uv\Vv^\top$, for some $\Uv\in O(d)$ and $\Vv\in\mathrm{St}_d(\mathbb{R}^n)$.
\end{lemma}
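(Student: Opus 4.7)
The plan is to prove the claim by a direct Cauchy--Schwarz (or power mean) argument on the singular values. The unit $\ell_2$-norm column constraint fixes the Frobenius norm of $\Xv$, which in turn bounds the sum of singular values via an $\ell_1$--$\ell_2$ inequality.

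First I would note that since every column of $\Xv$ has unit norm,
\begin{equation}
    \|\Xv\|_F^2 \;=\; \sum_{j=1}^n \|\xv_j\|_2^2 \;=\; n.
\end{equation}
Writing $\|\Xv\|_F^2 = \sum_{i=1}^d \sigma_i^2$ for the singular values $\sigma_1\geq \dots\geq\sigma_d\geq 0$ of $\Xv$, and applying Cauchy--Schwarz to the vectors $(\sigma_1,\dots,\sigma_d)$ and $(1,\dots,1)\in\mathbb{R}^d$, I obtain
\begin{equation}
    \|\Xv\|_\ast \;=\; \sum_{i=1}^d \sigma_i \;\leq\; \sqrt{d}\,\sqrt{\sum_{i=1}^d \sigma_i^2} \;=\; \sqrt{d}\,\sqrt{n} \;=\; d\sqrt{n/d},
\end{equation}
which establishes the upper bound.

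Next, I would characterize equality. Cauchy--Schwarz is tight precisely when $(\sigma_1,\dots,\sigma_d)$ is proportional to $(1,\dots,1)$, i.e.\ all singular values coincide; combined with $\sum_i \sigma_i^2 = n$, this forces $\sigma_i = \sqrt{n/d}$ for every $i\in[d]$. Any $\Xv$ achieving the bound therefore has an SVD of the form $\Xv = \sqrt{n/d}\,\Uv\Vv^\top$ with $\Uv\in O(d)$ and $\Vv\in\mathrm{St}_d(\mathbb{R}^n)$. It only remains to verify that such matrices can indeed satisfy the unit-column constraint: writing $\vv_j^\top$ for the $j$-th row of $\Vv$, the $j$-th column of $\Xv$ has squared norm $(n/d)\|\vv_j\|_2^2$, so the constraint becomes $\|\vv_j\|_2^2 = d/n$ for all $j\in[n]$; this is consistent with $\Vv\in\mathrm{St}_d(\mathbb{R}^n)$ (which only forces $\sum_j\|\vv_j\|_2^2 = d$) and is realized by, for instance, any equiangular tight frame or a suitably rescaled Fourier/Hadamard submatrix.

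The only subtle point is the last step: the statement ``for some $\Uv\in O(d)$ and $\Vv\in\mathrm{St}_d(\mathbb{R}^n)$'' must be read together with the implicit unit-column constraint, which singles out the $\Vv$'s whose rows all have norm $\sqrt{d/n}$. I would mention this explicitly to avoid any ambiguity, but the core inequality and its equality case are immediate from Cauchy--Schwarz and do not require further machinery.
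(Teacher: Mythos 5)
Your proposal is correct and follows essentially the same route as the paper: both reduce the problem, via the SVD and the observation that unit-norm columns fix $\|\Xv\|_F^2=\sum_i\sigma_i^2=n$, to maximizing $\sum_{i=1}^d\sigma_i$ over the sphere $\sum_i\sigma_i^2=n$. The only difference is the final scalar step: the paper argues geometrically that the gradient $\mathbf{1}$ of the objective is normal to the feasible set at $\sigma_i=\sqrt{n/d}$ (a Lagrange-condition argument), whereas you invoke Cauchy--Schwarz, which is slightly cleaner and delivers the equality case $\sigma_1=\dots=\sigma_d$ for free. Your closing remark is also a genuine improvement on the paper's proof: the paper never checks that a matrix of the form $\sqrt{n/d}\,\Uv\Vv^\top$ can actually have unit-norm columns, which requires $\Vv$ to be a Stiefel matrix whose rows all have norm $\sqrt{d/n}$ (a tight-frame condition, satisfiable since $n>d$); you correctly identify and discharge this attainability gap.
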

\begin{proof}
    Let the SVD be $\Xv=\Uv\mathbf{\Sigma}\Vv^\top$. Then, $\Xv^\top \Xv= \Vv\mathbf{\Sigma}^2 \Vv^\top$ and $\mathrm{Tr}(\Xv^\top \Xv)=\sum_{i=1}^d\sigma_i^2$. From the normalization, it follows that $\mathrm{Tr}(\Xv^\top \Xv)=\sum_{i=1}^n x_i^\top x_i=n$. The maximization of $\|\Xv\|_\ast$ corresponds thus to the optimization problem
    \begin{align}
        &\max_{\{\sigma_i\}_{i=1}^d}\;\sum_{i=1}^d\sigma_i \nonumber \\ 
        &\;\;\mathrm{s.t.}\sum_{i=1}^d \sigma_i^2=n.
    \end{align}
    The gradient of the cost function is $\mathbf{1}\in\mathbb{R}^d$ and his orthogonal to the feasible set at $\sigma_i=\sqrt{\frac{n}{d}},\; i=1,\dots,d$. Thus, $\max_{\Xv}\;\|\Xv\|_\ast=d\sqrt{n/d}$.
\end{proof}

\begin{lemma}
    \label{lemma:spectral_and_svd}
    For any $\Xv\in\mathbb{R}^{n\times n}$ with singular values $\sigma_1 \geq \dots \geq \sigma_n > 0$, 
    \begin{equation}
        \mathrm{spectrum}\left(\begin{bmatrix}\mathbf{0}_n & \Xv^\top \\ \Xv & \mathbf{0}_n\end{bmatrix}\right) = \bigcup_{i=1}^n \{-\sigma_i, \sigma_i\}
    \end{equation}
    \begin{proof}
        Denoting the SVD of $\Xv$ by $\Uv\mathbf{\Sigma}\Vv^\top$, it suffices to check that 
        \begin{equation}
            \begin{bmatrix}\mathbf{0}_n & \Xv^\top \\ \Xv & \mathbf{0}_n\end{bmatrix} = \begin{bmatrix}\Vv/\sqrt{2} & -\Vv/\sqrt{2} \\ \Uv/\sqrt{2} & \Uv/\sqrt{2}\end{bmatrix}\begin{bmatrix}\mathbf{\Sigma} & \mathbf{0}_n \\ \mathbf{0}_n & -\mathbf{\Sigma}\end{bmatrix}\begin{bmatrix}\Vv^\top/\sqrt{2} & \Uv^\top/\sqrt{2} \\ -\Vv^\top/\sqrt{2} & \Uv^\top/\sqrt{2}\end{bmatrix}
        \end{equation}
        is the spectral decomposition of the symmetric matrix with $\Xv$ and $\Xv^\top$ in the off-diagonals.
    \end{proof}
\end{lemma}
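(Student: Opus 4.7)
The plan is to exhibit an explicit orthonormal eigendecomposition of the symmetric block matrix $M := \begin{bmatrix}\mathbf{0}_n & \Xv^\top \\ \Xv & \mathbf{0}_n\end{bmatrix}$ built directly from the SVD $\Xv = \Uv\mathbf{\Sigma}\Vv^\top$, with $\Uv,\Vv\in O(n)$ and diagonal entries $\sigma_1 \geq \cdots \geq \sigma_n > 0$. The motivating observation is that any eigenpair $M[a;\,b]^\top = \lambda[a;\,b]^\top$ must satisfy $\Xv^\top b = \lambda a$ and $\Xv a = \lambda b$, so $\Xv^\top\Xv a = \lambda^2 a$; hence $a$ is forced to be a right singular vector of $\Xv$ with $\lambda^2 = \sigma_i^2$, which in turn pins down $b$ up to sign. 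This leads naturally to the ansatz $\tfrac{1}{\sqrt{2}}[\vv_i;\,\pm\uv_i]^\top$ with eigenvalues $\pm\sigma_i$, where $\uv_i,\vv_i$ are the $i$-th columns of $\Uv,\Vv$.

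The first concrete step is to verify the ansatz by direct multiplication: applying $M$ to $\tfrac{1}{\sqrt{2}}[\vv_i;\,\pm\uv_i]^\top$ yields $\tfrac{1}{\sqrt{2}}[\pm\Xv^\top\uv_i;\,\Xv\vv_i]^\top = \tfrac{\sigma_i}{\sqrt{2}}[\pm\vv_i;\,\uv_i]^\top = \pm\sigma_i\cdot\tfrac{1}{\sqrt{2}}[\vv_i;\,\pm\uv_i]^\top$, using the SVD identities $\Xv\vv_i = \sigma_i\uv_i$ and $\Xv^\top\uv_i = \sigma_i\vv_i$. This produces $2n$ candidate eigenvectors with eigenvalues $\sigma_1,-\sigma_1,\ldots,\sigma_n,-\sigma_n$. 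The second step is to check orthonormality: unit norms follow from the factor $1/\sqrt{2}$ together with $\|\vv_i\|=\|\uv_i\|=1$; within the ``$+$'' or ``$-$'' family, inner products reduce to $\tfrac{1}{2}(\vv_i^\top\vv_j + \uv_i^\top\uv_j) = \delta_{ij}$, while across families one obtains $\tfrac{1}{2}(\vv_i^\top\vv_j - \uv_i^\top\uv_j)$, which vanishes by cancellation when $i=j$ and termwise when $i\neq j$. Since $M$ is $2n\times 2n$ symmetric and I have produced $2n$ orthonormal eigenvectors, the full spectrum is exactly $\{\pm\sigma_i : i=1,\ldots,n\}$, and assembling the eigenvectors column-wise recovers the factorization stated in the lemma.

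There is no substantial obstacle: once the ansatz is chosen, everything reduces to two short verifications. The only point that requires a moment's thought is motivating the ansatz in the first place. An alternative route would be to observe $M^2 = \mathrm{diag}(\Xv^\top\Xv,\,\Xv\Xv^\top)$, whose eigenvalues are $\sigma_i^2$ each with multiplicity $2$; however, extracting the correct signed multiplicities of $M$ from those of $M^2$ still requires extra bookkeeping (e.g., using $\mathrm{tr}\,M = 0$ together with the rank structure), so the direct construction above is both shorter and more informative.
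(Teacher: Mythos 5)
Your proof is correct and takes essentially the same route as the paper: you exhibit the orthonormal eigenvectors $\tfrac{1}{\sqrt{2}}[\vv_i;\,\pm\uv_i]$ with eigenvalues $\pm\sigma_i$, which assembled column-wise are exactly (up to harmless sign flips of individual eigenvectors) the block factorization the paper writes down. The only difference is presentational — you verify the eigenpair and orthonormality computations explicitly where the paper simply asserts "it suffices to check," which is a reasonable amount of extra detail.
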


\begin{lemma} Let $\Vv_Y, \Vv_X\in\mathrm{St}_c(\mathbb{R}^n)$, for $n>c$, and denote by $\{\sigma_i\}_{i=1}^{2c}$ the singular values of $\Zv \in \mathbb{R}^{2c \times n}$, defined as
\begin{equation}
    \Zv:= \begin{bmatrix}
        \Vv_Y^\top \\
        \Vv_X^\top
    \end{bmatrix}.
\end{equation}
Then, $\{\sigma_i\}_{i=1}^{2c} = \bigcup_{i=1}^{c}\left\{\sqrt{1 + \sigma_i(\Vv_Y^\top \Vv_X)}, \sqrt{1 - \sigma_i(\Vv_Y^\top \Vv_X)}\right\}$.
\begin{proof} Letting the full SVD of $\Zv$ be $\Uv\mathbf{\Sigma}\Vv^\top$, we have $\Zv\Zv^\top = \Uv\mathbf{\Sigma}^2 \Uv^\top$. Thus, its singular values are given by the square roots of the eigenvalues of $\Zv\Zv^\top$. From $\Vv_Y^\top\Vv_Y = \Vv_X^\top\Vv_X = \Iv_c$ we have
\begin{align}
    \sqrt{\lambda_i\left(\Zv\Zv^\top\right)} &= \sqrt{\lambda_i\left(\begin{bmatrix}
        \Iv_c & \Vv_Y^\top \Vv_X \\
        \Vv_X^\top \Vv_Y & \Iv_c
    \end{bmatrix}\right)} = \sqrt{1 + \lambda_i\left(\begin{bmatrix}
        \mathbf{0}_c & \Vv_Y^\top \Vv_X \\
        \Vv_X^\top \Vv_Y & \mathbf{0}_c
    \end{bmatrix}\right)}.
\end{align}
    Using Lemma \ref{lemma:spectral_and_svd}, this yields $\bigcup_{i=1}^{c}\left\{\sqrt{1 + \sigma_i(\Vv_Y^\top \Vv_X)}, \sqrt{1 - \sigma_i(\Vv_Y^\top\Vv_X)}\right\}$.
\end{proof}
\label{lemma:singular_values_stiefel}
\end{lemma}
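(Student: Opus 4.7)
The strategy is to compute the singular values of $\Zv$ as the square roots of the eigenvalues of $\Zv \Zv^\top$, which lives in $\mathbb{R}^{2c \times 2c}$ (a much smaller matrix than $\Zv^\top \Zv \in \mathbb{R}^{n \times n}$, and the nonzero spectrum is the same). The Stiefel condition $\Vv_Y^\top \Vv_Y = \Vv_X^\top \Vv_X = \Iv_c$ collapses the block-diagonal entries of $\Zv \Zv^\top$ to identities, giving
\begin{equation}
\Zv \Zv^\top = \begin{bmatrix} \Iv_c & \Vv_Y^\top \Vv_X \\ \Vv_X^\top \Vv_Y & \Iv_c \end{bmatrix} = \Iv_{2c} + \Mv, \qquad \Mv := \begin{bmatrix} \mathbf{0}_c & \Vv_Y^\top \Vv_X \\ \Vv_X^\top \Vv_Y & \mathbf{0}_c \end{bmatrix}.
\end{equation}

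The next step is to read off the spectrum of $\Mv$. This is exactly the situation handled by Lemma \ref{lemma:spectral_and_svd}: applied to the matrix $\Vv_X^\top \Vv_Y \in \mathbb{R}^{c \times c}$ (whose singular values coincide with those of $\Vv_Y^\top \Vv_X$ by transposition), the lemma gives $\mathrm{spectrum}(\Mv) = \bigcup_{i=1}^{c} \{-\sigma_i(\Vv_Y^\top \Vv_X), +\sigma_i(\Vv_Y^\top \Vv_X)\}$. Since $\Iv_{2c}$ commutes with $\Mv$ and merely shifts the spectrum, the eigenvalues of $\Zv \Zv^\top$ are $1 \pm \sigma_i(\Vv_Y^\top \Vv_X)$ for $i = 1,\dots,c$. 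Taking square roots yields the claimed singular values of $\Zv$.

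The only minor subtlety worth checking is that these square roots are well defined, i.e.\ that $1 - \sigma_i(\Vv_Y^\top \Vv_X) \geq 0$. This follows because $\Vv_Y^\top \Vv_X$ is the product of two matrices with orthonormal columns, so $\|\Vv_Y^\top \Vv_X\|_2 \leq \|\Vv_Y\|_2 \|\Vv_X\|_2 = 1$, and hence every $\sigma_i(\Vv_Y^\top \Vv_X) \in [0,1]$. (These are the cosines of the principal angles between the column spaces of $\Vv_Y$ and $\Vv_X$.) I do not anticipate a real obstacle here; the proof is essentially a one-line application of Lemma \ref{lemma:spectral_and_svd} once the block structure of $\Zv \Zv^\top$ is exposed. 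The main conceptual step is recognizing that the off-diagonal symmetric block matrix $\Mv$ is precisely the object whose spectrum is related to the singular values of its off-diagonal block.
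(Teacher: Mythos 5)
Your proposal follows essentially the same route as the paper's proof: form $\Zv\Zv^\top$, use the Stiefel conditions to reduce the diagonal blocks to $\Iv_c$, observe that the identity merely shifts the spectrum of the off-diagonal block matrix, and invoke Lemma \ref{lemma:spectral_and_svd} to read off that spectrum as $\pm\sigma_i(\Vv_Y^\top\Vv_X)$. Your added remark that $\sigma_i(\Vv_Y^\top\Vv_X)\in[0,1]$ (so the square roots are well defined) is a small but welcome detail the paper leaves implicit.
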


\begin{lemma}
    For $\Vv_Y \in\mathrm{St}_c(\mathbb{R}^n)$, with $n>c$,
    \begin{align}
        \min_{\Vv_X \in \mathrm{St}_c(\mathbb{R}^n)} \left\|\begin{bmatrix} \Vv_Y^\top \\ \Vv_X^\top\end{bmatrix}\right\|_\ast  = \left\|\begin{bmatrix} \Vv_Y^\top \\ \Vv_Y^\top\end{bmatrix}\right\|_\ast = \sqrt{2}c
    \end{align}
\end{lemma}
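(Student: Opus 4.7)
The plan is to reduce the minimization to a one-dimensional analysis of a concave function on $[0,1]$ by invoking Lemma \ref{lemma:singular_values_stiefel} to relate the singular values of the stacked matrix to those of the overlap $\Vv_Y^\top \Vv_X \in \mathbb{R}^{c\times c}$.

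First, by Lemma \ref{lemma:singular_values_stiefel}, if we denote the singular values of $\Vv_Y^\top \Vv_X$ by $\tau_1 \geq \dots \geq \tau_c \geq 0$, then
\begin{equation}
\left\|\begin{bmatrix} \Vv_Y^\top \\ \Vv_X^\top\end{bmatrix}\right\|_\ast = \sum_{i=1}^c \Bigl(\sqrt{1+\tau_i} + \sqrt{1-\tau_i}\Bigr).
\end{equation}
Since $\Vv_Y, \Vv_X \in \mathrm{St}_c(\mathbb{R}^n)$, the product $\Vv_Y^\top \Vv_X$ has spectral norm at most $1$, so each $\tau_i \in [0,1]$ and the expression above is well defined.

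Next, I would analyze $f(\tau) := \sqrt{1+\tau} + \sqrt{1-\tau}$ on $[0,1]$. Its derivative is $\tfrac{1}{2\sqrt{1+\tau}} - \tfrac{1}{2\sqrt{1-\tau}} \leq 0$, strictly negative on $(0,1)$, so $f$ is strictly decreasing on $[0,1]$, with $f(0)=2$ and $f(1)=\sqrt{2}$. Hence each term in the sum is minimized at $\tau_i = 1$, giving the lower bound
\begin{equation}
\left\|\begin{bmatrix} \Vv_Y^\top \\ \Vv_X^\top\end{bmatrix}\right\|_\ast \geq \sum_{i=1}^c \sqrt{2} = \sqrt{2}\,c.
\end{equation}

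Finally, I would exhibit a feasible $\Vv_X$ attaining equality. Taking $\Vv_X = \Vv_Y$ yields $\Vv_Y^\top \Vv_X = \Iv_c$, all of whose singular values equal $1$, so the bound is tight. More generally, any $\Vv_X = \Vv_Y \Rv$ with $\Rv \in O(c)$ is a minimizer. The only subtlety here is verifying that such a $\Vv_X$ is feasible in $\mathrm{St}_c(\mathbb{R}^n)$, which is immediate from $(\Vv_Y\Rv)^\top(\Vv_Y\Rv) = \Rv^\top \Vv_Y^\top \Vv_Y \Rv = \Iv_c$. This establishes the equality with $\sqrt{2}\,c$ and completes the proof.
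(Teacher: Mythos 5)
Your proposal is correct and follows essentially the same route as the paper: both invoke Lemma \ref{lemma:singular_values_stiefel} to reduce the nuclear norm to $\sum_{i=1}^c\bigl(\sqrt{1+\tau_i}+\sqrt{1-\tau_i}\bigr)$, bound each term below by $\sqrt{2}$ (you via monotonicity of $f$, the paper by direct inequality), and verify tightness at $\Vv_X=\Vv_Y$. No gaps.
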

\begin{proof}
    From Lemma \ref{lemma:singular_values_stiefel} we can write:
    \begin{equation}
        \left\|\begin{bmatrix} \Vv_Y^\top \\ \Vv_X^\top\end{bmatrix}\right\|_\ast = \sum_{i=1}^c \sqrt{1 + \sigma_i(\Vv_Y^\top \Vv_X)} + \sqrt{1 - \sigma_i(\Vv_Y^\top \Vv_X)}
    \end{equation}
    Note that $0 \leq \sigma_i(\Vv_Y^\top \Vv_X) \leq 1$ for $i\in [c]$, and
    \begin{equation}
        \sqrt{1 + \sigma_i} + \sqrt{1 - \sigma_i} \geq \sqrt{2},\quad \sigma_i \in [0,1].
    \end{equation}
    Therefore we have the lower bound
    \begin{equation}
        \left\|\begin{bmatrix} \Vv_Y^\top \\ \Vv_X^\top\end{bmatrix}\right\|_\ast \geq \sum_{i=1}^c \sqrt{2} = \sqrt{2}c.
    \end{equation}
    This bound is tight for $\Vv_X = \Vv_Y$, since $\sigma_i(\Vv_Y^\top \Vv_Y) = \sigma_i(\Iv_c) = 1, i\in [c]$.
\end{proof}
\clearpage
\newpage
\section{Experimental Details}
\label{sec:app_experiments}

\definecolor{bg}{gray}{0.95}
\DeclareTCBListing{mintedbox}{O{}m!O{}}{%
  breakable=true,
  listing engine=minted,
  listing only,
  minted language=#2,
  minted style=default,
  minted options={%
    linenos,
    gobble=0,
    breaklines=true,
    breakafter=,,
    fontsize=\small,
    numbersep=8pt,
    #1},
  boxsep=0pt,
  left skip=0pt,
  right skip=0pt,
  left=25pt,
  right=0pt,
  top=3pt,
  bottom=3pt,
  arc=5pt,
  leftrule=0pt,
  rightrule=0pt,
  bottomrule=2pt,
  toprule=2pt,
  colback=bg,
  colframe=blue!10,
  enhanced,
  overlay={%
    \begin{tcbclipinterior}
    \fill[blue!10!white] (frame.south west) rectangle ([xshift=20pt]frame.north west);
    \end{tcbclipinterior}},
  #3}

\subsection{PyTorch Implementation}
\label{sec:app_implementation}
Below we present an implementation of the loss function (\ref{eq:the_loss}) in PyTorch, called \texttt{NuclearLoss}, as well as the function for computing the minterm directions entitled \texttt{compute\_minterms\_vec}.
\begin{mintedbox}{python}
class NuclearLoss(nn.Module):
    def __init__(self, alpha : float, beta : float):
        super(Nuclear, self).__init__()
        self.a, self.b = alpha, beta

    def forward(self, x : torch.Tensor, y : torch.Tensor):
        z = torch.cat((y.permute(1,0),
                       x.permute(1,0)), dim=0)
        
        x_s = torch.linalg.svdvals(x)
        z_s = torch.linalg.svdvals(z)
        loss = z_s.sum() - self.a * x_s.sum() + self.b * x_s.max()**2
        return loss
\end{mintedbox}

\begin{mintedbox}{python}
def compute_minterms_vec(x : torch.Tensor,
                         y : torch.Tensor,
                         minterms : torch.Tensor):
    minterms_vec = []
    for minterm in minterms:
        mask = (y == minterm).all(dim=-1)
        u, _, _ = torch.linalg.svd(x[mask, :].T)
        minterms_vec.append(u[:,:1].T)
    return torch.cat(minterms_vec, dim=0)
\end{mintedbox}

The PyTorch code for the image augmentations used with CIFAR-10 and CIFAR-100 is provided below. 
\begin{mintedbox}{python}
class CIFARTransform():
    def __init__(self, split):
        mu  = (0.4914, 0.4822, 0.4465)
        std = (0.2023, 0.1994, 0.2010)

        if split.lower() == "train":
            self.t = Compose([RandomCrop(32, padding=4),
                              RandomHorizontalFlip(),
                              ToTensor(),
                              Normalize(mu,std)])
        else:
            self.t = Compose([ToTensor(),
                              Normalize(mu,std)])
\end{mintedbox}

The PyTorch code for the image augmentations used with CelebA is provided below.
\begin{mintedbox}{python}
class CELEBATransform():
    def __init__(self, split):
        if split.lower() == "train":
            self.t = Compose([Resize((40, 40)),
                              RandomHorizontalFlip(),
                              ToTensor()])
        else:
            self.t = Compose([Resize((40, 40)),
                              ToTensor()])
\end{mintedbox}

\subsection{Synthetic Experiments}
\label{sec:app_synthetic_experiments}
We present in Fig. \ref{fig:all_synthetic_experiments} additional comparisons between our approach, OLE, and MMCR for three binary label matrices $\Yv$. Figs. \ref{fig:Y1}, \ref{fig:Y2} and \ref{fig:Y0} show the Gram matrices of the labels $\Yv^\top \Yv$ while the remaining figures display the Gram matrices of the optimized representations. While MMCR and our approach show the same orthogonalizing behavior in the case of disjoint labels, results are drastically different when co-occurring labels are present. Our loss guarantees orthogonal minterms, thus generalizing the disjoint label setting.

\begin{figure}
\centering
\begin{subfigure}{0.24\linewidth}
    \includegraphics[width=\linewidth]{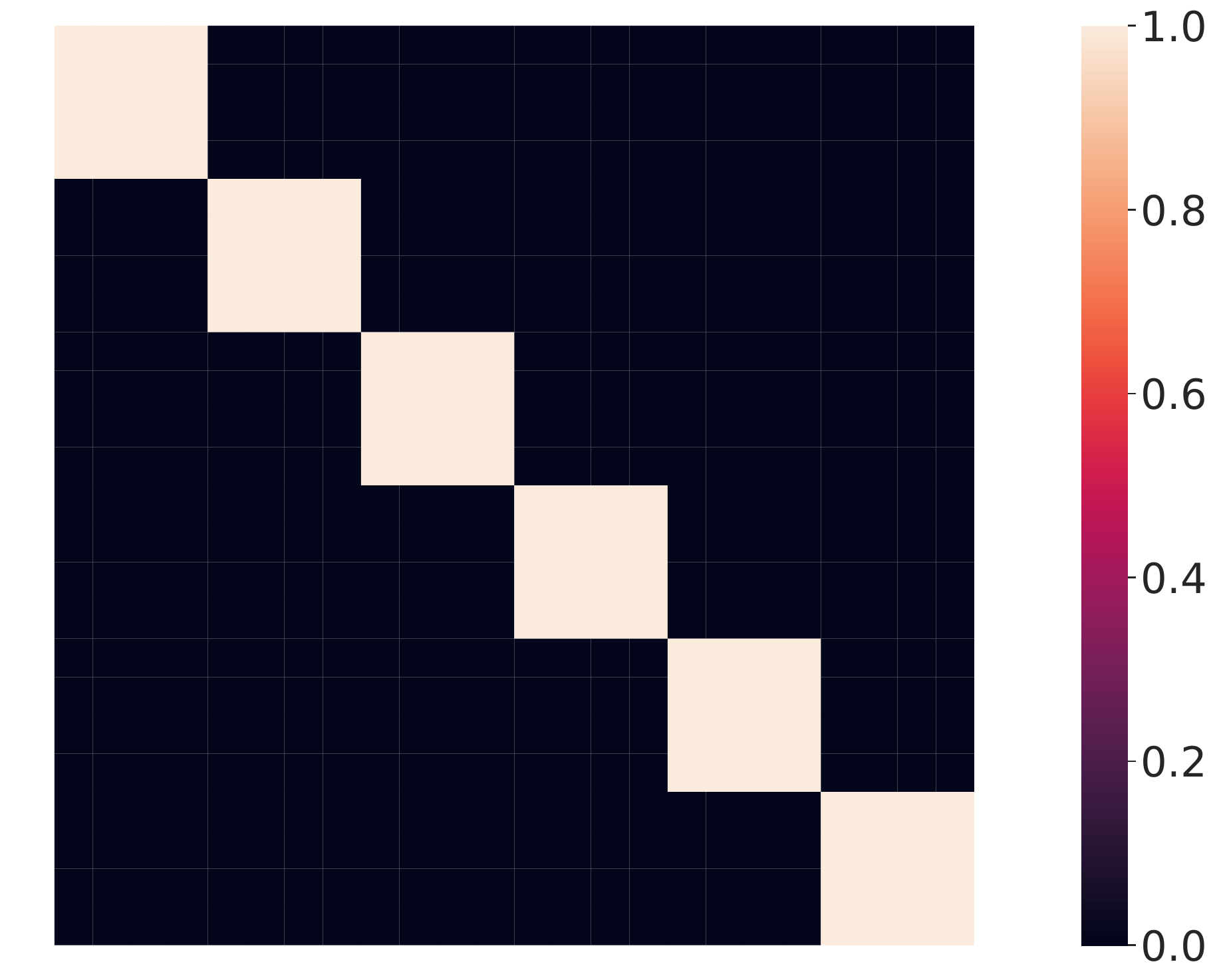}
    \caption{$\Yv^\top \Yv$}
    \label{fig:Y1}
\end{subfigure}
\begin{subfigure}{0.24\linewidth}
    \includegraphics[width=\linewidth]{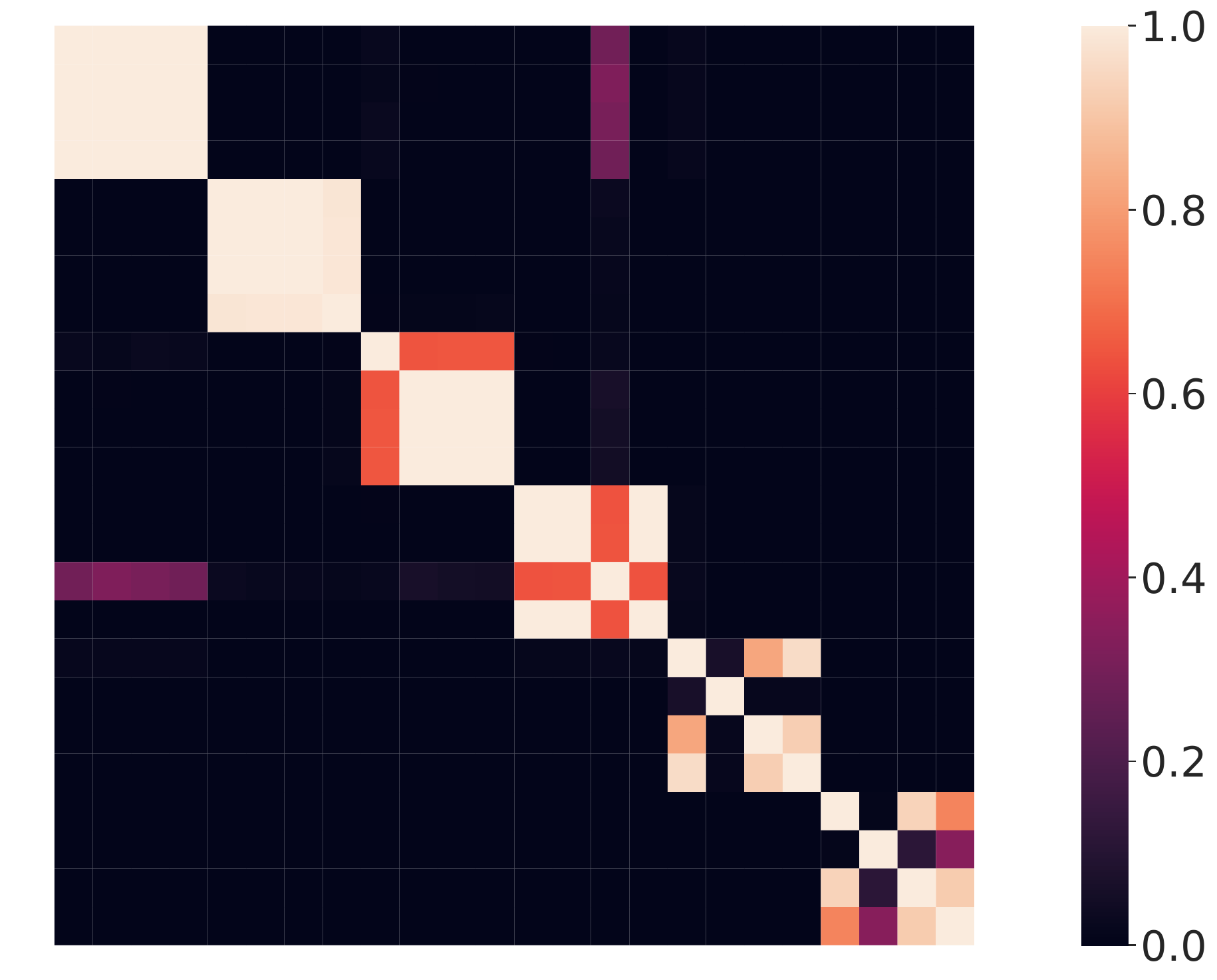}
    \caption{OLE}
\end{subfigure}
\begin{subfigure}{0.24\linewidth}
    \includegraphics[width=\linewidth]{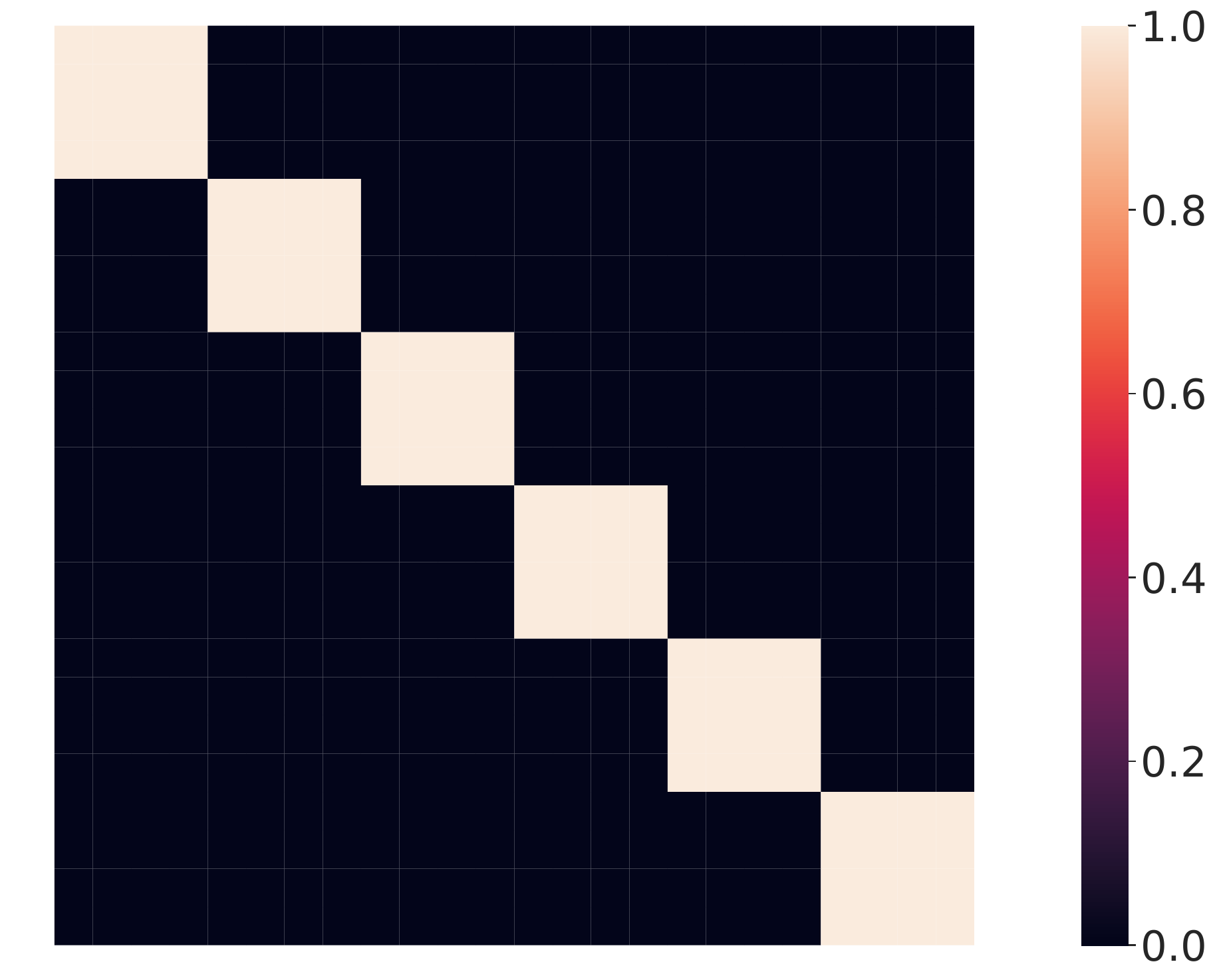}
    \caption{MMCR}
\end{subfigure}
\begin{subfigure}{0.24\linewidth}
    \includegraphics[width=\linewidth]{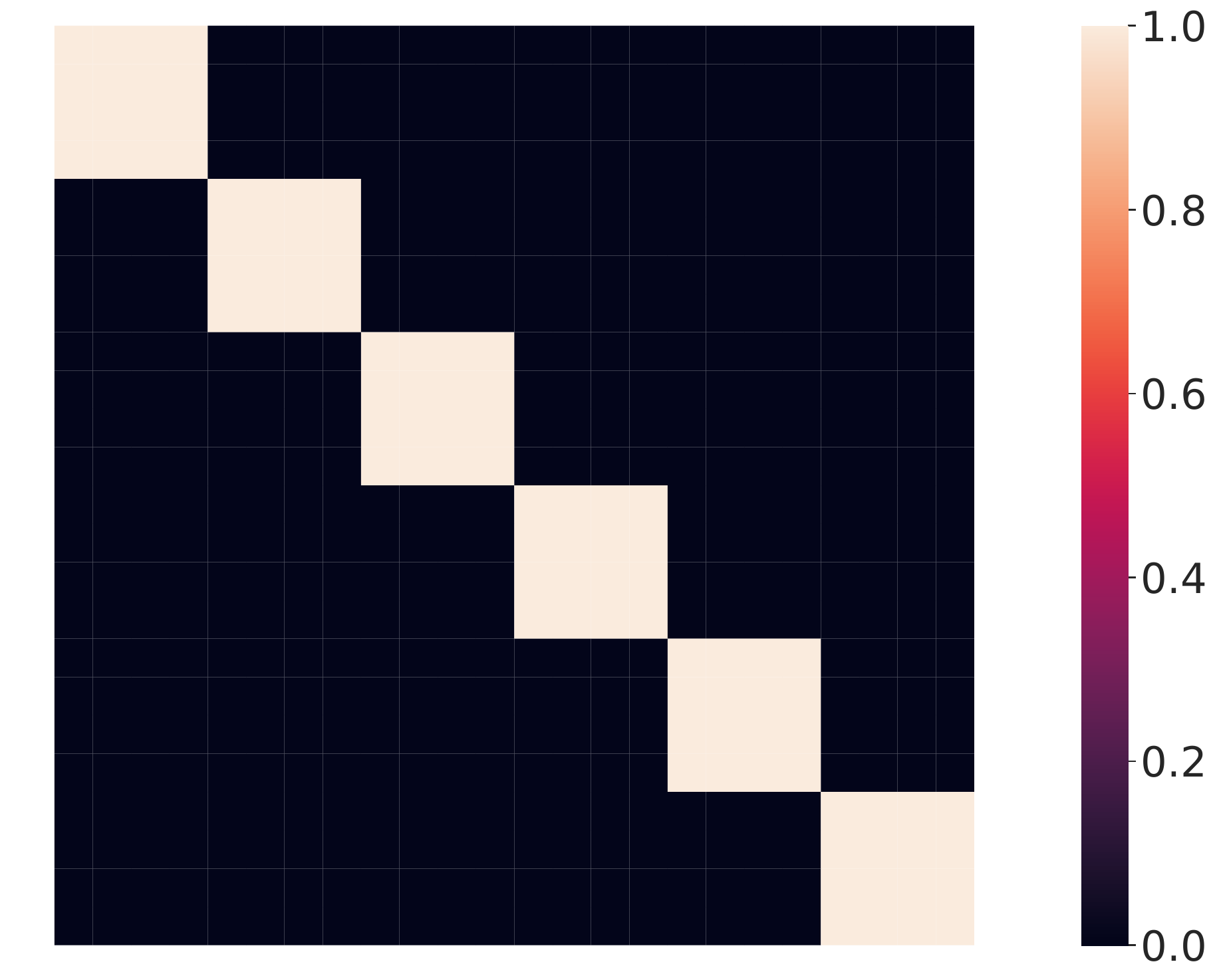}
    \caption{Ours}
\end{subfigure}
\medskip
\begin{subfigure}{0.24\linewidth}
    \includegraphics[width=\linewidth]{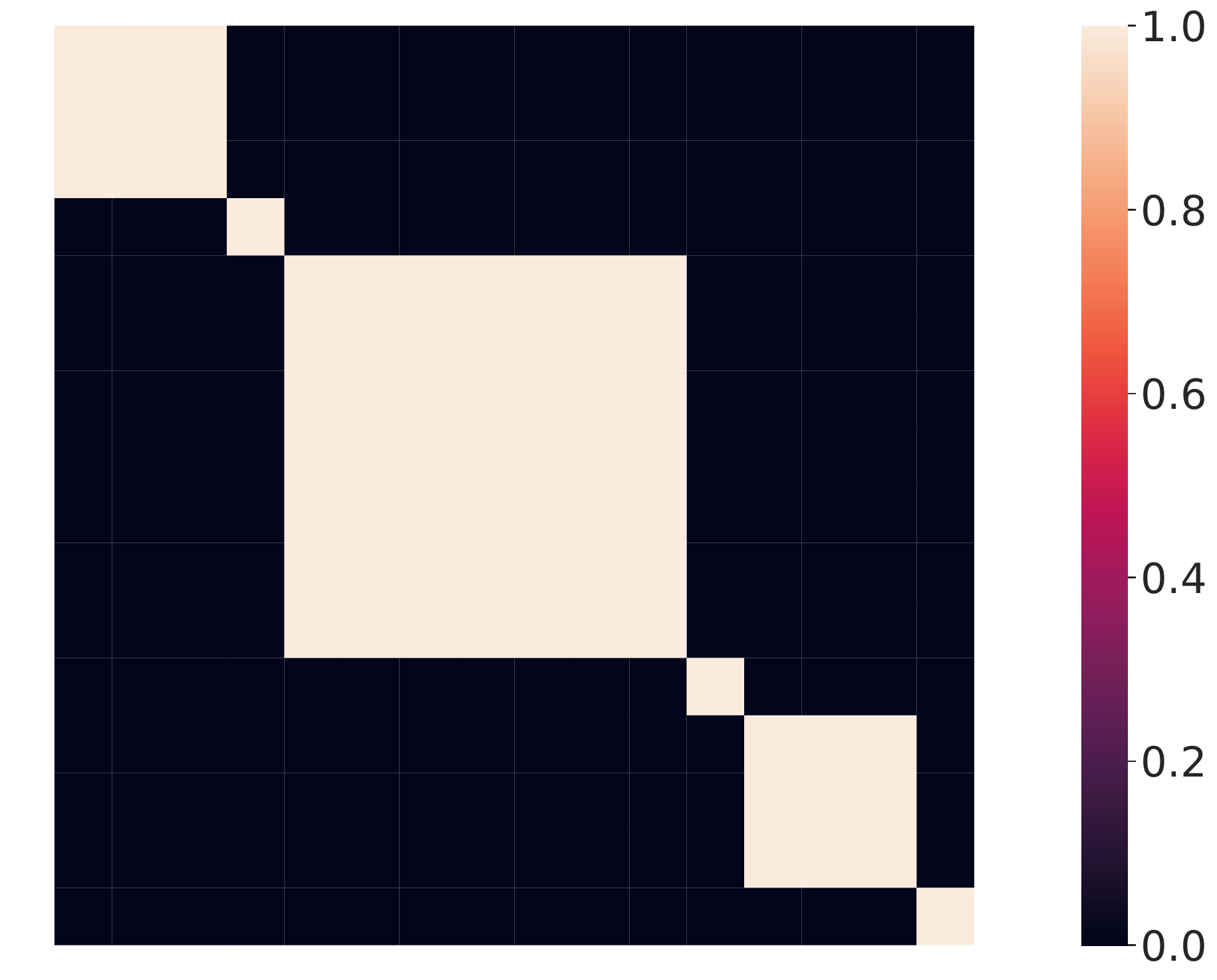}
    \caption{$\Yv^\top \Yv$}
     \label{fig:Y2}
\end{subfigure}
\begin{subfigure}{0.24\linewidth}
    \includegraphics[width=\linewidth]{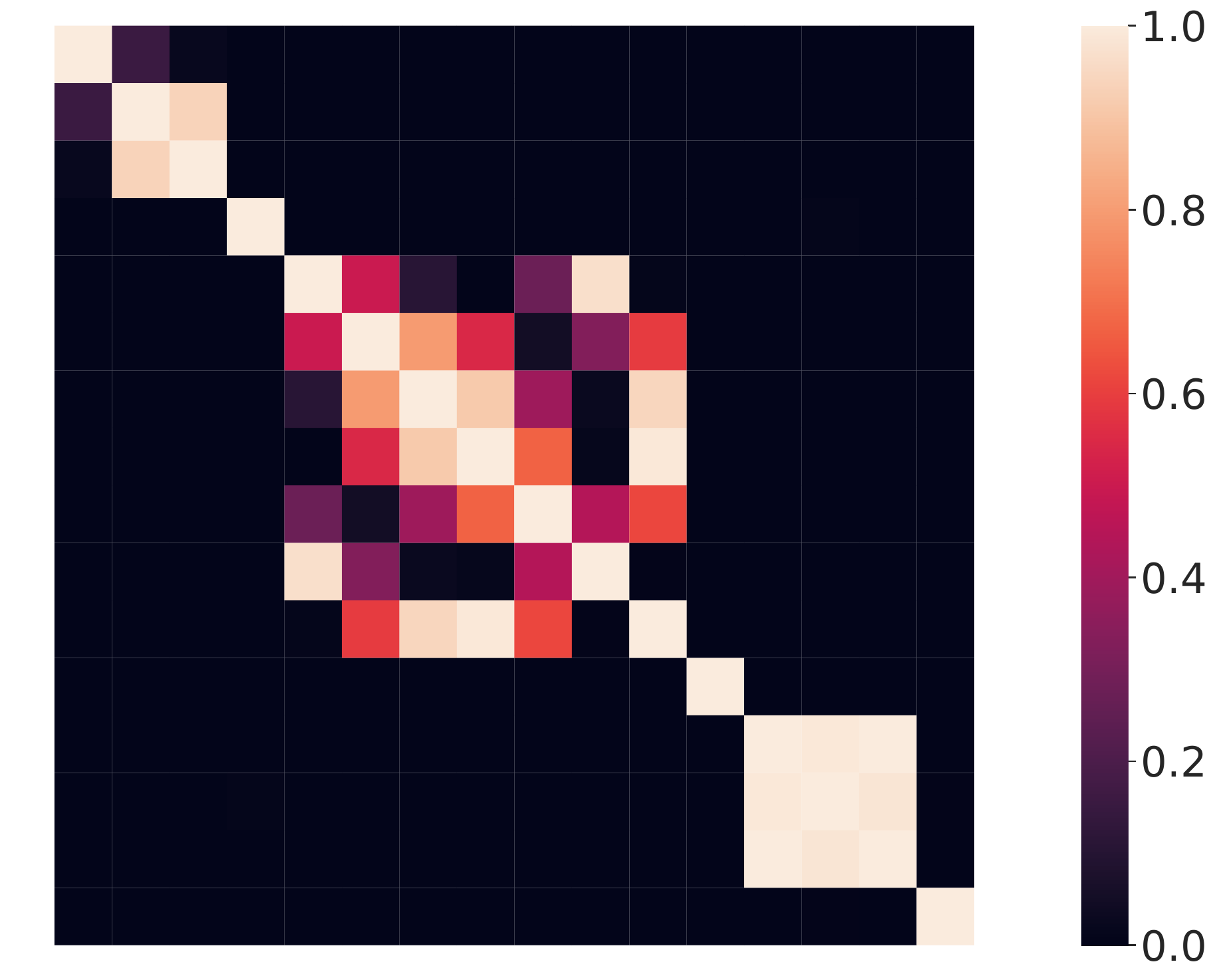}
    \caption{OLE}
\end{subfigure}
\begin{subfigure}{0.24\linewidth}
    \includegraphics[width=\linewidth]{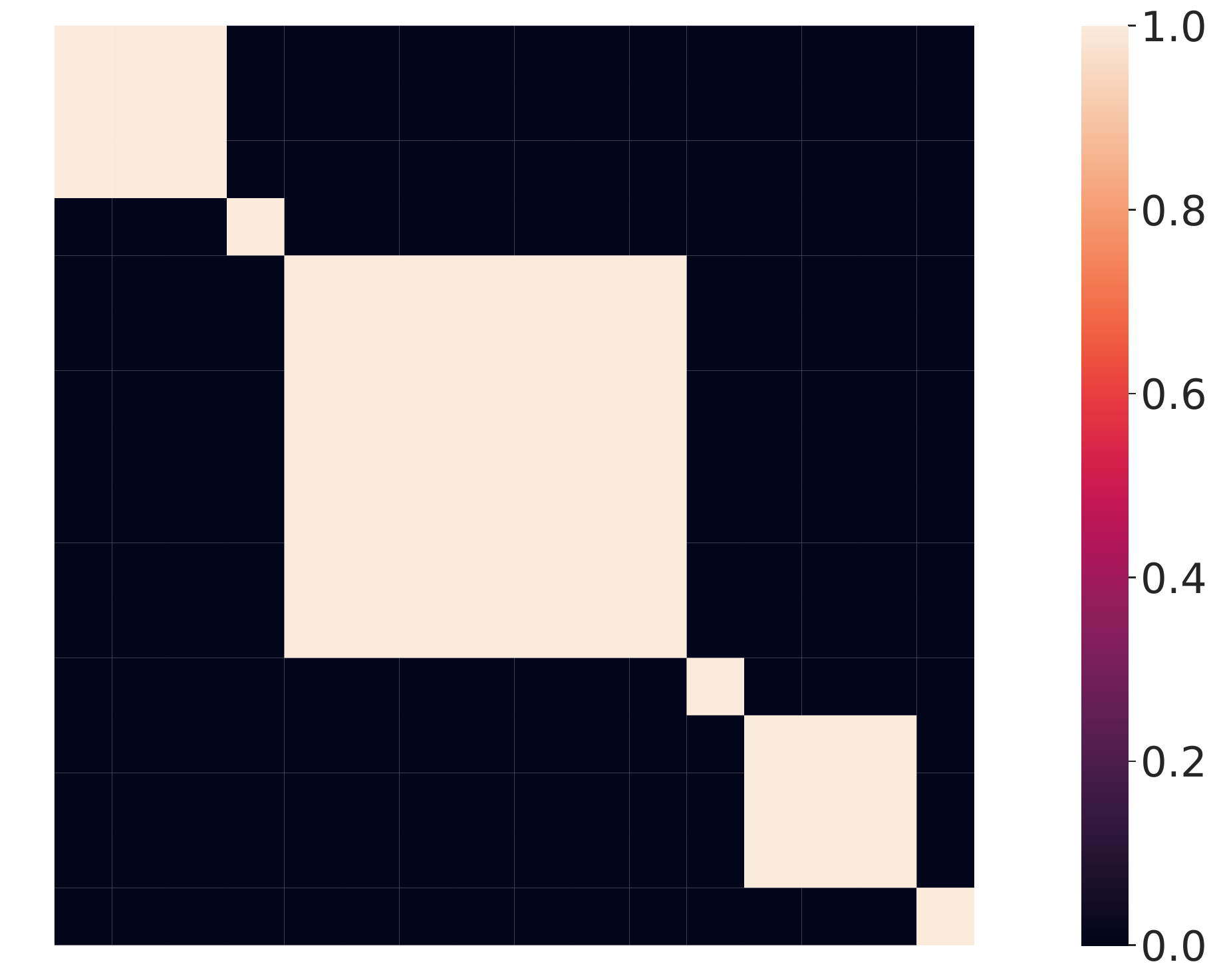}
    \caption{MMCR}
\end{subfigure}
\begin{subfigure}{0.24\linewidth}
    \includegraphics[width=\linewidth]{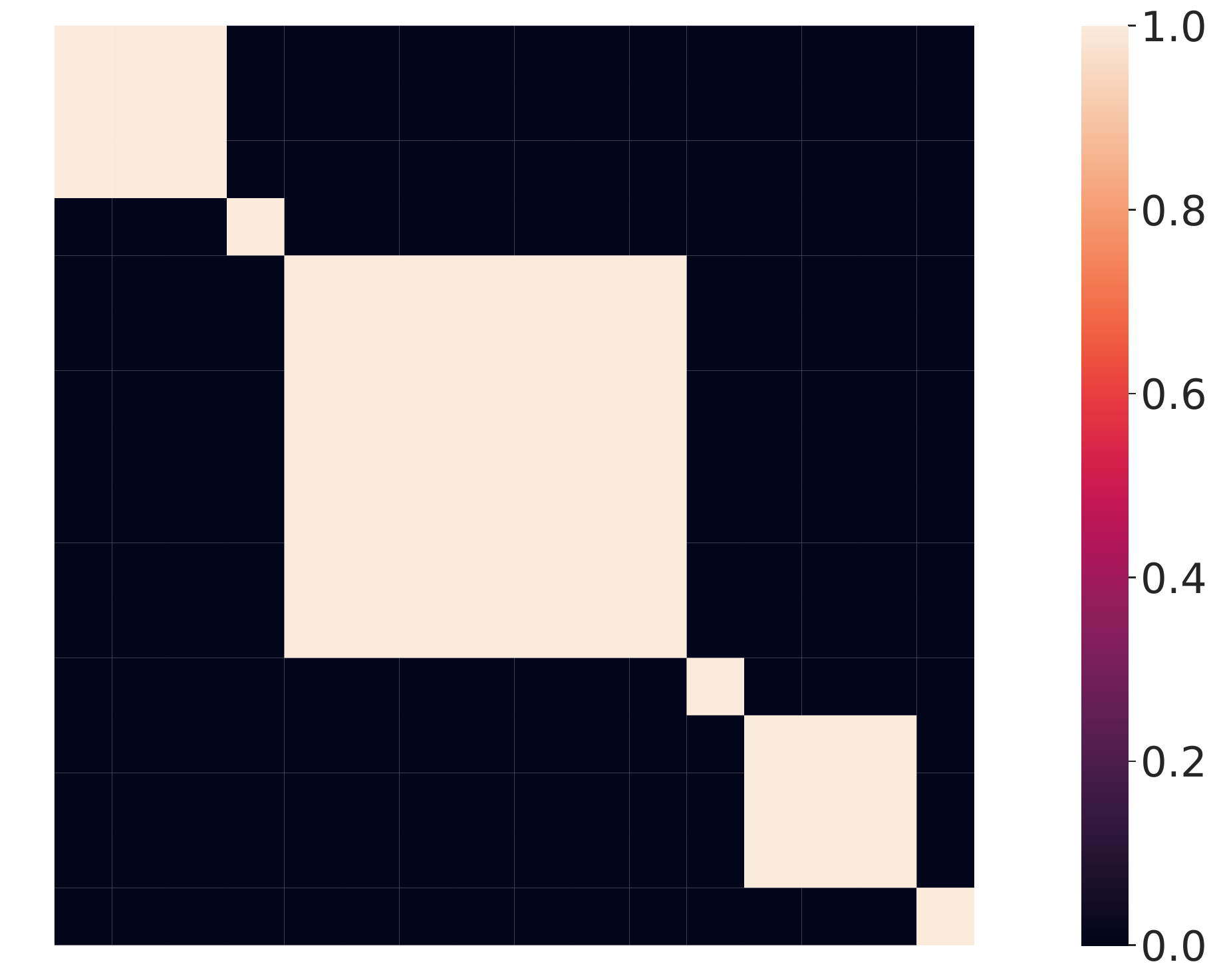}
    \caption{Ours}
\end{subfigure}
\medskip
\begin{subfigure}{0.24\linewidth}
    \includegraphics[width=\linewidth]{fig/Y0_6x15.pdf}
    \caption{$\Yv^\top \Yv$}
     \label{fig:Y0}
\end{subfigure}
\begin{subfigure}{0.24\linewidth}
    \includegraphics[width=\linewidth]{fig/Y0_ole_6x15.pdf}
    \caption{OLE}
\end{subfigure}
\begin{subfigure}{0.24\linewidth}
    \includegraphics[width=\linewidth]{fig/Y0_mcmr_6x15.pdf}
    \caption{MMCR}
\end{subfigure}
\begin{subfigure}{0.24\linewidth}
    \includegraphics[width=\linewidth]{fig/Y0_ours_6x15.pdf}
    \caption{Ours}
\end{subfigure}%
\caption{Gram matrix of $\Yv \in \{0,1\}^{6\times 15}$ and of the representations optimized with OLE, MCMR and our loss, for three synthetic experiments}
\label{fig:all_synthetic_experiments}
\end{figure}

\begin{figure}
\centering
\begin{subfigure}{0.32\linewidth}
    \includegraphics[width=\linewidth]{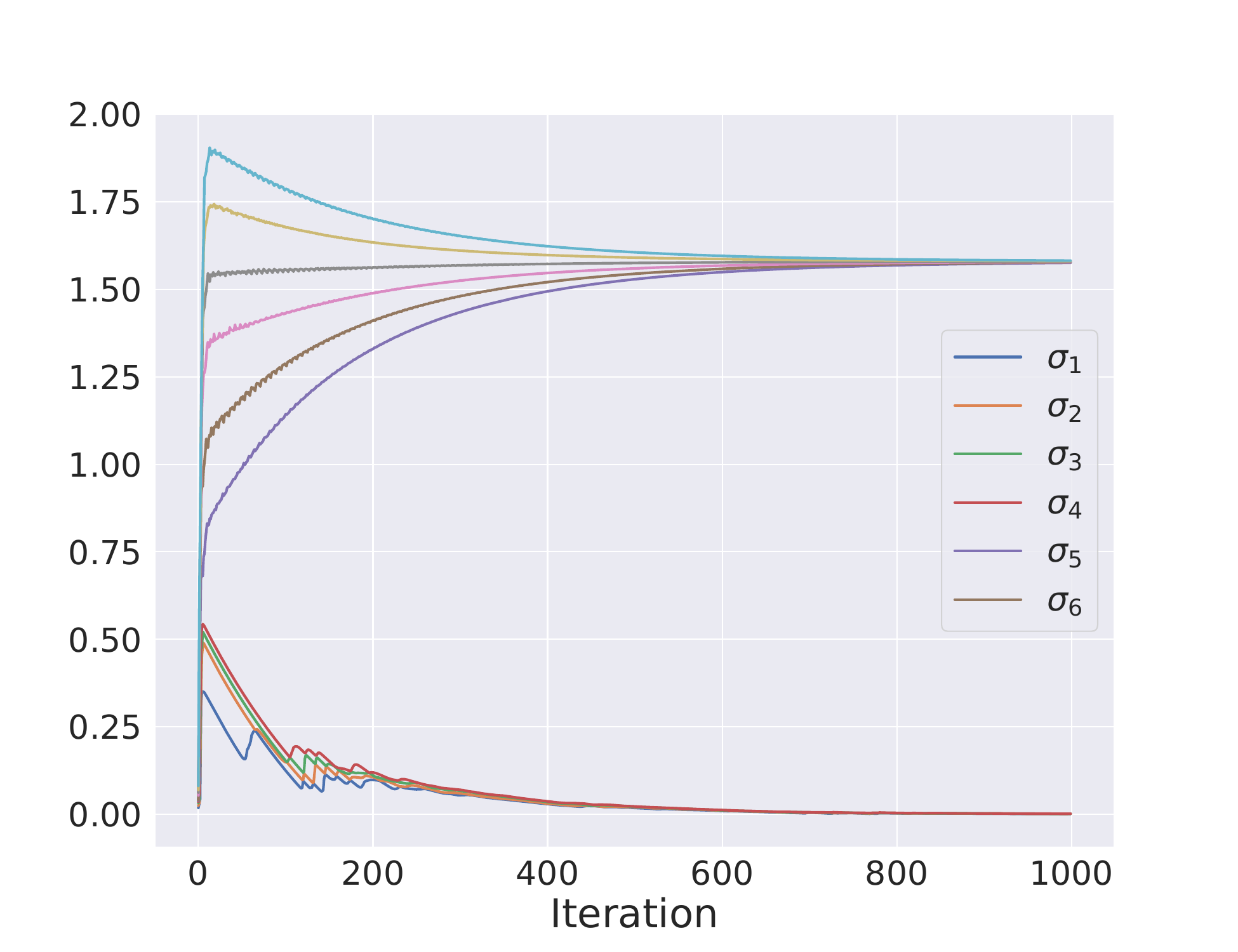}
    \caption{$\Yv$ from Fig. \ref{fig:Y1}}
\end{subfigure}
\begin{subfigure}{0.32\linewidth}
    \includegraphics[width=\linewidth]{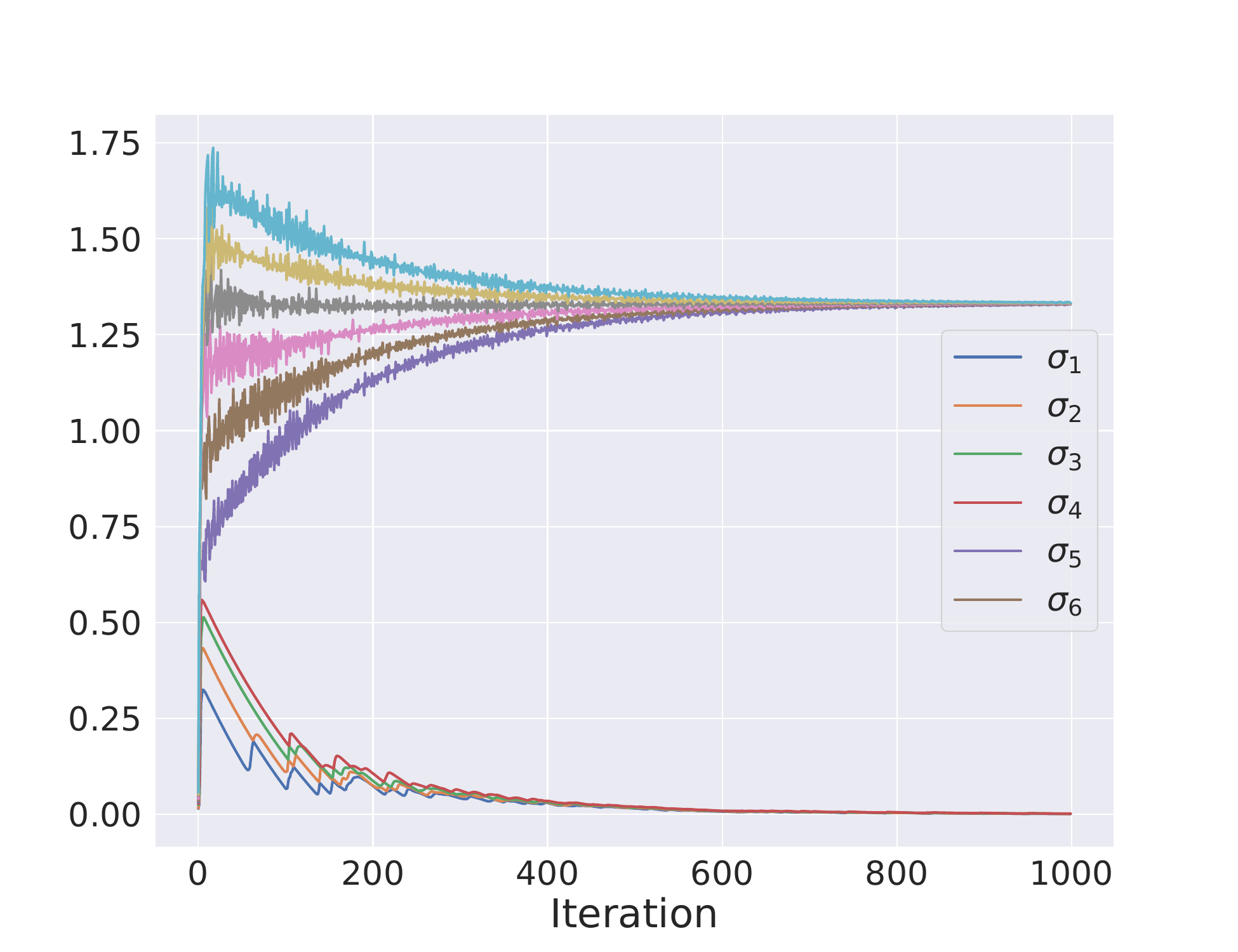}
    \caption{$\Yv$ from Fig. \ref{fig:Y2}}
\end{subfigure}
\begin{subfigure}{0.32\linewidth}
    \includegraphics[width=\linewidth]{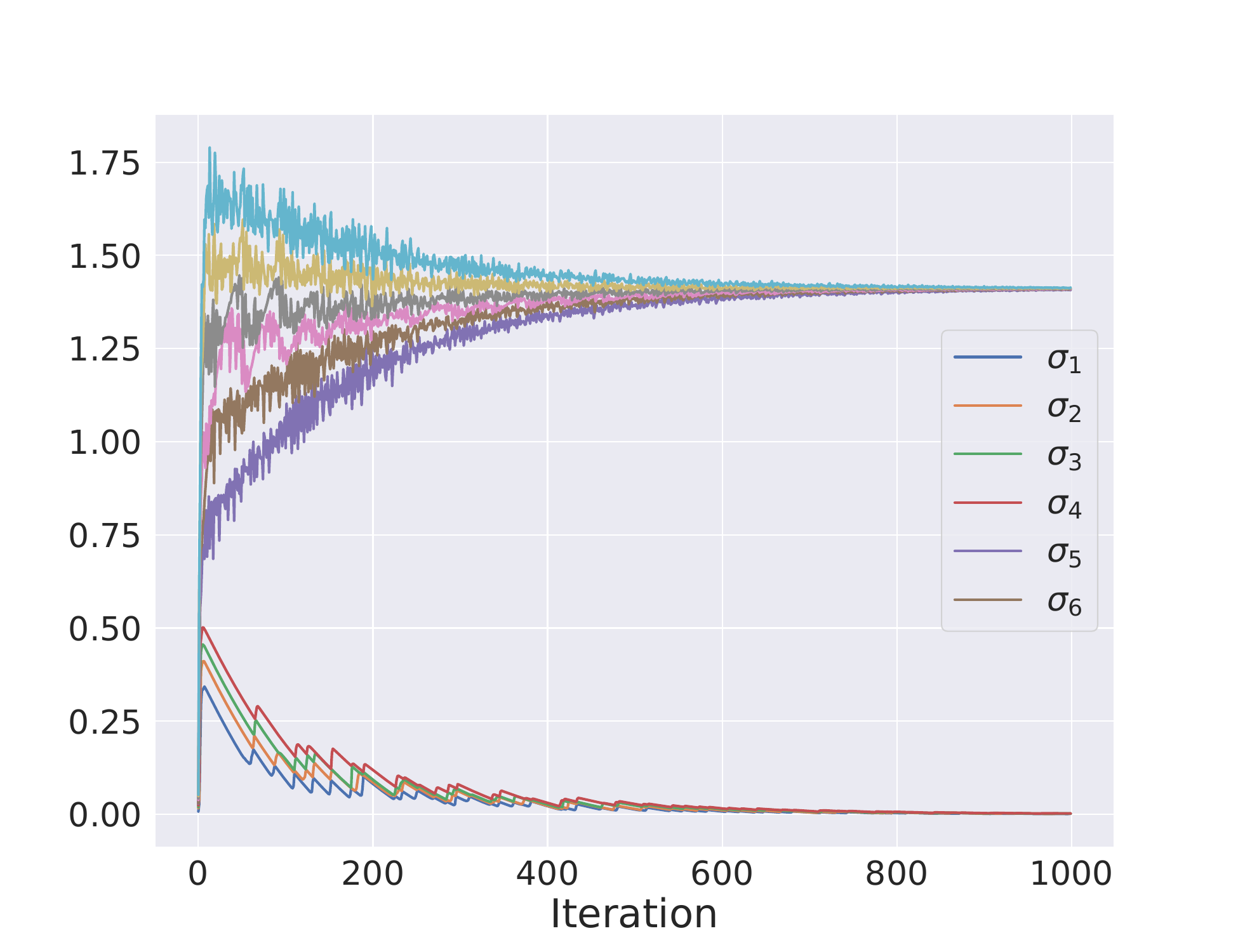}
    \caption{$\Yv$ from Fig. \ref{fig:Y0}}
\end{subfigure}%
\caption{Convergence of the singular values of $\Xv$ during minimization of (\ref{eq:the_loss}).}
\label{fig:synthetic_convergence_svdvals}
\end{figure}

\subsection{Classification Experiments}
\label{sec:app_classification}
\paragraph{MNIST} Dataset of 10,000 grayscale 32x32 images of 10 handwritten digits. 7,000 used for training and 3,000 for testing. No augmentations were used.

\paragraph{FashionMNIST} Dataset of 10,000 grayscale 32x32 images of 10 fashion categories. 7,000 used for training and 3,000 for testing. No augmentations were used.

\paragraph{CIFAR-10} Dataset of 60,000 color 32x32 images corresponding to 10 semantic classes. Training set composed of 50,000 images and test set corresponding to the remaining 10,000. The training set was augmented with random 32x32 crops with 4 pixel padding and random horizontal flips.

\paragraph{CIFAR-100} Dataset of 60,000 color 32x32 images corresponding to 100 semantic classes. Training set composed of 50,000 images and test set corresponding to the remaining 10,000.  The training set was augmented with random 32x32 crops with 4 pixel padding and random horizontal flips.

Training details for the cross-entropy baselines and for our model are provided in Tables \ref{tab:xent_train_details} and \ref{tab:our_train_details}, respectively. 

\begin{table}[]
    \centering
    \caption{Standard classification training with \textbf{crossentropy loss}.}
    \begin{tabular}{ccccccc}
    \toprule
         & \multicolumn{2}{c}{\textbf{MNIST}} & \multicolumn{2}{c}{\textbf{FashionMNIST}} & \textbf{CIFAR10} & \textbf{CIFAR100}\\
        \midrule
        \textbf{Backbone} & ConvNet & ResNet-18 & ConvNet & ResNet-18 & ResNet-18  & ResNet-18 \\
        \textbf{Batch size} & 512 & 512 & 512 & 512 & 512 & 512 \\
        \textbf{Epochs} & 20 & 20 & 50 & 50 & 200 & 200 \\
        \textbf{Optimizer} & Adam & Adam & Adam & Adam & SGD & SGD \\
        \textbf{LR} & 0.01 & 0.01 & 0.01 & 0.01 & 0.1 & 0.1 \\
        \textbf{Scheduler} & Step(1, 0.7) & Step(1, 0.7) & Step(4, 0.5) & Step(4, 0.5) & CosAnneal(200) &  CosAnneal(200)\\
        \midrule
        \textbf{Accuracy} & 0.992 & 0.995 & 0.930 & 0.935 & 0.929 & 0.705 \\
    \bottomrule
    \end{tabular}    \label{tab:xent_train_details}
\end{table}

\begin{table}[]
    \centering
    \caption{Standard classification training with \textbf{our loss}.}
    \begin{tabular}{ccccccc}
    \toprule
         & \multicolumn{2}{c}{\textbf{MNIST}} & \multicolumn{2}{c}{\textbf{FashionMNIST}} & \textbf{CIFAR10} & \textbf{CIFAR100}\\
        \midrule
        \textbf{Backbone} & ConvNet & ResNet-18 & ConvNet & ResNet-18 & ResNet-18  & ResNet-18 \\
        $\mathbf{\alpha}$ & 0.997 & 0.997 & 0.997 & 0.997 & 0.999 & 0.999 \\
        $\mathbf{\beta}$ & 0.05 & 0.05 & 0.05 & 0.05 & 0.01 & 0.01 \\
        \textbf{Batch size} & 512 & 512 & 512 & 512 & 512 & 512 \\
        \textbf{Epochs} & 50 & 50 & 60 & 60 & 200 & 200\\
        \textbf{Optimizer} & SGD & SGD & SGD & SGD & SGD & SGD \\
        \textbf{LR} & 0.0001 & 0.0001 & 0.0001 & 0.0001 & 0.0001 & 0.0001\\
        \textbf{Scheduler} & Step(20, 0.5) & Step(20, 0.5) & Step(30, 0.5) & Step(30, 0.5) & Step(100, 0.1) & Step(100, 0.7)\\
        \midrule
        \textbf{Accuracy} & 0.992 & 0.996 & 0.932 & 0.935 & 0.934 & 0.728 \\
    \bottomrule
    \end{tabular}
    \label{tab:our_train_details}
\end{table}

\subsection{Retrieval Experiments}
\label{sec:app_retrieval}
Table \ref{tab:queries_examples} contains additional examples of propositional queries and the corresponding natural language translations, used to query our model and CLIP, respectively.

\begin{table}[]
    \renewcommand{\arraystretch}{1.0}
    \small
    \centering
    \caption{Examples of propositional and natural language queries used in the Celeb-A retrieval experiment.}
    \begin{tabular}{l}
        $\texttt{Bald}$ \\ \textit{a bald person} \\ \rule{0pt}{3ex}    
        \texttt{Male} \\ \textit{a male person} \\  \rule{0pt}{3ex}    
        \texttt{Eyeglasses} \\ \textit{a person with eyeglasses} \\ \rule{0pt}{3ex}    
        $\neg\texttt{Bald}$ \\ \textit{a person that is not bald} \\ \rule{0pt}{3ex}    
        $\neg\texttt{Wearing Hat}$ \\ \textit{a person not wearing a hat} \\ \rule{0pt}{3ex}    
        $\texttt{Eyeglasses}\land\texttt{Male}$ \\ \textit{a male person with eyeglasses} \\ \rule{0pt}{3ex}    
        $\texttt{Bald}\land\texttt{Eyeglasses}$ \\ \textit{a bald person with eyeglasses} \\ \rule{0pt}{3ex}    
        $\texttt{Eyeglasses} \land\neg \texttt{Wearing Hat}$ \\ \textit{a person with eyeglasses and not wearing a hat} \\ \rule{0pt}{3ex}    
        $\neg\texttt{Wearing Necktie}\land\neg\texttt{Wearing Hat}$ \\ \textit{a person not wearing a necktie and not wearing a hat} \\ \rule{0pt}{3ex}    
        $\texttt{Bald} \land \texttt{Eyeglasses} \land\neg \texttt{Wearing Necktie}$ \\ \textit{a bald person with eyeglasses and not wearing a necktie} \\ \rule{0pt}{3ex}    
        $\neg\texttt{Wearing Necktie}\land\neg\texttt{Wearing Hat}\land\texttt{Male}$ \\ \textit{a male person not wearing a necktie and not wearing a hat} \\ \rule{0pt}{3ex}
        $\texttt{Eyeglasses}\land\texttt{Wearing Necktie}\land\neg\texttt{Wearing Hat}$ \\  \textit{a person with eyeglasses wearing a necktie and not wearing a hat} \\ \rule{0pt}{3ex}  
        $\neg\texttt{Bald}\land\neg\texttt{Eyeglasses}\land\neg\texttt{Wearing Necktie}$ \\ \textit{a person that is not bald, not wearing eyeglasses and not wearing a necktie} \\ \rule{0pt}{3ex}
        $\texttt{Eyeglasses}\land\neg\texttt{Wearing Necktie}\land\neg\texttt{Wearing Hat}\land\texttt{Male}$ \\ \textit{a male person with eyeglasses, not wearing a hat and not wearing a necktie} \\ \rule{0pt}{3ex}    
        $\texttt{Eyeglasses}\land\neg\texttt{Wearing Necktie}\land\neg\texttt{Wearing Hat}\land\neg\texttt{Male}$ \\ \textit{a person that is not male, with eyeglasses, not wearing a hat and not wearing a necktie} \\ \rule{0pt}{3ex}    
        $\neg\texttt{Bald}\land\texttt{Eyeglasses}\land\texttt{Wearing Necktie}\land\neg\texttt{Wearing Hat}\land\texttt{Male}$ \\ \textit{a male person that is not bald, with eyeglasses, wearing a necktie and not wearing a hat} \\ \rule{0pt}{3ex}    
        $\neg\texttt{Bald}\land\neg\texttt{Eyeglasses}\land\neg\texttt{Wearing Necktie}\land\neg\texttt{Wearing Hat}\land\texttt{Male}$ \\ \textit{a male person that is not bald, without eyeglasses, not wearing a hat and not wearing a necktie} \\ \rule{0pt}{3ex}    
        $\neg\texttt{Bald}\land\neg\texttt{Eyeglasses}\land\neg\texttt{Wearing Necktie}\land\neg\texttt{Wearing Hat}\land\neg\texttt{Male}$ \\ \textit{a person that is not male, not bald, without eyeglasses, not wearing a hat and not wearing a necktie}
    \end{tabular}
    \label{tab:queries_examples}
\end{table}

\end{document}